\documentclass[twoside,11pt]{article}

\usepackage[T1]{fontenc}
\usepackage[utf8]{inputenc}
\usepackage{amsmath,amssymb,amsthm}
\let\amsproof\proof
\let\endamsproof\endproof
\usepackage[abbrvbib,preprint]{jmlr2e}
\let\proof\amsproof
\let\endproof\endamsproof
\let\cite\citep
\usepackage[showonlyrefs]{mathtools}
\usepackage{enumitem}
\usepackage{booktabs}
\usepackage{float}
\usepackage{placeins}
\usepackage{microtype}
\usepackage{xcolor}
\usepackage{aliascnt}
\usepackage[nameinlink,capitalize]{cleveref}
\usepackage{lastpage}

\makeatletter
\@addtoreset{theorem}{section}
\makeatother

\newaliascnt{proposition}{theorem}
\newtheorem{proposition}[proposition]{Proposition}
\aliascntresetthe{proposition}

\newaliascnt{lemma}{theorem}
\newtheorem{lemma}[lemma]{Lemma}
\aliascntresetthe{lemma}

\newaliascnt{corollary}{theorem}
\newtheorem{corollary}[corollary]{Corollary}
\aliascntresetthe{corollary}
\newaliascnt{assumption}{theorem}

\aliascntresetthe{assumption}

\theoremstyle{definition}
\newtheorem{definition}{Definition}[section]
\newtheorem{remark}{Remark}[section]
\newcommand{\E}{\mathbb E}
\newcommand{\Pp}{\mathbb P}
\newcommand{\R}{\mathbb R}

\newcommand{\one}{\mathbf 1}

\newcommand{\op}{\mathrm{op}}
\newcommand{\HS}{\mathrm{HS}}
\newcommand{\tr}{\operatorname{tr}}

\newcommand{\diag}{\operatorname{diag}}
\newcommand{\Var}{\operatorname{Var}}

\newcommand{\range}{\operatorname{range}}

\newcommand{\argmin}{\operatorname*{arg\,min}}
\newcommand{\argmax}{\operatorname*{arg\,max}}
\newcommand{\Sd}{\mathbb S^{d-1}}

\newcommand{\cC}{\mathcal C}

\newcommand{\cH}{\mathcal H}

\newcommand{\cS}{\mathcal S}

\newcommand{\proofstep}[1]{\par\medskip\noindent\textbf{#1}\par\smallskip}

\jmlrheading{1}{2026}{1--\pageref{LastPage}}{7/26}{}{Manuscript}{Manuel
Fernandez V and Yizhe Zhu}
\ShortHeadings{Spectral Concentration and Recovery}{Fernandez V and Zhu}
\firstpageno{1}

\title{Spectral Concentration and Recovery in Sparse High-Dimensional Random Geometric Graphs}
\author{\name Manuel Fernandez V \email manuelf7@usc.edu\\
\addr Department of Mathematics\\
University of Southern California\\
3620 S. Vermont Ave., KAP 104\\
Los Angeles, CA 90089-2532, USA
\AND
\name Yizhe Zhu \email yizhezhu@usc.edu\\
\addr Department of Mathematics\\
University of Southern California\\
3620 S. Vermont Ave., KAP 104\\
Los Angeles, CA 90089-2532, USA\\
Corresponding author}
\hypersetup{
  hypertexnames=false,
  pdftitle={Spectral Concentration and Recovery in Sparse High-Dimensional Random Geometric Graphs},
  pdfauthor={Manuel Fernandez V and Yizhe Zhu},
  pdfkeywords={random geometric graphs, spectral concentration, latent-space recovery, decoupling, community detection}
}

\begin{document}
\maketitle

\begin{abstract}
We study sparse threshold random geometric graphs generated by
high-dimensional spherical or Gaussian latent vectors.  Although each edge
has marginal probability $p$, shared latent variables make the adjacency
entries dependent.  At the connectivity scale $np=\Omega(\log n)$, the
spherical adjacency matrix satisfies, with high probability,
        $\|A-\E A\|_{\op}
        =O\!\left(\sqrt{np\log n}+np\tau\right)$,
where $\tau$ is the cap threshold; an analogous estimate holds for Gaussian
vectors after controlling radial fluctuations.  This sharpens the spectral
bound in Liu, Mohanty, Schramm, and Yang
\citeyearpar{LiuMohantySchrammYangExpansion} under weaker assumptions.  These
estimates also strengthen the global-synchronization guarantee of
\citet{abdalla2024guarantees} for the homogeneous Kuramoto model.

The leading eigenspace also estimates the latent geometry.  When
$np\gg\log n$, the vector and relative Gram-matrix errors vanish for
$\log(1/p)\ll d\ll np\log(1/p)/\log n$ in the spherical model and for
$\log^2(1/p)\log n\ll d\ll np\log(1/p)/\log n$ in the Gaussian model,
improving the recovery conditions in \citet{LiSchramm}.  For the Gaussian
mixture block model, a polynomial-time semidefinite program gives, to our
knowledge, the first exact-recovery guarantee at the connectivity scale in a
moderate-separation regime.  At much larger separation, fixed edge density
creates isolated vertices and makes exact recovery impossible.  Our reusable
decoupling and matrix concentration framework avoids the trace-moment methods
used in previous work and applies broadly to random graph models with latent
vectors.
\end{abstract}

\begin{keywords}
random geometric graphs, spectral concentration, latent-space recovery,
decoupling, community detection
\end{keywords}

\section{Introduction}\label{sec:introduction}

Random geometric graphs model networks whose vertices carry latent spatial
or geometric features \cite{Penrose,LiuRaczLatent,DallChristensen}.  They
arise in wireless communication \cite{MaoAnderson}, biological networks
\cite{HighamRasajskiPrzulj}, and latent-space models for social graphs
\cite{HoffRafteryHandcock}.  In each case, edges encode proximity between
unobserved vertex features.  The resulting edge dependence distinguishes
geometric graphs from Erd\H{o}s--R\'enyi graphs and is also the source of the
geometric information one hopes to recover.

We study sparse high-dimensional threshold inner-product graphs.  In the
spherical model, $U_1,\ldots,U_n$ are independent and uniform on
$\mathbb S^{d-1}$, and vertices $i$ and $j$ are adjacent when
$\langle U_i,U_j\rangle\ge\tau$, where
\[
        \Pp\{\langle U_i,U_j\rangle\ge\tau\}=p.
\]
Thus the graph has the same edge density as $G(n,p)$ but not the same joint
law: distinct spherical edge indicators are pairwise independent, while
triangles and longer cycles remain dependent.  We also consider Gaussian
latent vectors, for which fluctuations of a shared endpoint's norm create
additional dependence even among incident edges.

This dependence plays two complementary roles.  It creates clustering through
triangle closing \cite{DallChristensen}, and it allows the graph to retain
information about the latent vectors, as required in statistical network
models based on high-dimensional similarity \cite{RaczBubeck}.  We ask how
large the nontrivial adjacency spectrum can be and when that spectrum contains
enough information to recover the latent geometry.  We work down to the
connectivity scale $np\asymp\log n$, treat both spherical and Gaussian latent
vectors, and then study exact label recovery in the Gaussian mixture block
model.

A further goal is methodological: we seek a proof
strategy that can be reused across random graph models with latent vectors.
Our framework separates model-specific estimates for the population kernel
from a model-independent sampling argument based on Hoeffding decomposition,
decoupling, and matrix concentration.  Once the population operator and
kernel sections are controlled, the same argument yields nonasymptotic
spectral bounds that can then be combined with perturbation methods for
recovery.

\paragraph{Testing and recovering high-dimensional geometry}

The statistical questions begin with detection: can one decide from an
unlabeled graph whether latent geometry is present?  Devroye et
al.~\citeyearpar{DevroyeGyorgyLugosiUdina} proved asymptotic indistinguishability
from $G(n,p)$ when $n$ is fixed and $d\to\infty$.  Bubeck et
al.~\citeyearpar{BubeckDingEldanRacz} identified the sharp dense transition and
initiated the sparse problem, while Liu et
al.~\citeyearpar{LiuMohantySchrammYangTesting} proved a complementary sparse
indistinguishability theorem.  More recently, Du et
al.~\citeyearpar{DuMaoSunWuXu} identified the conjectured threshold
$d\asymp(nh(p))^3$ in the regime $d>(1+\varepsilon)n$.  Related work treats
anisotropic geometries, low-degree tests, sandwiching couplings, and other
geometric models
\cite{BrennanBreslerHuang,BaguleyGobelPappikSchiller,
BangachevBreslerSandwiching,BangachevBresler}.

Recovery asks a finer question: given one graph, can one estimate the latent
Gram matrix?  The rate-distortion lower bound in \citet{MaoZhang} rules out
recovery beyond $d\asymp nh(p)$, where $h(p)$ is the binary entropy and
$nh(p)\asymp np(1+\log(1/p))$ uniformly for $0<p\le p_0$.  For Gaussian
vectors, Li and Schramm~\citeyearpar[Theorem~1.4]{LiSchramm} obtained relative
inner-product recovery averaged over all vertex pairs using a trace-moment
argument.

These considerations lead to the two questions that organize the paper:
\begin{center}
 \textit{How large can the nontrivial adjacency spectrum be, and when does it
 retain enough information to recover the latent geometry?}
\end{center}

The recovery theorems in
\cref{thm:spherical-inner-product,thm:raw-gaussian-inner-product} control
the vector approximation error and the Gram-matrix approximation error.
These errors vanish when
\[
\begin{aligned}
\text{spherical vectors:}\quad
&np\gg\log n,
&&\log(1/p)\ll d\ll \frac{np\log(1/p)}{\log n},\\
\text{Gaussian vectors:}\quad
&np\gg\log n,
&&\log^2(1/p)\log n\ll d\ll
  \frac{np\log(1/p)}{\log n}.
\end{aligned}
\]
The spherical window is nonempty as soon as $np\gg\log n$.  In the Gaussian
model, the same conclusions hold in a wider sparsity and dimension range than
in \citet[Theorem~1.4]{LiSchramm}; the precise comparison appears after
\cref{thm:raw-gaussian-inner-product}.

\paragraph{Spectral concentration and expansion}

Recovery begins with control of the nontrivial adjacency spectrum.  For an
Erd\H{o}s--R\'enyi graph with $np\gtrsim\log n$, the centered adjacency matrix
has operator norm of order $\sqrt{np}$ under standard conditions
\cite{FeigeOfek,LeLevinaVershynin}.  The geometric
analogue must account for dependent cycles as well as the nonconstant
eigenvalues of the population kernel.

We recall the part of the Hoeffding decomposition \cite{Hoeffding} used below.
If $k(x,y)$ is
a centered symmetric kernel and $Y$ is an independent latent vector, its
\emph{first Hoeffding projection} is
\[
        a(x)=\E_Y k(x,Y).
\]
Thus
\[
        k(x,y)=a(x)+a(y)+r(x,y),
        \qquad \E_Y r(x,Y)=0,
\]
and $a(x)$ is precisely the part of the kernel fluctuation explained by one
endpoint alone.  For the spherical threshold kernel, rotational invariance
gives $a\equiv0$.  For the centered Gaussian threshold kernel
$k(x,y)=\one_{\{\langle x,y\rangle/\sqrt d\ge u\}}-p$, where $u$ is chosen to
give marginal edge probability $p$, the conditional law depends on $x$ only
through $\|x\|$.  Thus $a(x)$ is radial and generally nonzero.

For the spherical threshold graph, \cref{thm:main-concentration}
gives, with polynomially high probability,
\[
        \|A-\E A\|_{\op}
        =
        O\left(\sqrt{np\log n}+np\tau\right)
        \qquad\text{when }np\ge C\log n.
\]
The first term is the sparse graph fluctuation; the second is the leading
nonconstant eigenvalue scale of the spherical cap operator.  In the Gaussian
model, double-centering removes the radial first Hoeffding projection and
yields the analogous estimate in \cref{thm:raw-gaussian-concentration}.  Both
bounds hold down to the connectivity scale and impose no additional
$\sqrt{\log n}$ loss on the leading geometric term.

Up to constants, the natural conjectural scale for the nontrivial adjacency
spectrum is
\[
        \max\left\{\sqrt{np},\,n\Lambda\right\},
\]
where $\Lambda$ is the largest absolute value of a nonconstant eigenvalue of
the population kernel.  For spherical caps, $n\Lambda\asymp np\tau$.  Our
bound matches the geometric contribution and loses only a factor
$\sqrt{\log n}$ in the sparse term.  We conjecture that this remaining loss
can be removed.  Standard Kahn--Szemer\'edi arguments \cite{FeigeOfek} and
nonbacktracking matrix methods \cite{benaych2020spectral} rely on independent
edge exposure and do not readily accommodate the geometric dependence around
cycles.

The closest prior sparse spectral result is the expansion theorem of
\citet{LiuMohantySchrammYangExpansion}, whose proof combines the
trace method, random restriction, and random-walk mixing.  On the
unnormalized adjacency scale,
its sparse term is $\sqrt{np}\log^4 n$, and the result requires the additional
hypotheses of that method.  Our theorem instead controls the centered
adjacency matrix directly under $np\ge C\log n$, which is the form needed for
the embedding analysis.  A direct comparison of the hypotheses and bounds is
given after \cref{thm:main-concentration}.

\paragraph{Sparse kernel random matrices}

From a random-matrix viewpoint, our adjacency matrices are sparse kernel
matrices of the form
\[
        A_{ij}=f(\langle X_i,X_j\rangle)\one_{\{i\ne j\}}.
\]
Dense inner-product kernel matrices have been studied through linearization
and limiting spectral laws
\cite{BordenaveEuclidean,ElKaroui,ChengSinger,DoVu,FanMontanari}, as well as
through concentration and proportional-dimensional asymptotics
\cite{LouartLiaoCouillet,AminiRazaee,MeiMisiakiewiczMontanari,LuYau,
DubovaLuMcKennaYau,PanditWangZhu,WangZhu}.  Recent work on smooth geometric
kernels obtains sharp thresholds for detection and latent-vector estimation
\cite{MaoWuXuSmooth}.  Sparse threshold graphs fall outside that setting: the
kernel is discontinuous, its threshold varies with $n,d,p$, and recovery
requires an extremal high-probability estimate rather than a limiting law.

Results for the empirical eigenvalue distribution provide a complementary
bulk perspective.  A semicircle law for sparse high-dimensional spherical
graphs is proved in \citet{CaoZhu}, while fixed-dimensional geometric graph
spectra are studied in
\cite{AdhikariAdlerBobrowskiRosenthal,HamidoucheCottatellucciAvrachenkov}.
Our objective is different: we need a nonasymptotic estimate for the extremal
nontrivial spectrum that is strong enough for recovery.

We obtain this estimate using the decoupling strategy of
\citet{KaushikRombergMuthukumar}, followed by matrix Chernoff applied to a
positive semidefinite Gram matrix.  In contrast with a direct matrix Bernstein
argument, this preserves the geometric covariance scale without an additional
$\sqrt{\log n}$ factor.  The model-independent concentration
step is developed in \cref{sec:matrix-tools}.
Sharp universality and matrix-chaos inequalities provide powerful alternatives
\cite{BrailovskayaVanHandelSharp,BandeiraLuccaNizicNikolacVanHandelChaos}, but
their direct application to discontinuous sparse kernels still requires
additional approximation.

The same distinction between random fluctuation and geometric signal becomes
especially consequential when the latent vectors also encode communities.

\paragraph{Geometric community detection}

In community detection, latent geometry changes both the signal and the
noise.  Related models include Euclidean random graphs with labels, geometric
block models, and geometric perturbations of stochastic block models
\cite{AbbeBaccelliSankararaman,GalhotraMazumdarPalSaha,PechePerchet,
AvrachenkovKumarLeskela}; exact recovery and information flow in geometric
community models are studied in
\cite{EldanMikulincerPieters,GaudioNiuWei,GaudioGuanNiuWei}.  Unlike in the
classical stochastic block model \cite{AbbeSBMSurvey}, edges remain dependent
even after conditioning on all labels.

We consider the Gaussian mixture block model introduced by Li and
Schramm~\citeyearpar{LiSchramm}.  Each vertex has a hidden label
$\xi_i\in\{\pm1\}$ and latent vector
\[
        Z_i=\frac{G_i}{\sqrt d}+\mu\xi_i e_1.
\]
Edges are obtained by thresholding inner products to have marginal density
$p$.  Li and Schramm~\citeyearpar[Theorem~1.6]{LiSchramm} established
almost exact recovery, meaning that the fraction of mislabeled vertices
vanishes, in a moderate-separation regime.  We use a semidefinite program and
a signed-Laplacian dual certificate, following the classical SDP strategy for
the stochastic block model \cite{HajekWuXuSDP}, to recover every label at
$np\ge C\log n$.  This gives, to our knowledge, the first exact-recovery
guarantee for the model.  At a larger separation scale, preserving the
marginal edge density instead creates isolated vertices in both label classes
and makes exact recovery information-theoretically impossible.  The resulting
nonmonotone effect of separation is made precise in
\cref{thm:gmbm-exact,thm:gmbm-large-separation}.

\paragraph{Spectral gap and synchronization}
Beyond recovery, spectral expansion also governs synchronization
on networks.  Abdalla et
al.~\citeyearpar[Theorem~4]{abdalla2024guarantees} use an adjacency spectral bound to
prove global synchronization of the homogeneous Kuramoto model on spherical
threshold graphs.  A subsequent deterministic criterion
\citet[Theorem~1.10]{AbdallaEtAlExpander} reduces global synchronization to
centered adjacency and Laplacian estimates.  Our matrix and degree
concentration bounds verify this criterion at the connectivity scale,
substantially weaken the dimension requirement in the spherical model, and
establish global synchronization in every sufficiently large fixed dimension
when $p$ is fixed.  This partially answers the fixed-dimensional question in
\citet[Section~1.1]{abdalla2024guarantees}.  We also give the corresponding
result for the Gaussian vector graph; see \cref{thm:global-synchronization}.

\section{Main results}\label{sec:main-results}

We state the main results for spherical threshold and Gaussian
vector graphs, followed by synchronization in both models and the Gaussian
mixture block model.
Throughout this section, let $J=\one\one^\top$ and
$\Pi_n=I_n-n^{-1}J$.

\subsection{Spherical threshold graphs}\label{sec:main-spherical}

For an integer $d\ge3$, let
$\sigma=\sigma_d$ be the uniform probability measure on $\Sd$
and let
$U_1,\ldots,U_n$ be independent samples from $\sigma$.  For
$p=p_n\in(0,1/2)$, let $\tau=\tau_{d,p}$ be determined by
\begin{equation}
        \Pp\{\langle U_1,U_2\rangle\ge\tau\}=p.
\label{eq:p-tau-def}
\end{equation}
The spherical threshold random geometric graph has adjacency matrix
\begin{equation}
        A_{ij}=\one_{\{\langle U_i,U_j\rangle\ge\tau\}},
        \qquad i\ne j,
        \qquad A_{ii}=0.
\label{eq:A-def}
\end{equation}
We write $B=A-\E A=A-p(J-I)$ for the centered adjacency matrix.

Our first theorem controls the fluctuation of the adjacency matrix around its
mean.  The geometric contribution enters only through the cap threshold
$\tau$.

\begin{theorem}[Spherical matrix concentration]\label{thm:main-concentration}
For every $D>0$ there are constants $C_D>0$ and $p_0>0$ such that, if
\[
        0<p\le p_0,
        \qquad
        np\ge C_D\log n,
\]
then, for every $d\ge3$, with probability at least $1-n^{-D}$,
\begin{equation}
        \|A-\E A\|_{\op}
        \le
        C_D\left(\sqrt{np\log n}+np\tau\right).
\label{eq:main-ptau}
\end{equation}
\end{theorem}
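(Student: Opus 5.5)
Write $k(x,y)=\one_{\{\langle x,y\rangle\ge\tau\}}-p$, so that $B_{ij}=k(U_i,U_j)$ for $i\ne j$ and $B_{ii}=0$. By rotational invariance the first Hoeffding projection vanishes: $\E_Y k(x,Y)=0$ for every $x$. Hence $B$ is a degenerate U-statistic-type matrix. The plan is to split $B$ into a ``population'' part and a ``sampling'' part. Let $K$ be the integral operator of $k$ on $L^2(\sigma)$. Its spectral decomposition in spherical harmonics has eigenvalues $\lambda_\ell$ for $\ell\ge1$, and the Funk--Hecke formula gives $\max_{\ell\ge1}|\lambda_\ell|\lesssim p\tau$; this is the source of the $np\tau$ term. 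I will first establish this eigenvalue bound: $\lambda_1=p\,\E[\langle U_1,U_2\rangle\mid \langle U_1,U_2\rangle\ge\tau]\asymp p\tau$ in the sparse regime, and higher harmonics are controlled via Gegenbauer bounds or positivity arguments.

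\textbf{Step 1: decoupling.} Following \citet{KaushikRombergMuthukumar}, I reduce $\|B\|_{\op}$, up to constants and a union over $O(\log n)$ random bipartitions, to the operator norm of off-diagonal blocks $B_{S,S^c}$, whose rows and columns come from independent samples $\{U_i\}_{i\in S}$ and $\{U_j\}_{j\in S^c}$. Conditionally on the $U_j$ with $j\in S^c$, the rows of $B_{S,S^c}$ are i.i.d.\ random vectors $v_i=(k(U_i,U_j))_{j\in S^c}$ with mean zero, using $a\equiv0$.

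\textbf{Step 2: matrix Chernoff on the Gram matrix.} I write $\|B_{S,S^c}\|_{\op}^2=\|\sum_{i\in S} v_iv_i^\top\|_{\op}$, a sum of independent PSD matrices conditionally on the column vectors. The conditional mean is $|S|\,M$ with $M_{jj'}=\E_U k(U,U_j)k(U,U_{j'})=k^{(2)}(U_j,U_{j'})$, the kernel of $K^2$. Each summand has norm at most $\|v_i\|^2$, which is essentially the degree of $i$ into $S^c$ and so is $O(np)$ with high probability by Bernstein, since $np\ge C\log n$. One may truncate on this event or condition on it. Matrix Chernoff then gives
\[
\|B_{S,S^c}\|_{\op}^2\lesssim n\|M\|_{\op}+np\log n .
\]

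\textbf{Step 3: bounding the population Gram matrix.} It remains to show $\|M\|_{\op}\lesssim p+np^2\tau^2$ with high probability. The diagonal entries of $M$ are $p(1-p)$. Off the diagonal, $M$ is a kernel matrix of $k^{(2)}$, whose operator has eigenvalues $\lambda_\ell^2\le (p\tau)^2$ for $\ell\ge1$ and zero constant component. Applying the same decoupling together with Chernoff to $k^{(2)}$, now a bounded kernel with $|k^{(2)}|\le 2p$, or simply using the PSD structure $M=\Phi\Phi^\top$ with feature vectors of norm$^2\le p$ and matrix Chernoff again, gives $\|M\|_{\op}\lesssim n p^2\tau^2+p\log n$. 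Plugging this into Step 2 yields
\[
\|B\|_{\op}\lesssim\sqrt{n^2p^2\tau^2+np\log n},
\]
which is exactly \eqref{eq:main-ptau}.

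\textbf{Main obstacle.} I expect the hardest parts to be Step 3 together with the sharp eigenvalue bound $|\lambda_\ell|\lesssim p\tau$ uniformly in $\ell\ge1$ and in $d\ge3$. The cap kernel is discontinuous and $\tau$ varies with $n,d,p$, so proving the uniform bound in all regimes (including small $d$ with tiny $p$, where $\tau\to1$) requires careful Gegenbauer estimates. I would try first to show that $\lambda_1$ dominates by writing the cap indicator as a convolution of caps.
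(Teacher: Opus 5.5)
Your architecture matches the paper's: decouple, write the squared norm of the bipartite block as a sum of independent PSD rank-one matrices, identify the conditional mean with the Gram matrix of the sections $k(u,\cdot)$ (population covariance $K^2$, of norm $\max_{\ell\ge1}\lambda_\ell^2$), and bound the summands by a binomial degree count.

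The genuine gap is the population estimate $\sup_{\ell\ge1}|\lambda_\ell|\lesssim p\tau$. You assert it, but then leave it as the ``main obstacle,'' and it is exactly what produces $np\tau$ rather than $np$. The trivial bound $|\lambda_\ell|\le\int_\tau^1|P^{(d)}_\ell(t)|w_d(t)\,dt\le p$ loses precisely the factor $\tau$. The positivity or convolution-of-caps route you suggest cannot work, because the cap kernel is not positive semidefinite: its eigenvalues change sign in $\ell$.

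The missing idea is an integration by parts in the Gegenbauer equation. Multiplying it by $(1-t^2)^{(d-3)/2}$ gives the divergence form $\frac{d}{dt}\bigl[(1-t^2)^{(d-1)/2}(P^{(d)}_\ell)'\bigr]=-\ell(\ell+d-2)(1-t^2)^{(d-3)/2}P^{(d)}_\ell$. Integrating over $[\tau,1]$, where the boundary term at $1$ vanishes, gives $\lambda_\ell=\frac{c_d(1-\tau^2)^{(d-1)/2}}{\ell(\ell+d-2)}(P^{(d)}_\ell)'(\tau)$. With $(P^{(d)}_\ell)'=\frac{\ell(\ell+d-2)}{d-1}P^{(d+2)}_{\ell-1}$, this reads $\lambda_\ell=\lambda_1P^{(d+2)}_{\ell-1}(\tau)$. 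Hence $|\lambda_\ell|\le\lambda_1=\frac{c_d}{d-1}(1-\tau^2)^{(d-1)/2}$ uniformly in $\ell$ and $d$. A spherical Mills estimate, using $\tau\gtrsim d^{-1/2}$ once $p\le p_0$, then gives $\lambda_1\le Cp\tau$; see \cref{prop:cap-operator,lem:spherical-mills}.

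Three other steps also need repair.

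\emph{Step 3.} The vectors $k(u,\cdot)$ live in the infinite-dimensional space $L^2(\sigma)$, so ordinary matrix Chernoff carries a $\log(\dim)$ factor that $\log n$ does not control, and the theorem allows every $d\ge3$. You need a dimension-free version such as \cref{lem:hilbert-cov}. Its proof bounds $\tr(e^{a\cC}-I)\le a\,e^{a\|\cC\|_{\op}}\tr\cC$ with $a\asymp n/p$ and $\tr\cC\le p$, so only $\log n$ appears. Your alternative of decoupling $k^{(2)}$ again just reproduces the same infinite-dimensional Gram matrix, now for $k^{(4)}$, one level up.

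\emph{Step 1.} If the $O(\log n)$ bipartitions are combined by the triangle inequality, you lose an extra $\log n$ factor. A lossless reduction is $B=4\E_\delta[D_\delta B(I-D_\delta)]$ with i.i.d.\ Bernoulli$(1/2)$ variables $\delta_i$, combined with Fubini and Markov over $\delta$ so that all but an $n^{-3}$ fraction of bipartitions satisfy the decoupled bound; the rest are handled by $\|B_{S,S^c}\|_{\op}\le n$. Alternatively, use the de la Pe\~na--Gin\'e tail decoupling inequality as the paper does.

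\emph{Row bound in Step 2.} Given the conditioning sample, a row's degree is no longer binomial, so the uniform row bound needs the averaging-plus-Markov step of \cref{lem:column-tail}. Truncation is then harmless for Chernoff, because $\E[vv^\top\one_{\{\|v\|\le R\}}]\preceq\E[vv^\top]$.
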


\paragraph{Comparison and eigenvalue consequence.}
Liu et al.~\citeyearpar[Theorem~1.7]{LiuMohantySchrammYangExpansion} obtain the
unnormalized nontrivial-eigenvalue bound
$\max\{np\tau,\sqrt{np}\log^4 n\}$ for the same spherical graph, assuming
$\tau\ge (np)^{-1/2}$ and
$np=\omega\!\left(d^3\log^4 n\right)$.  In comparison,
\cref{thm:main-concentration} improves the sparse term to
$\sqrt{np\log n}$ and applies for every $d\ge3$.

Let $\lambda_1(A)\ge\cdots\ge\lambda_n(A)$ be the eigenvalues of $A$.
Since $\E A=p(J-I)$ has eigenvalue $-p$ on $\one^\perp$, standard
Courant--Fischer and Weyl bounds \cite[Chapter~III]{BhatiaMatrixAnalysis}
together with \cref{thm:main-concentration} imply, with probability at least
$1-n^{-D}$,
\[
        \max_{2\le k\le n}|\lambda_k(A)|
        \le
        C_D\left(\sqrt{np\log n}+np\tau\right)
\]
under the assumptions of \cref{thm:main-concentration}.

We next define the spectral estimator for the latent vectors and their Gram
matrix.  \cref{thm:spherical-inner-product} controls the
vector approximation error and the Gram-matrix approximation error.

Let $X\in\R^{n\times d}$ be the latent direction matrix whose $i$th row is
$U_i^\top$.
Let $\widehat Y\in\R^{n\times d}$ have orthonormal columns spanning an
eigenspace associated with the $d$ largest eigenvalues of $B=A-\E A$, and set
\[
        \widehat U=\sqrt{n/d}\,\widehat Y,
        \qquad
        \widehat K=\widehat U\widehat U^\top.
\]

\begin{theorem}[Spherical embedding recovery]\label{thm:spherical-inner-product}
For every $D>0$ there are constants $c_D,C_D>0$ and $p_0>0$ such that, if
\[
        0<p\le p_0,\qquad
        np\ge C_D\log n,\qquad
        C_D\log(1/p)\le d\le
        c_D\frac{np\log(1/p)}{\log n},
\]
then, with probability at least $1-n^{-D}$,
\begin{equation}
 \frac1{\sqrt n}\min_{O\in O(d)}\|\widehat UO-X\|_{\mathrm F}
 \le C_D\left(
        \sqrt{\frac{d\log n}{np\log(1/p)}}
        +\sqrt{\frac{\log(1/p)}d}
        \right),
\label{eq:spherical-vector-recovery}
\end{equation}
and
\begin{equation}
 \frac{\sqrt d}{n}\|\widehat K-XX^\top\|_{\mathrm F}
 \le C_D\left(
        \sqrt{\frac{d\log n}{np\log(1/p)}}
        +\sqrt{\frac{\log(1/p)}d}
        \right).
\label{eq:spherical-gram-recovery}
\end{equation}
\end{theorem}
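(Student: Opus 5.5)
The plan is to compare $B=A-\E A$ with a rank-$d$ population signal proportional to $XX^\top$, and then transfer the comparison to eigenspaces with a Davis--Kahan argument.

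\textbf{Population decomposition.} Write $\E[A_{ij}\mid U_i,U_j]$ itself, since $A_{ij}$ is a deterministic function of $(U_i,U_j)$. Expand the cap kernel $k(t)=\one_{\{t\ge\tau\}}-p$ in Gegenbauer harmonics on $\Sd$. The degree-one coefficient gives a term
\[
        \lambda_1 d\,\langle U_i,U_j\rangle,
\]
where $\lambda_1=\E[k(\langle U,V\rangle)\langle U,V\rangle]$. A cap computation should give $\lambda_1 d\asymp p\tau d$. Since $\tau^2\asymp\log(1/p)/d$ in the regime $d\ge C\log(1/p)$, this becomes $\lambda_1 d\asymp p\sqrt{d\log(1/p)}$.

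This yields the split
\[
        B=\lambda_1 d\,(XX^\top-\diag(XX^\top))+R,
\]
with $S:=\lambda_1 d\,XX^\top$ as the signal. The remainder $R$ collects the degree-$\ge2$ harmonics, whose largest eigenvalue scale relative to $\lambda_1$ should be at most of order $\tau$ or $1/d$-type. Rather than re-derive bounds on $R$ from scratch, I would reuse the machinery behind \cref{thm:main-concentration}: the decoupling and matrix-Chernoff argument applied to the kernel with its degree-one harmonic removed. The claim to establish is
\[
        \|R\|_{\op}\lesssim\sqrt{np\log n}+np\tau^2+np/d .
\]
Here $np\tau^2\asymp np\log(1/p)/d$ is the degree-two harmonic eigenvalue scale. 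This step, sharpening the geometric term of \cref{thm:main-concentration} from $np\tau$ to the next harmonic level, is the main obstacle. I expect it to require a careful Funk--Hecke estimate showing $|\lambda_\ell|\le C\lambda_1\tau$ for all $\ell\ge2$, together with the same decoupling proof.

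\textbf{Signal spectrum.} By matrix Chernoff on $X^\top X=\sum_i U_iU_i^\top$ (mean $(n/d)I$), for $n\gg d\log n$ all singular values of $X$ are $\sqrt{n/d}\,(1\pm o(1))$. Hence $S$ has $d$ eigenvalues of order
\[
        \lambda_1 d\cdot n/d=n\lambda_1\asymp np\sqrt{\log(1/p)/d},
\]
and all other eigenvalues of $S$ are zero. The diagonal correction has norm $\lambda_1 d$, which is negligible.

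\textbf{Perturbation.} Apply Davis--Kahan, in the $\sin\Theta$ Frobenius form with rank $d$, to $B=S+E$. The error $\|E\|_{\op}$ is as above and the gap is $\asymp n\lambda_1$. This gives
\[
        \min_O\|\widehat YO-X\sqrt{d/n}\|_{\mathrm F}\lesssim \sqrt d\,\|E\|_{\op}/(n\lambda_1).
\]
Dividing by $\sqrt d$ and inserting the bounds on $\|E\|_{\op}$ (the rescaling between $\sqrt{n/d}$ and the singular values of $X$ costs only lower-order terms):
\begin{itemize}
\item the sparse term gives $\sqrt{np\log n}\sqrt d/(np\sqrt{\log(1/p)})=\sqrt{d\log n/(np\log(1/p))}$;
\item the harmonic term gives $np\tau^2/(np\tau)=\tau\asymp\sqrt{\log(1/p)/d}$.
\end{itemize}
These are the two terms in \eqref{eq:spherical-vector-recovery}. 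The hypotheses on $d$ make both terms small, which ensures the gap condition holds.

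\textbf{Gram bound.} Use
\[
        \widehat U\widehat U^\top-XX^\top=(\widehat UO-X)(\widehat UO)^\top+X(\widehat UO-X)^\top .
\]
Together with $\|X\|_{\op},\|\widehat U\|_{\op}\lesssim\sqrt{n/d}$, this gives $\|\widehat K-XX^\top\|_{\mathrm F}\lesssim\sqrt{n/d}\,\|\widehat UO-X\|_{\mathrm F}$. Multiplying by $\sqrt d/n$ yields \eqref{eq:spherical-gram-recovery}.
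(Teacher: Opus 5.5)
Your plan matches the paper's proof. The ingredients are the same: the exact identity $B+d\lambda_1 I_n=d\lambda_1XX^\top+E$, where $E$ is the zero-diagonal matrix of the kernel with its degree-one harmonic removed; the decoupling--Chernoff bound $\|E\|_{\op}\lesssim\sqrt{np\log n}+np\log(1/p)/d$; the signal gap $n\lambda_1\asymp np\sqrt{\log(1/p)/d}$; Frobenius Davis--Kahan with Procrustes; absorption of the $M^{-1/2}$ rescaling error; and the same Gram identity.

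The step you flag as the main obstacle is exactly the paper's higher-harmonic lemma. The paper proves it via the identity $\lambda_\ell=\lambda_1P^{(d+2)}_{\ell-1}(\tau)$ and the Laplace-integral bound $|P^{(q)}_m(t)|\le t^2+(1-t^2)/(q-1)$ for $m\ge2$, which together give $\sup_{\ell\ge2}|\lambda_\ell|\le C\lambda_1\tau\asymp p\log(1/p)/d$.
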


The vector and Gram-matrix approximation errors are both $o(1)$ whenever
\[
        np\gg\log n,\qquad
        \log(1/p)\ll d
        \ll\frac{np\log(1/p)}{\log n}.
\]
This dimension window is nonempty as soon as $np\gg\log n$.  The factor
$\sqrt d/n$ makes the second bound relative to the natural Frobenius scale of
the latent Gram matrix, since
$\|XX^\top\|_{\mathrm F}\asymp n/\sqrt d$ with high probability.

\subsection{Gaussian vector graphs}\label{sec:main-gaussian-vector}

We next consider isotropic Gaussian vectors.  Let
$G_1,\ldots,G_n\stackrel{\mathrm{i.i.d.}}{\sim}N(0,I_d)$ and choose
$u=u_{d,p}$ so that
\[
        \Pp\{\langle G_1,G_2\rangle/\sqrt d\ge u\}=p.
\]
The Gaussian vector inner-product graph has adjacency matrix
\begin{equation}
        A^{\rm G}_{ij}
        =
        \one_{\{\langle G_i,G_j\rangle/\sqrt d\ge u\}},
        \qquad i\ne j,
        \qquad A^{\rm G}_{ii}=0.
\label{eq:raw-gaussian-A-main}
\end{equation}
The next theorem gives a full-matrix estimate and a sharper bound obtained by
double-centering.

\begin{theorem}[Gaussian vector matrix concentration]\label{thm:raw-gaussian-concentration}
For every $D>0$ there are constants $C_D>0$ and $p_0>0$ such that, if
\[
        0<p\le p_0,\qquad
        np\ge C_D\log n,\qquad
        d\ge C_D\log^2(1/p)\log n,
\]
then, with probability at least $1-n^{-D}$,
\begin{align}
        \|\Pi_n(A^{\rm G}-\E A^{\rm G})\Pi_n\|_{\op}
        &\le
        C_D\left(
        \sqrt{np\log n}
        +np\sqrt{\frac{\log(1/p)}d}
        \right),                                      \label{eq:raw-centered-main}\\
        \|A^{\rm G}-\E A^{\rm G}\|_{\op}
        &\le
        C_D\left(
        \sqrt{np\log n}
        +np\frac{\log(1/p)}{\sqrt d}
        \right).                                      \label{eq:raw-full-main}
\end{align}
\end{theorem}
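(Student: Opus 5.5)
We treat $A^{\rm G}-\E A^{\rm G}$ as a kernel matrix with centered kernel $k(x,y)=\one_{\{\langle x,y\rangle/\sqrt d\ge u\}}-p$ and split it by its Hoeffding decomposition $k(x,y)=a(x)+a(y)+r(x,y)$, where $a(x)=\E_Y k(x,Y)$ is radial. Off the diagonal this gives
\[
A^{\rm G}-\E A^{\rm G}=(\alpha\one^\top+\one\alpha^\top-2\diag(\alpha))+R,\qquad \alpha_i=a(G_i),\quad R_{ij}=r(G_i,G_j)\one_{\{i\ne j\}} .
\]
Since $\Pi_n\one=0$, conjugating by $\Pi_n$ kills the rank-two part except for the diagonal correction, which has norm $\max_i|\alpha_i|$. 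Hence \eqref{eq:raw-centered-main} reduces to bounding $\|R\|_{\op}$ and $\max_i|\alpha_i|$. For \eqref{eq:raw-full-main} we must also add $\|\alpha\one^\top+\one\alpha^\top\|_{\op}\le 2\sqrt n\|\alpha\|_2$.

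\textbf{Step 1: the radial projection.} Conditional on $\|x\|=s\sqrt d$, the quantity $\langle x,G\rangle/\sqrt d$ is $N(0,s^2)$, so $a(x)=\bar\Phi(u/s)-p$. We have $u\asymp\sqrt{\log(1/p)}$ and, with probability $1-n^{-D}$, $|s-1|\lesssim\sqrt{\log n/d}$ for all $i$. The Gaussian tail derivative gives $|a(x)|\lesssim p\,u^2|s-1|$, which yields
\[
\max_i|\alpha_i|\lesssim p\log(1/p)\sqrt{\log n/d}\qquad\text{and}\qquad \sqrt n\|\alpha\|_2\lesssim np\log(1/p)\sqrt{\log n/d}.
\]
The second bound is slightly too large. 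To fix it, we bound $\|\alpha\|_2^2$ through its mean, $\E a^2\asymp p^2\log^2(1/p)/d$, together with concentration of the sum of independent bounded terms; this gives $\sqrt n\|\alpha\|_2\lesssim np\log(1/p)/\sqrt d$. The hypothesis $d\ge C\log^2(1/p)\log n$ keeps $u|s-1|\lesssim1$, so the linearization is valid; it is also what makes the diagonal term negligible.

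\textbf{Step 2: the degenerate part $R$.} Here we apply the model-independent decoupling and matrix-concentration machinery of \cref{sec:matrix-tools}, exactly as in the proof of \cref{thm:main-concentration}. First, decoupling in the style of \citet{KaushikRombergMuthukumar} replaces $R$ by an off-diagonal block with independent row and column samples. Next, we condition on one side. The resulting sum is split into two pieces. The population part is $\E[r(x,Y)r(x',Y)]$, a Gram-type operator controlled by the nonconstant spectrum of the integral operator of $k$. The fluctuation part is handled by matrix Chernoff applied to the positive semidefinite Gram matrix, contributing $\sqrt{np\log n}$.

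The model-specific input is the bound $n\Lambda\lesssim np\sqrt{\log(1/p)/d}$, where $\Lambda$ is the largest eigenvalue of the operator of $r$ on $L^2(N(0,I_d))$. To prove it, we expand in Hermite/spherical-harmonic components: the degree-one part contributes a covariance $\asymp p\,u/\sqrt d\cdot I_d$, analogous to $p\tau$ in the spherical case, and higher degrees decay. We also need the sup-norm and row-sum section bounds required by \cref{sec:matrix-tools}; these follow from $|r|\le 2$ and $\E_Y|r(x,Y)|\lesssim p$ uniformly on the typical radial event. We carry them out after truncating to that event, on which the kernel sections are uniformly controlled.

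\textbf{Main obstacle.} The main difficulty is the spectral estimate for the degenerate Gaussian kernel, because the radial and angular components are coupled. The first approach we would try is to condition on the norms $\|G_i\|$. Given the norms, the directions are uniform on the sphere, and $r$ becomes a spherical cap kernel with vertex-dependent thresholds $\tau_{ij}=u\sqrt d/(\|G_i\|\|G_j\|)$. We would then reuse the spherical cap eigenvalue bounds uniformly over the range $\tau_{ij}\in\tau(1\pm O(\sqrt{\log n/d}))$. Finally, we would check that the residual radial–angular interaction is absorbed into the error terms, again using $d\ge C\log^2(1/p)\log n$.
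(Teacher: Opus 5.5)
Your architecture matches the paper's. You use the Hoeffding reduction, in which $\Pi_n$ removes $a_G\one^\top+\one a_G^\top$ up to a diagonal term. You control the radial projection in $\ell_\infty$ and through a quadratic sum. You then apply decoupling with Hilbert-covariance and matrix-Chernoff bounds to the canonical part, with $\|T_r\|_{\op}$ as the model-specific input.

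The gap is in that input, at the step you defer as the ``residual radial--angular interaction.'' Write $x=\sqrt d\,r\omega$ and $y=\sqrt d\,s\omega'$. The angular blocks of degree $\ell\ge1$ are radial operators whose kernels are cap eigenvalues at the shifted threshold $u/(\sqrt d\,rs)$. Schur's test on a typical square $|r-1|,|s-1|\le c/L$, plus a Hilbert--Schmidt bound off that square, gives $Cp\sqrt{L/d}$, so that part of your plan works. The angular degree-zero block, however, is not a cap eigenvalue. The angular mean of the indicator is the cap probability $F_0(rs)$, and subtracting $p+a(x)+a(y)$ leaves the nonzero radial kernel
\[
\kappa(r,s)=F_0(rs)-\E_{s'}F_0(rs')-\E_{r'}F_0(r's)+\E F_0(r's').
\]
Since $F_0'\asymp pL$ near $1$ and $rs-1$ fluctuates at scale $d^{-1/2}$, a first-order bound gives only $pL/\sqrt d$ for the norm of this block. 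At the sample level, if you condition on the $\|G_i\|$ as you propose, the entries of the corresponding matrix have size $pL\sqrt{\log n/d}$. Both exceed the target $p\sqrt{L/d}$ by at least a factor $\sqrt L$, and this factor does not depend on $d$. So no dimension hypothesis can absorb it. With such a bound you recover \eqref{eq:raw-full-main} but not \eqref{eq:raw-centered-main}.

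The missing idea is a second-order cancellation in this block.
\begin{itemize}
\item Expand $F_0(rs)$ about $rs=1$, with $rs-1=\delta_r+\delta_s+\delta_r\delta_s$ and $\delta_r=r-1$.
\item Double-centering annihilates the additive first-order part $F_0'(1)(\delta_r+\delta_s)$, which is exactly what $a$ already carries.
\item The surviving linear piece $F_0'(1)(Q\delta)(r)(Q\delta)(s)$ has norm at most $CpL\|\delta\|_{L^2(\nu_d)}^2\le CpL/d$.
\item The quadratic term has Hilbert--Schmidt norm at most $CpL^2\|rs-1\|_{L^4}^2\le CpL^2/d$.
\item The cubic remainder is $O(pL^3d^{-3/2})$.
\end{itemize}
Only at this point does the hypothesis on $d$ enter. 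Since $np\ge C\log n$ gives $L\le C\log n$, the condition $d\ge CL^2\log n$ implies $d\ge CL^3$, hence $pL^2/d\le Cp\sqrt{L/d}$. This is how \cref{lem:raw-gaussian-coeff} obtains $\|T_r\|_{\op}\le Cp(\sqrt{L/d}+L^2/d)$.

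Three smaller points:
\begin{itemize}
\item Linearizing $\overline\Phi(u/s)$ requires $u^2|s-1|\lesssim1$, not $u|s-1|\lesssim1$. This is exactly where $d\ge CL^2\log n$ is used in your Step~1.
\item Because $p=\E\,\overline\Phi(u\sqrt d/\|G\|)$ rather than $\overline\Phi(u)$, $a$ also contains a constant shift of size $O(pL/\sqrt d)$. This is harmless for both of your bounds on $a$.
\item If you truncate the sampling law before decoupling, $r$ is no longer canonical and must be recentered, as the paper does. Truncating only after decoupling avoids this, since matrix Chernoff uses only second moments.
\end{itemize}
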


\begin{remark}[Why double-centering appears]
Let $\overline\Phi(t)=\Pp\{N(0,1)\ge t\}$ denote the upper-tail probability of
a standard Gaussian random variable.  For fixed
$x\in\R^d\setminus\{0\}$ and $Y\sim N(0,I_d)$, the conditional edge
probability is
\[
        q(x)
        :=\Pp\{\langle x,Y\rangle/\sqrt d\ge u\}
        =\overline\Phi\left(\frac{u\sqrt d}{\|x\|}\right).
\]
Thus larger Gaussian norms produce larger expected degrees, an effect absent
in the spherical model.  Writing $a(x)=q(x)-p$, the corresponding
one-endpoint term in the sampled matrix is
\[
        a_G\one^\top+\one a_G^\top-2\diag(a_G),
        \qquad a_G=(a(G_1),\ldots,a(G_n))^\top.
\]
Because $\Pi_n\one=0$, double-centering removes the rank-two part and leaves
only a lower-order diagonal correction, controlled in
\cref{lem:raw-first-projection}.  This explains both the sharper estimate
\eqref{eq:raw-centered-main} and the use of the double-centered matrix for
recovery.
\end{remark}

Let $Z\in\R^{n\times d}$ have rows $Z_i^\top=G_i^\top/\sqrt d$.
Let $\widehat Y^{\rm G}\in\R^{n\times d}$ have orthonormal columns spanning
an eigenspace associated with the $d$ largest eigenvalues of
$\Pi_n(A^{\rm G}-\E A^{\rm G})\Pi_n$, and set
\[
        \widehat Z=\sqrt{n/d}\,\widehat Y^{\rm G},
        \qquad
        \widehat K^{\rm G}=\widehat Z\widehat Z^\top.
\]
Although the estimator is formed from a double-centered matrix, the theorem
compares it with the original, uncentered latent matrix $Z$.  The proof
controls the additional centering error through concentration of the Gaussian
sample mean.

\begin{theorem}[Gaussian vector embedding recovery]\label{thm:raw-gaussian-inner-product}
For every $D>0$ there are constants $c_D,C_D>0$ and $p_0>0$ such that, if
\[
        0<p\le p_0,\qquad
        np\ge C_D\log n,
        \qquad
        C_D\log^2(1/p)\log n
        \le d\le
        c_D\frac{np\log(1/p)}{\log n},
\]
then, with probability at least $1-n^{-D}$,
\begin{equation}
 \frac1{\sqrt n}\min_{O\in O(d)}\|\widehat ZO-Z\|_{\mathrm F}
 \le C_D\left(
        \sqrt{\frac{d\log n}{np\log(1/p)}}
        +\frac{\log^{3/2}(1/p)}{\sqrt d}
        \right),
\label{eq:gaussian-vector-recovery}
\end{equation}
and
\begin{equation}
 \frac{\sqrt d}{n}\|\widehat K^{\rm G}-ZZ^\top\|_{\mathrm F}
 \le C_D\left(
        \sqrt{\frac{d\log n}{np\log(1/p)}}
        +\frac{\log^{3/2}(1/p)}{\sqrt d}
        \right).
\label{eq:gaussian-gram-recovery}
\end{equation}
\end{theorem}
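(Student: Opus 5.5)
The argument is a Davis--Kahan perturbation argument applied to the double-centered matrix $M:=\Pi_n(A^{\rm G}-\E A^{\rm G})\Pi_n$. We will show that $M$ equals a population signal of rank $d$ plus a perturbation whose operator norm is controlled by \cref{thm:raw-gaussian-concentration}.

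\textbf{Step 1: signal term.} We linearize the kernel. Write $k(x,y)=\one_{\{\langle x,y\rangle/\sqrt d\ge u\}}-p$ and decompose it in Hoeffding and Hermite terms. The first Hoeffding projection $a$ is radial, and it is killed by $\Pi_n$ up to the diagonal correction of \cref{lem:raw-first-projection}. The leading bilinear term of the remainder has the form $\beta\langle x,y\rangle$, with the linear coefficient
\[
\beta\asymp p\,u\sqrt d\asymp p\sqrt{d\log(1/p)}
\]
in the normalization $Z_i=G_i/\sqrt d$. Hence
\[
M=\beta\,\Pi_n ZZ^\top\Pi_n+E.
\]
Here $E$ collects the $\diag$ corrections, the sampling fluctuation, and the higher-order (degree $\ge2$) kernel components. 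The population operator of the higher-order components has norm $n\cdot O(p\log(1/p)/d)$ times the relevant factor. The task is to show
\[
\|E\|_{\op}\lesssim \sqrt{np\log n}+np\log(1/p)/\sqrt d\cdot(\text{small}).
\]
I would obtain this by rerunning the decoupling and matrix Chernoff argument of \cref{sec:matrix-tools} on the kernel with its linear part removed. This gives the $\sqrt{np\log n}$ term together with a geometric term of size $n\Lambda_2$, where $\Lambda_2$ is the second Hermite eigenvalue scale. Radial fluctuations of $\|G_i\|$ contribute the extra $\log(1/p)$ powers. I expect this to be where the $\log^{3/2}(1/p)/\sqrt d$ term originates.

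\textbf{Step 2: eigengap and Davis--Kahan.} Since $d\le c_Dn/\log n$, matrix concentration for Gaussian Gram matrices gives
\[
\Pi_nZZ^\top\Pi_n\approx (n/d)\,P,
\]
where $P$ is the projection onto the column space of $\Pi_nZ$. Its $d$ nonzero eigenvalues all lie in $[n/(2d),2n/d]$. The signal therefore has gap
\[
\delta\asymp\beta n/d\asymp np\sqrt{\log(1/p)/d}.
\]
The Davis--Kahan $\sin\Theta$ theorem bounds the distance between $\widehat Y^{\rm G}$ and the signal eigenspace by $\|E\|_{\op}/\delta$. The sparse part gives
\[
\sqrt{np\log n}\,/\,(np\sqrt{\log(1/p)/d})=\sqrt{d\log n/(np\log(1/p))},
\]
which is the first term of the bound. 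The geometric remainder gives the second term.

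\textbf{Step 3: from subspaces to vectors and Gram matrix.} We convert the Frobenius subspace bound (with the rank-$d$ factor $\sqrt{2d}$ times the operator $\sin\Theta$) into a Procrustes bound on $\widehat ZO$ versus $\Pi_nZ$. The singular values of $\sqrt{d/n}\,\Pi_nZ$ deviate from $1$ by $O(\sqrt{d/n})$, which is negligible. Next we replace $\Pi_nZ$ by $Z$. This incurs an error of $\sqrt n\,\|\bar Z\|\lesssim\sqrt{n}\sqrt{1/n}\cdot O(1)$, i.e.\ an error of order $n^{-1/2}$ after normalization, which is absorbed. The Gram-matrix bound then follows from
\[
\|\widehat K-ZZ^\top\|_{\mathrm F}\le\|\widehat ZO-Z\|_{\mathrm F}\bigl(\|\widehat Z\|_{\op}+\|Z\|_{\op}\bigr),
\]
together with $\|Z\|_{\op}\lesssim\sqrt{n/d}$.

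\textbf{Main obstacle.} The hardest step is Step 1: showing that the nonlinear remainder $E$ has operator norm $o(\delta)$ with only a $\log^{3/2}(1/p)$ loss. This requires careful tail estimates on the Gaussian threshold kernel's projection onto degree-one Hermite polynomials, conditionally on the norms. I would first try conditioning on all norms $\|G_i\|$, which lie in $\sqrt d(1\pm C\sqrt{\log n/d})$. Conditionally, the directions are spherical, so the spherical cap analysis underlying \cref{thm:main-concentration} can be imported with a row-dependent threshold. The variation of that threshold is handled as the radial correction.
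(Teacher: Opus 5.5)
You have the right architecture, and Steps 2 and 3 are essentially the paper's argument, but Step 1, which carries all the new work, is asserted rather than proved.

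\textbf{What matches.} Your $\beta$ is the paper's $d\beta\asymp p\sqrt{dL}$, with $L=1+\log(1/p)$. The signal gap is $np\sqrt{L/d}$. Frobenius Davis--Kahan plus Procrustes turns $\|E\|_{\op}$ into a normalized vector error $\|E\|_{\op}/(np\sqrt{L/d})$. The covariance error $\sqrt{(d+\log n)/n}$ and the sample-mean error $n^{-1/2}$ are absorbed into the first term, and \cref{lem:gram-perturbation} gives the Gram bound. Putting the diagonal $d\beta\,\Pi_nD_Z\Pi_n$ into $E$, instead of shifting by $d\beta\Pi_n$, is harmless because $p\sqrt{dL}\le\sqrt{np\log n}$ in this regime.

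\textbf{The gap in Step 1.} You need
\[
\|E\|_{\op}\le C_D\Bigl(\sqrt{np\log n}+np\,\frac{L^2}{d}\Bigr),
\]
since $npL^2/d$ divided by the gap is exactly $L^{3/2}/\sqrt d$. You only state an expectation. A higher-order population norm of $O(pL/d)$ is the spherical $\ell\ge2$ rate and would give $\sqrt{L/d}$. You rightly suspect that radial fluctuations supply the extra factor of $L$, but the missing ingredient is the bound $\|T_{r_\perp}\|_{\op}\le CpL^2/d$ for $r_\perp(x,y)=q_{\rm G}(x,y)-a(x)-a(y)-\beta\langle x,y\rangle$.

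The paper proves this bound with the polar decomposition $L^2(\gamma_d)=\bigoplus_\ell L^2(\nu_d)\otimes\cH_\ell^d$. On each summand the degree-$\ell$ angular block acts as a radial kernel $F_\ell(rs)$.
\begin{enumerate}
\item For $\ell\ge2$, the cap bound at shifted thresholds gives $pL/d$.
\item For $\ell=1$, one shows $\|K_1-\beta\,r\otimes r\|_{\op}\lesssim pL^{3/2}/d$.
\item For $\ell=0$, after $Q$ removes constants and one-variable terms, the Taylor term $\tfrac12F_0''(1)\,Q_rQ_s[(rs-1)^2]$ dominates. With $|F_0''|\lesssim pL^2$ and $\|rs-1\|_4^2\lesssim1/d$, it produces $pL^2/d$.
\end{enumerate}
A ``second Hermite eigenvalue'' is not the right object: the threshold kernel's integral operator does not preserve Wiener chaos, so there is no chaos-by-chaos spectrum to read off. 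The degree-one coefficient you flag as the main obstacle is the easy part: $m_1\asymp p\sqrt L$ follows from conditioning on one vector and Mills' ratio.

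\textbf{The sampling step.} Rerunning \cref{prop:decoupled-G} is not verbatim. The section bound $\int r_\perp(x,y)^2\,d\gamma_d(y)\le Cp$ holds only when $\bigl|\|x\|/\sqrt d-1\bigr|\le c/L$. You therefore must:
\begin{enumerate}
\item truncate to typical radii;
\item recenter the kernel under the conditioned law, to restore the canonicity that \cref{lem:decoupling} requires;
\item check that the truncated operator norm is at most $\Lambda/(1-\varepsilon)$;
\item bound the extra bilinear column $\sum_i\beta^2\langle G_i,y\rangle^2\le Cnp$, using $d\le cn$ and a sample-covariance event.
\end{enumerate}
This is what \cref{prop:raw-canonical-bound} does.

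\textbf{The conditioning-on-norms fallback.} This does not reduce to the spherical theorem either. Write $r_i=\|G_i\|/\sqrt d$ and $U_i=G_i/\|G_i\|$. The conditional threshold $u/(\sqrt d\,r_ir_j)$ depends on the pair, not the row. So both the conditional means $F_0(r_ir_j)$ and the degree-one coefficients $F_1(r_ir_j)$ vary with $(i,j)$. You would still have to show that $\Pi_n[F_0(r_ir_j)]_{ij}\Pi_n$ and the mismatch $[F_1(r_ir_j)-\beta r_ir_j]\,d\langle U_i,U_j\rangle$ are $O(npL^2/d)$ in operator norm. That is the same radial computation. Bounding the mismatch crudely by $\max_i|r_i-1|\asymp\sqrt{\log n/d}$ times $\|d\,UU^\top\|_{\op}$ gives $L\sqrt{\log n/d}$ in place of $L^{3/2}/\sqrt d$, so it does not prove the stated rate.
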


The vector and Gram-matrix approximation errors are both $o(1)$ whenever
\[
        np\gg\log n,\qquad
        \log^2(1/p)\log n\ll d
        \ll \frac{np\log(1/p)}{\log n}.
\]
This dimension window is nonempty when
$np\gg\log(1/p)(\log n)^2$.  As in the spherical model, the factor
$\sqrt d/n$ gives the Gram-matrix error relative to its natural Frobenius
scale.

\paragraph{Information limit.}
Mao and Zhang~\citeyearpar[Corollary~2.3]{MaoZhang} show that the expected loss of
every graph-based estimator is bounded away from zero when $d\gtrsim nh(p)$,
where $h(p)$ is the binary entropy.  Since
$h(p)\asymp p\log(1/p)$ for $0<p\le p_0$, the information-theoretic threshold
has order $np\log(1/p)$.  The upper end of our vanishing-error window is
\[
        d\ll\frac{np\log(1/p)}{\log n},
\]
which is within a factor of $\log n$ of this threshold.

\paragraph{Comparison with Li--Schramm.}
For $\mu=0$, the bound in \citet[Theorem~1.4]{LiSchramm} vanishes only when
\[
        \log(1/p)\log^{18}n\ll d
        \ll\frac{np\log(1/p)}{\log^{18}n}.
\]
This window is nonempty only if $np\gg\log^{36}n$.  By comparison,
\cref{thm:raw-gaussian-inner-product} gives a nonempty window when
$np\gg\log(1/p)(\log n)^2$, lowers the dimension requirement to
$d\gg\log^2(1/p)\log n$, and improves the upper range to within a factor
$\log n$ of the information limit.

\paragraph{Uniform recovery.}
It remains open whether either model admits uniform embedding recovery
throughout the same vanishing-error windows, namely whether
\[
 \min_{O\in O(d)}\|\widehat UO-X\|_{2,\infty}=o(1),
 \qquad
 \min_{O\in O(d)}\|\widehat ZO-Z\|_{2,\infty}=o(1),
\]
where $\|M\|_{2,\infty}=\max_i\|M_{i\cdot}\|_2$.  Existing two-to-infinity
bounds \cite{CapeTangPriebe,LeiTwoInfinity} do not reach this scale from
operator-norm control alone, while a direct leave-one-out argument
\cite{AbbeFanWangZhong} encounters dependence between the deleted row and the
corresponding minor eigenspace.

\subsection{Synchronization}\label{sec:main-synchronization}

Synchronization describes the emergence of coherent behavior among coupled
oscillators.  In the homogeneous Kuramoto model
\cite{KuramotoBook,DorflerBulloSurvey}, vertex $i$ carries a phase
$\theta_i\in\R/2\pi\mathbb Z$, and $M_{ij}$ records its coupling to vertex
$j$.  A co-rotating frame removes the common intrinsic frequency.  For an
unweighted graph, $M$ is the adjacency matrix and the flow is
\begin{equation}
        \dot\theta_i
        =\sum_{j=1}^n M_{ij}\sin(\theta_j-\theta_i),
        \qquad i\in[n].
\label{eq:kuramoto-flow}
\end{equation}
The dynamics in \eqref{eq:kuramoto-flow} are the negative gradient flow of
the interaction energy
\begin{align*}
 \mathcal E_M(\theta)
 &=\sum_{1\le i<j\le n}M_{ij}
   \bigl(1-\cos(\theta_i-\theta_j)\bigr).
\end{align*}
Following \citet{abdalla2024guarantees,AbdallaEtAlExpander}, a graph is
\emph{globally synchronizing} if, for Lebesgue-almost every initial condition
on $(\R/2\pi\mathbb Z)^n$, the solution of \eqref{eq:kuramoto-flow} converges
to a synchronized state.  On a connected graph the global minimizers of
$\mathcal E_M$ are synchronized, but connectivity alone does not exclude
locally stable nonsynchronized states, such as twisted states on cycles
\cite{DorflerBulloSurvey}.  The centered-adjacency and Laplacian criterion of
\citet[Theorem~1.10]{AbdallaEtAlExpander} supplies the required global control.
Our matrix and degree concentration bounds verify that criterion and give the
following result.
\begin{theorem}[Global synchronization]\label{thm:global-synchronization}
For every $D>0$ there are constants $C_D>0$ and $p_0>0$ such that, if
\[
        0<p\le p_0,
        \qquad
        np\ge C_D\log n,
\]
then each of the following conclusions holds with probability at least
$1-n^{-D}$:
\begin{enumerate}[label=\textup{(\roman*)},leftmargin=2.2em]
\item the spherical threshold graph is globally synchronizing whenever
      \[
              d\ge C_D\log(1/p);
      \]
\item the Gaussian vector graph is globally synchronizing whenever
      \[
              d\ge C_D\log^2(1/p)\log n.
      \]
\end{enumerate}
\end{theorem}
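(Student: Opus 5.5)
We will derive both parts from the deterministic criterion of \citet[Theorem~1.10]{AbdallaEtAlExpander}. That criterion certifies global synchronization of a graph with adjacency matrix $M$ and Laplacian $L=D_M-M$ once three quantities are small compared with the mean degree $np$: the centered adjacency deviation $\|M-p(J-I)\|_{\op}$, the degree deviation $\max_i|\deg_i-(n-1)p|$, and consequently the deviation of $L$ from its mean. Concretely, we aim to show that each of these is at most $\varepsilon\,np$ for a small absolute constant $\varepsilon$ determined by that theorem. The proof then has three steps. First, bound the adjacency deviation using \cref{thm:main-concentration} in the spherical case and \eqref{eq:raw-full-main} of \cref{thm:raw-gaussian-concentration} in the Gaussian case. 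Second, bound the degrees. Third, combine the two through the triangle inequality $\|L-\E L\|_{\op}\le \max_i|\deg_i-\E\deg_i|+\|M-\E M\|_{\op}$ and a union bound, which loses only a constant in $D$.

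\textbf{Spherical case.} \cref{thm:main-concentration} gives $\|A-\E A\|_{\op}\le C_D(\sqrt{np\log n}+np\tau)$. The first term is at most $\varepsilon np$ once $np\ge C_D\log n$ with $C_D$ large. For the second term we need $\tau\le\varepsilon/C_D$. We get this from the standard cap estimate: $p=\Pp\{\langle U_1,U_2\rangle\ge\tau\}$ satisfies $p\le (1-\tau^2)^{(d-1)/2}$ up to harmless factors, which gives $\tau\lesssim\sqrt{\log(1/p)/d}$. This is the source of the hypothesis $d\ge C_D\log(1/p)$, with $C_D$ enlarged. For the degrees, note that conditionally on $U_i$ the indicators $A_{ij}$, $j\ne i$, are i.i.d.\ Bernoulli$(p)$, by rotational invariance. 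A Chernoff bound plus a union bound over $i$ then gives $\max_i|\deg_i-(n-1)p|\le C\sqrt{np\log n}\le\varepsilon np$.

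\textbf{Gaussian case.} Use the full bound \eqref{eq:raw-full-main}, $C_D(\sqrt{np\log n}+np\log(1/p)/\sqrt d)$. The condition $d\ge C_D\log^2(1/p)\log n$ makes the second term at most $\varepsilon np$, and in any case it is the hypothesis that \cref{thm:raw-gaussian-concentration} requires. The degrees need more care, because conditionally on $G_i$ the indicators are i.i.d.\ Bernoulli$(q(G_i))$ with $q(x)=\overline\Phi(u\sqrt d/\|x\|)$, and $q(G_i)$ is not equal to $p$. So we must show $\max_i|q(G_i)-p|\le\varepsilon p$. We combine $\|G_i\|^2/d=1+O(\sqrt{\log n/d})$ uniformly in $i$, valid with probability $1-n^{-D}$, with $u\asymp\sqrt{\log(1/p)}$ and the Gaussian tail ratio estimate. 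The ratio $\overline\Phi(u(1+\delta))/\overline\Phi(u)$ is $\exp(O(u^2\delta))=\exp(O(\log(1/p)\sqrt{\log n/d}))$, which is $1+o(1)$ exactly when $d\gg\log^2(1/p)\log n$. This matches the stated hypothesis, and the paper's Hoeffding-projection analysis presumably already contains this bound on $a(G_i)$. After that, Chernoff concentration around $(n-1)q(G_i)$ finishes the degree bound as in the spherical case.

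\textbf{Main obstacle.} The main difficulty is matching the exact quantitative form of the hypotheses in \citet[Theorem~1.10]{AbdallaEtAlExpander}, for instance whether it requires a normalized Laplacian gap or one-sided bounds. The Gaussian radial effect also has to be kept below a small constant multiple of $np$ rather than merely $o(np)$. If the criterion instead demands a bound in terms of the minimum degree, the same degree concentration controls $\min_i\deg_i\ge(1-\varepsilon)np$, and the argument goes through unchanged.
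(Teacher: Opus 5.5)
Your proposal is correct and follows the paper's proof essentially step for step. You verify the criterion of \citet[Theorem~1.10]{AbdallaEtAlExpander} through $M-pJ=(M-\E M)-pI_n$ and $\mathcal L_M+pJ-npI_n=\diag(M\one)-(n-1)pI_n-(M-\E M)$. You do this using \cref{thm:main-concentration} with $\tau\lesssim\sqrt{\log(1/p)/d}$ in the spherical case, the full Gaussian bound \eqref{eq:raw-full-main}, and degree concentration obtained by conditioning on each latent vector. In the Gaussian degree step, $q(G_i)$ must be compared with $p=\E\,\overline\Phi(u\sqrt d/\|G\|)$ rather than with $\overline\Phi(u)$, but this centering is exactly what \cref{lem:raw-first-projection} supplies, as you anticipated.
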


For comparison, the spherical result in
\citet[Theorem~4]{abdalla2024guarantees} assumes
\[
 np\ge C(\log n)^2,\qquad
 d\ge C\bigl(n^2p^2+(\log n)^4\bigr)(\log n)^4.
\]
Thus part~\textup{(i)} attains the connectivity scale while substantially
relaxing the required lower bound on the ambient dimension.  It also captures,
at the level of the sufficient conditions, the effect of increasing the edge
density anticipated in \citet[Section~1.1]{abdalla2024guarantees}: within the
range $0<p\le p_0$, both $np\ge C_D\log n$ and
$d\ge C_D\log(1/p)$ become easier to satisfy as $p$ increases.  In particular,
for every fixed $p\in(0,p_0]$, part~\textup{(i)} establishes global
synchronization in every sufficiently large fixed dimension.  This partially
answers the fixed-dimensional question posed by
\citet[Section~1.1]{abdalla2024guarantees}.  A complementary result of
\citet{de2026phase} proves synchronization on $\mathbb S^2$ for initial
conditions converging to a smooth profile; as those authors note, that result
does not establish global synchronization.  Part~\textup{(ii)} gives the
corresponding new guarantee for Gaussian latent vectors.

\subsection{Gaussian mixture block model}\label{sec:main-gmbm}

We conclude the main results with an application to community recovery.  The
Gaussian mixture block model of Li and Schramm~\citeyearpar{LiSchramm} is a geometric
analogue of the stochastic block model: each vertex has a hidden binary label,
but the edges are obtained by thresholding latent inner products and remain
dependent even after conditioning on all labels.  The problem therefore
combines a rank-one community signal with dependent geometric fluctuations.

\begin{definition}[Gaussian mixture block model]\label{def:gmbm}
Fix integers $n,d\ge1$, an edge density $p\in(0,1)$, and a separation
parameter $\mu\ge0$.  Let
$\xi_1,\ldots,\xi_n$ be independent Rademacher labels and let
$G_1,\ldots,G_n$ be independent $N(0,I_d)$ vectors, independent of the labels.
Write $\xi=(\xi_1,\ldots,\xi_n)^\top$.
For $1\le i\le n$, set
\[
        Z_i=\frac{G_i}{\sqrt d}+\mu\xi_i e_1\in\R^d.
\]
Let $\tau=\tau_{\mu,d,p}$ be the threshold chosen so that
\[
        \Pp\{\sqrt d\,\langle Z_1,Z_2\rangle\ge \tau\}=p.
\]
The Gaussian mixture block model is the graph with adjacency matrix
\begin{equation}
        A^\mu_{ij}
        =
        \one_{\{\sqrt d\,\langle Z_i,Z_j\rangle\ge \tau\}},
        \qquad i\ne j,
        \qquad
        A^\mu_{ii}=0.
\label{eq:gmbm-A-def}
\end{equation}
\end{definition}

For exact recovery, define
\[
        B_\mu=\Pi_n(A^\mu-p(J-I))\Pi_n.
\]
Consider the semidefinite program
\begin{equation}
 \widehat X\in\argmax\left\{
        \langle B_\mu,X\rangle:
        X\succeq0,\ X_{ii}=1\ \text{for every }i
 \right\}.
\label{eq:gmbm-sdp}
\end{equation}
This is the standard elliptope relaxation and can be solved in polynomial
time to any prescribed accuracy \cite{VandenbergheBoydSDP}.

\begin{theorem}[Gaussian mixture exact recovery]\label{thm:gmbm-exact}
For every $D>0$ there are constants $c_D,C_D>0$ and $p_0>0$ such that, if
\[
        0<p\le p_0,\qquad np\ge C_D\log n,
\]
and
\begin{equation}
        \mu^2d\ge C_D\log n,\qquad
        C_D\sqrt{\frac{\log n}{np}}
        \le \mu^2\sqrt{d\log(1/p)}\le c_D,
\label{eq:GMM}
\end{equation}
then, with probability at least $1-n^{-D}$, the optimizer in
\eqref{eq:gmbm-sdp} is unique and equals
\[
        \widehat X=\xi\xi^\top.
\]
Consequently, a rank-one factor of $\widehat X$ recovers all labels up to a
global sign.
\end{theorem}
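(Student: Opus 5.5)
The plan is the Hajek--Wu--Xu dual-certificate argument for the elliptope relaxation, with the signal and noise parts of $B_\mu$ estimated by the paper's decoupling and matrix-concentration framework. Write $\xi$ for the label vector and $\Xi=\diag(\xi)$. By standard SDP duality, $\xi\xi^\top$ is the unique optimizer of \eqref{eq:gmbm-sdp} as soon as there is a diagonal matrix $D^\ast$ such that
\[
        S:=D^\ast-B_\mu\succeq0,\qquad S\xi=0,\qquad \lambda_{n-1}(S)>0 .
\]
We take the only possible choice, $D^\ast=\diag\bigl((B_\mu\xi)_i\xi_i\bigr)$, which forces $S\xi=0$. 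The task then reduces to showing
\[
        \langle x,Sx\rangle>0\quad\text{for every unit } x\perp\xi ,
\]
and for this it suffices to prove
\[
        \min_i \xi_i(B_\mu\xi)_i\;>\;\sup_{x\perp\xi,\ \|x\|=1}\langle x,B_\mu x\rangle .
\]

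\textbf{Step 1 (population signal).} Compute the kernel $k_\mu(z,z')=\Pp\{\sqrt d\langle z,z'\rangle\ge\tau\}-p$ and its Hoeffding decomposition conditional on the labels. The inner product $\sqrt d\langle Z_i,Z_j\rangle$ contains the label term $\mu^2 d\,\xi_i\xi_j$ and the cross terms $\mu\sqrt d(\xi_i G_{j1}+\xi_j G_{i1})$.

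Expand the Gaussian threshold probability at the level $u\asymp\sqrt{2\log(1/p)}$. The label component of $\E[A^\mu\mid\xi]$ is then $\tfrac12 n\,\delta\,\xi\xi^\top$, where $\delta=\Pp(\text{edge}\mid\text{same})-\Pp(\text{edge}\mid\text{diff})\asymp p\,\mu^2\sqrt{d\log(1/p)}$. The upper constraint $\mu^2\sqrt{d\log(1/p)}\le c_D$ keeps this in the linear regime and excludes the isolated-vertex phenomenon of \cref{thm:gmbm-large-separation}. Hence the signal eigenvalue is of order
\[
        n p\,\mu^2\sqrt{d\log(1/p)} .
\]
Double-centering by $\Pi_n$ removes the radial first Hoeffding projection, exactly as in \cref{lem:raw-first-projection}. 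It also changes $\xi$ only by $\bar\xi\one$, with $|\bar\xi|\lesssim\sqrt{\log n/n}$, and this error is absorbed.

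\textbf{Step 2 (degree-type lower bound).} Show that with probability $1-n^{-D-1}$, uniformly in $i$,
\[
        \xi_i(B_\mu\xi)_i\ge c\,np\mu^2\sqrt{d\log(1/p)} .
\]
Conditionally on $Z_i$, the quantity $\xi_i\sum_j\xi_j(A^\mu_{ij}-p)$ is a sum of independent bounded terms. Its conditional mean is $\asymp np\mu^2\sqrt{d\log(1/p)}$ times a factor depending on the projection $\xi_i G_{i1}$ and on $\|G_i\|$.

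\begin{itemize}
\item The condition $\mu^2 d\ge C_D\log n$ together with Gaussian tails on $G_{i1}$ and on $\|G_i\|^2-d$ keeps this factor bounded below for all $i$ simultaneously. This is where a vertex whose own noise pushes it toward the wrong cluster must still be controlled.
\item Bernstein's inequality bounds the fluctuation by $\sqrt{np\log n}$.
\end{itemize}

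The lower constraint in \eqref{eq:GMM} makes the signal dominate $\sqrt{np\log n}$.

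\textbf{Step 3 (noise on $\xi^\perp$).} Bound
\[
        \sup_{x\perp\xi,\ \|x\|=1}\langle x,B_\mu x\rangle\le \|B_\mu-\E[B_\mu\mid\xi]\|_{\op}+(\text{non-label population eigenvalues}) .
\]
Adapt \cref{thm:raw-gaussian-concentration} to the shifted latent vectors through the paper's decoupling and matrix-Chernoff framework. This gives $\sqrt{np\log n}+np\sqrt{\log(1/p)/d}$ for the nonconstant geometric part on the $d-1$ isotropic directions. The $e_1$ direction carries an extra $\mu$-dependent term, but projecting onto $\xi^\perp$ kills its label part.

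The main obstacle is showing that the geometric eigenvalues, of order $np\sqrt{\log(1/p)/d}$, stay strictly below the Step 2 signal. Comparing the two, this needs $\mu^2 d\gtrsim1$ with a large constant, which follows from $\mu^2 d\ge C_D\log n$. It also needs the population kernel restricted to $\xi^\perp$ to have no label-correlated rank-one remnant. I would handle this by a careful Hermite expansion of the threshold kernel in $G_{i1},G_{j1}$, conditional on the labels, and would expect to choose $C_D$ large. Combining Steps 2 and 3 gives $\lambda_{n-1}(S)>0$, which proves uniqueness.
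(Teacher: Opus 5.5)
This is the paper's proof. Your certificate $D^\ast-B_\mu$, conjugated by $D_\xi=\diag(\xi)$, is exactly the signed Laplacian of $C_\mu=D_\xi B_\mu D_\xi$ in \cref{lem:signed-laplacian-certificate}. Your Step~2 is the row bound \eqref{eq:gmbm-residual-row} on $E_\mu^{\rm lab}\xi_c$, with the $\mu G_{i1}$ fluctuation beaten by $\mu^2d\ge C_D\log n$. Your Step~3 is the decoupling--Chernoff bound \eqref{eq:gmbm-residual-operator}, whose real input is the population estimate $\Lambda_\mu^{\rm lab}\lesssim p(\mu\sqrt L+\sqrt{L/d})$. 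The paper gets that estimate from an angular harmonic decomposition plus a Taylor expansion in the radial and first coordinates, which plays the role of your deferred Hermite expansion.

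You also do not need the $e_1$ cross term to vanish on $\xi^\perp$. Its contribution $np\mu\sqrt L$ is already smaller than the signal $np\mu^2\sqrt{dL}$ by the factor $(\mu\sqrt d)^{-1}$, and the paper simply absorbs it into $\|H_\mu\|_{\op}$.
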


Here $np\ge C_D\log n$ is the connectivity-scale condition, while
$\mu^2d\ge C_D\log n$ is the exact-recovery scale even when the Gaussian
mixture vectors are observed directly \cite{NdaoudGMM}.  The upper bound in
\eqref{eq:GMM} restricts the theorem to moderate separation.  The next result
shows why some such restriction is intrinsic when the edge density is fixed.

\begin{theorem}[Large-separation impossibility]
\label{thm:gmbm-large-separation}
There are absolute constants $C>0$ and $p_0>0$ such that, along any parameter
sequence with $n\to\infty$, if
\[
        0<p\le p_0,\qquad np\ge C\log n,\qquad
        \mu^2d\ge C\log n,
\]
and
\[
        \mu\sqrt{\log(1/p)}
        \ge C\left(\sqrt{\log n}+\frac{\log n}{\sqrt d}\right),
\]
then
\[
 \Pp\{A^\mu\text{ has an isolated vertex in each label class}\}
 \longrightarrow1.
\]
Consequently, the supremum over all possibly randomized estimators
$\widetilde\xi=\widetilde\xi(A^\mu)\in\{\pm1\}^n$ satisfies
\[
 \sup_{\widetilde\xi}
 \Pp\{\widetilde\xi\in\{\xi,-\xi\}\}
 \le \frac12+o(1).
\]
\end{theorem}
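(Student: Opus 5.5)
Theorem~\ref{thm:gmbm-large-separation} is proved in two stages. First we show that isolated vertices exist in both label classes. Then an exchangeability (label-swap) argument gives the impossibility bound.

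\textbf{Stage 1: the threshold and the rare vertices.} For large $\mu$, the inner product splits as
\[
        \sqrt d\,\langle Z_i,Z_j\rangle
        =\frac{\langle G_i,G_j\rangle}{\sqrt d}
        +\mu\xi_j G_{i1}+\mu\xi_i G_{j1}
        +\sqrt d\,\mu^2\xi_i\xi_j .
\]
The last term is deterministic given the labels and has size $\sqrt d\mu^2\gg\mu\sqrt{\log n}$. Hence same-label pairs are shifted up by $\sqrt d\mu^2$ and opposite-label pairs down. To keep the marginal density $p$, the threshold $\tau$ must sit near $\sqrt d\mu^2+t$, where $t$ solves
\[
        \tfrac12\Pp\{W+\mu(G_{11}+G_{21})\ge t\}\approx p,
        \qquad W=\langle G_1,G_2\rangle/\sqrt d .
\]
So $t\asymp\sqrt{(1+2\mu^2)\log(1/p)}$, and opposite-label pairs essentially never connect. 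I would make this precise with Gaussian tail bounds, conditioning on $\|G_i\|$ as in the Gaussian-vector analysis.

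Now fix a vertex $i$ with $\xi_i=+1$ and condition on $G_i$. Its same-label neighbours $j$ require
\[
        \langle G_i,G_j\rangle/\sqrt d+\mu G_{i1}+\mu G_{j1}\ge t .
\]
Given $G_i$, the left side is Gaussian in $G_j$ with mean $\mu G_{i1}$ and variance $\|G_i\|^2/d+2\mu G_{i1}'\!$-type terms, which is roughly $1+\mu^2$. The key point is that $G_{i1}$ enters the mean with coefficient $\mu$, while the edge scale is only $\sqrt{1+\mu^2}$.

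Choose $G_{i1}=-s$ with $\mu s\approx t+C'\sqrt{(1+\mu^2)\log n}$. Then the conditional edge probability is at most $n^{-2}$, so vertex $i$ is isolated with probability $1-o(1)$ given this event. The event $G_{i1}\le -s$ has probability $\exp(-s^2/2)$. Under the hypothesis
\[
        \mu\sqrt{\log(1/p)}\ge C\Bigl(\sqrt{\log n}+\tfrac{\log n}{\sqrt d}\Bigr),
\]
we get $s\lesssim\sqrt{\log(1/p)}+\sqrt{\log n}$, and I expect $s^2/2\le(1-\varepsilon)\log n$. Tuning constants here is the main obstacle, and the $\log n/\sqrt d$ term should absorb the radial fluctuation of $\|G_i\|^2/d$. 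It then follows that the expected number of such vertices in each class is $n^{\varepsilon}\to\infty$.

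\textbf{Second moment.} I would apply a second-moment bound to the count $N_+$ of isolated vertices with $\xi_i=+1$ and $G_{i1}\le -s$. For $i\ne k$, the events are nearly independent: the dependence runs only through the $o(1)$ chance that $i\sim k$, and through the other vertices $G_j$, which are handled by a union bound over typical configurations. Thus $\Pp\{N_+\ge1\}\to1$, and likewise for $N_-$.

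\textbf{Stage 2: impossibility.} On the event that there are isolated vertices $i\in\{\xi=+1\}$ and $k\in\{\xi=-1\}$, consider flipping both labels $\xi_i,\xi_k$ and reflecting the first coordinates, $G_{i1}\mapsto-G_{i1}$ (and the same for $k$). This map preserves the prior law of $(\xi,G)$. The law of $A^\mu$ given the labels is invariant under the swap of isolated vertices: an isolated vertex's label does not influence the rest of the graph, beyond symmetric terms, by exchangeability.

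More cleanly, the posterior of $\xi$ given $A^\mu$ assigns equal mass to $\xi$ and to $\xi'$, where $\xi'$ is $\xi$ with the labels of two isolated opposite-label vertices exchanged. Both isolated vertices have the same neighbourhood (empty), and the joint law is symmetric under permuting the vertex indices $i,k$ together with their latent data. Hence any estimator succeeds on at most one of $\{\xi,-\xi\}$ versus $\{\xi',-\xi'\}$. This yields a success probability of at most $\tfrac12+\Pp\{\text{no such pair}\}=\tfrac12+o(1)$.
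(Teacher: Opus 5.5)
\textbf{Gap: the parameter choice in Stage~1.} The gap is at the step you call ``tuning constants,'' and tuning constants cannot close it. Write $H_i=\xi_iG_{i1}$, $m_\mu=\mu^2\sqrt d$, and $t=\tau-m_\mu$. In this regime the same-label edge event is essentially $\mu(H_i+H_j)\ge t$. Since $p_+(\mu)\approx 2p$, this forces $t/\mu\lesssim\sqrt{\log(1/p)}$.

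\begin{itemize}
\item Given $G_{i1}=-s$, the partner's own term $\mu G_{j1}$ already has standard deviation $\mu$. So a per-partner edge probability of $n^{-1-\delta}$ requires $s+t/\mu\ge\sqrt{2(1+\delta)\log n}$, up to lower-order terms.
\item An expected count $n^{\varepsilon}$ of vertices with $G_{i1}\le -s$ requires $s\le\sqrt{2(1-\varepsilon)\log n}$.
\item For fixed $\varepsilon,\delta>0$ these two conditions are compatible only if $\sqrt{\log(1/p)}\gtrsim(\varepsilon+\delta)\sqrt{\log n}$. That fails for constant $p$, or more generally whenever $\log(1/p)=o(\log n)$, and the theorem allows both.
\item With your explicit choice (per-partner probability $n^{-2}$), you get $s\ge 2\sqrt{\log n}-O(\sqrt{\log(1/p)})$. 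The expected number of candidate vertices is then $n^{-1+o(1)}\to0$, not $n^{\varepsilon}$.
\item The hypothesis $\mu\sqrt{\log(1/p)}\ge C(\sqrt{\log n}+\log n/\sqrt d)$ does not rescue this. It compares the window $\mu\sqrt{\log(1/p)}$ with the residual $\langle G_i,G_j\rangle/\sqrt d$, not $\sqrt{\log(1/p)}$ with $\sqrt{\log n}$.
\end{itemize}

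\textbf{Missing idea: second-order extreme values.} Among $\asymp n$ i.i.d.\ standard Gaussians, the minimum and maximum are $\mp\sqrt{2\log n}+O(\log\log n/\sqrt{\log n})$. Hence within a label class, $\min_iH_i+\max_jH_j=o(1)$. This lies below the margin $u_p/4$, where $\Pp\{N(0,2)\ge u_p\}=4p$, so $u_p\asymp\sqrt{\log(1/p)}$ is bounded below by a constant.

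The paper uses this as follows:
\begin{itemize}
\item It takes $i_\sigma$ to be the minimizer of $H_i$ in class $\sigma$.
\item It proves the one-sided bound $\tau\ge m_\mu+\frac12\mu u_p$ by contradiction, using $p_+(\mu)\le 2p$ and a subexponential tail for $\langle G_1,G_2\rangle/\sqrt d$. Your heuristic $\tau\approx\sqrt d\mu^2+t$ needs a proof of this kind.
\item It bounds every residual incident to $i_\sigma$ by $K(\sqrt{\log n}+\log n/\sqrt d)\le\mu u_p/8$.
\item It then shows deterministically, on a high-probability event, that $i_\sigma$ has no neighbor in either class. No first- or second-moment count is needed.
\end{itemize}

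Your per-vertex route can be salvaged only by this same sharp computation. You would place $s$ and $s+t/\mu$ on opposite sides of the Gaussian extreme quantile, at distance $\asymp\sqrt{\log(1/p)}$. That yields subpolynomial factors $e^{\pm c\sqrt{\log n\,\log(1/p)}}$ rather than powers $n^{\pm\varepsilon}$.

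\textbf{Stage~2.} This is essentially the paper's exchangeability argument, with two fixes. Discard the reflection $G_{i1}\mapsto-G_{i1}$, which does not preserve $A^\mu$; the transposition of two isolated vertices is the right symmetry. Also make the pairing $\xi\mapsto\xi'$ a genuine involution, for instance by conditioning, as the paper does, on the labels outside the isolated set and on the number of positive labels among the isolated vertices.
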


\paragraph{Interpretation.}
The dependence on $\mu$ is nonmonotone because the threshold is recalibrated
to keep the marginal edge density equal to $p$.  At the scale
\[
        \mu\sqrt{\log(1/p)}
        \gtrsim \sqrt{\log n}+\frac{\log n}{\sqrt d},
\]
extreme first-coordinate fluctuations create isolated vertices in both label
classes.  When $d\ge\log n$, the condition simplifies to
$\mu\gtrsim\sqrt{\log n/\log(1/p)}$.  Thus increasing separation first reveals
the labels and eventually destroys the information carried by extreme
vertices.

\paragraph{Comparison with Li--Schramm.}
Li and Schramm~\citeyearpar[Theorem~1.6]{LiSchramm} assume
\[
        np\gg1,\qquad
        \log^{16}n\ll d<n,\qquad
        d^{-1/2}\ll\mu\le d^{-1/4}\log^{-1/2}n,
\]
and prove that a spectral
algorithm misclassifies a vanishing fraction of vertices.  Their weaker
almost-exact-recovery objective permits $np\gg1$.  By contrast,
\cref{thm:gmbm-exact} achieves exact recovery at the connectivity scale
$np\ge C_D\log n$ and replaces the ambient requirement
$d\gg\log^{16}n$ by the explicit signal conditions in \eqref{eq:GMM}.
Exact recovery by a purely spectral algorithm remains open.
 
\subsection{Proof overview}\label{sec:proof-overview}

The proofs follow three common stages.  First, a population decomposition
identifies the geometric signal and bounds the remaining kernel operator.
Second, decoupling and matrix concentration control the sampled remainder.
Third, the resulting operator estimates are converted into embedding recovery,
synchronization, or label recovery.

\paragraph{Population decomposition}
For the spherical model, the centered edge kernel is
\[
        h(u,v)=\one_{\{\langle u,v\rangle\ge\tau\}}-p.
\]
Rotational invariance makes $h$ canonical: its conditional mean vanishes when
either endpoint is fixed.  The Funk--Hecke formula diagonalizes the associated
integral operator on spherical harmonics.  Its degree-one component is
$d\lambda_1\langle u,v\rangle$, with $\lambda_1\asymp p\tau$; this component
carries the latent embedding.  The full nonconstant operator norm is
$O(p\tau)$ and determines the geometric term in the matrix-concentration
bound.  After removing degree one, the higher-harmonic norm improves to
$O(p\log(1/p)/d)$, which is the population error relevant for recovery.  These
estimates are proved in Appendix~\ref{sec:spherical-harmonics}.

The Gaussian kernel has an additional one-endpoint effect because the
conditional edge probability depends on the norm of the fixed vector.  Its
Hoeffding decomposition \cite{Hoeffding} separates this radial first
projection from a canonical remainder.  Double-centering removes the
resulting rank-two row-and-column term, up to a smaller diagonal correction.
The degree-one Gaussian chaos of the canonical remainder is the bilinear
signal $\beta\langle x,y\rangle$; removing it leaves the higher-order terms
controlled in Appendix~\ref{sec:gaussian-kernel-estimates}.

\paragraph{Decoupled matrix concentration}
Consider a canonical kernel $h$ with population-operator norm $\Lambda$.
Decoupling replaces its dependent symmetric kernel matrix by
\[
        G_{ij}=h(U_i,V_j),
\]
where $V_1,\ldots,V_n$ form an independent copy of the latent sample.
Conditional on $U=(U_1,\ldots,U_n)$, the columns $g_U(V_j)$ are independent,
and
\[
        GG^\top
        =
        \sum_{j=1}^n g_U(V_j)g_U(V_j)^\top
\]
is a sum of independent positive-semidefinite matrices.  Writing $G^0$ for
the zero-diagonal version, the proof works directly with $G^0$ by deleting
the $j$th coordinate from the $j$th column;
this does not increase either the column norm or the conditional covariance
scale.  A Hilbert-space covariance estimate controls its conditional mean
through the squared
population operator, while a kernel-section estimate controls the largest
column.  Matrix Chernoff \cite{Tropp} then gives
\[
        \|G^0\|_{\op}
        \lesssim n\Lambda+\sqrt{np\log n}.
\]
The distinction from a direct rectangular matrix Bernstein argument
\cite{KaushikRombergMuthukumar} is important.  Bernstein places a
$\sqrt{\log n}$ factor on the conditional covariance contribution and yields
$n\Lambda\sqrt{\log n}$.  Passing to the positive matrix $GG^\top$ keeps
$n\Lambda$ in the mean term and confines the logarithmic loss to the column
bound.  A Banach-space decoupling inequality \cite[Chapter~3]{deLaPenaGine}
then transfers the estimate back to the original symmetric matrix.  This route
avoids the trace-moment expansions used in
\cite{LiuMohantySchrammYangExpansion,LiSchramm}.  Its only
model-specific inputs are the removal of any one-endpoint projection and
bounds on the population operator and kernel sections.  The remaining
decoupling and matrix-concentration argument is therefore a common template
for other random graph models with latent vectors.  The core decoupled
argument is developed in
\cref{sec:matrix-tools} and reused for the Gaussian model in
\cref{sec:gaussian-models}.

\paragraph{From concentration to embedding recovery}
In the spherical model, the empirical degree-one component is
$d\lambda_1XX^\top$, where the rows of $X$ are the latent directions.  The
higher harmonics and sparse fluctuation form a perturbation satisfying
\[
        \|E\|_{\op}
        \lesssim \sqrt{np\log n}+\frac{np\log(1/p)}{d},
\]
whereas the signal eigengap is of order
$np\sqrt{\log(1/p)/d}$.  Frobenius-norm versions of the Davis--Kahan and
Procrustes bounds convert their ratio into the vector approximation error; a
deterministic Gram-matrix inequality then gives the Gram-matrix approximation
error.

For Gaussian vectors, double-centering first removes the radial Hoeffding
projection.  After the bilinear signal is separated, the higher-order
canonical remainder contributes
\[
        O\!\left(\sqrt{np\log n}
        +\frac{np\log^2(1/p)}{d}\right).
\]
The corresponding signal eigengap is
again of order $np\sqrt{\log(1/p)/d}$.  A comparison of these two scales,
together with control of the centered sample covariance, gives the Gaussian
recovery theorem.  The two perturbation arguments are carried out in
\cref{sec:embedding-proof,sec:gaussian-models}.

\paragraph{Applications}
Synchronization requires both adjacency and Laplacian control.  For each
fixed vertex, conditioning on its latent vector makes its incident edge
indicators independent.  Applying scalar concentration vertex by vertex and
then taking a union bound controls all degrees.  Combined with adjacency
concentration, this verifies the deterministic expansion criterion of
\citet[Theorem~1.10]{AbdallaEtAlExpander} and proves
\cref{thm:global-synchronization}.

In the Gaussian mixture block model, the difference between the within- and
between-community edge probabilities produces a rank-one population signal
aligned with the labels.  
Matrix and row-sum concentration provide the
estimates needed for a signed-Laplacian SDP certificate, which proves exact
recovery.  At large separation, isolated vertices and exchangeability give
the impossibility result.  Both arguments appear in \cref{sec:gmbm}.

\subsection{Numerical illustrations}
\label{sec:numerical-illustrations}

We include two simulation experiments to illustrate the
finite-sample behavior of the estimators in
\cref{thm:spherical-inner-product,thm:raw-gaussian-inner-product} and the
recovery transition in
\cref{thm:gmbm-exact,thm:gmbm-large-separation}.  

\begin{figure}[t]
    \centering
    \includegraphics[width=\textwidth]{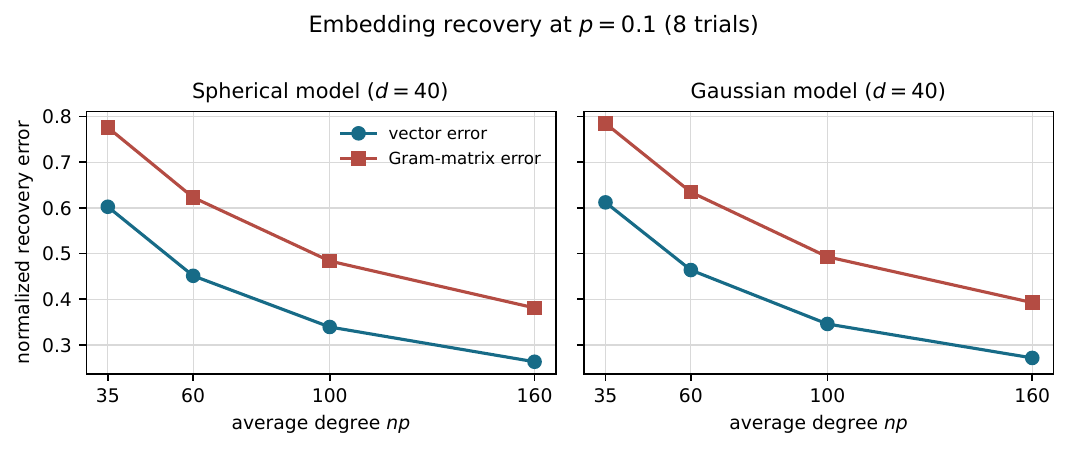}
    \caption{Embedding recovery for the spherical and
    Gaussian vector models at $p=0.1$.  We use $d=40$ in both models and
    $n\in\{350,600,1000,1600\}$.  The two curves are the normalized vector and
    Gram-matrix errors on the left-hand sides of
    \eqref{eq:spherical-vector-recovery}--\eqref{eq:spherical-gram-recovery}
    and \eqref{eq:gaussian-vector-recovery}--\eqref{eq:gaussian-gram-recovery},
    respectively.  Each point is the average over eight independent graphs.}
    \label{fig:embedding-recovery-simulations}
\end{figure}

In \cref{fig:embedding-recovery-simulations}, we apply the spectral
estimators defined before
\cref{thm:spherical-inner-product,thm:raw-gaussian-inner-product}.  Both the
vector approximation error and the relative Gram-matrix approximation error
decrease as the average degree grows.  The Gaussian errors are larger at the
same average degree, consistent with the additional radial fluctuation that
must be removed by double-centering.

\begin{figure}[t]
    \centering
    \includegraphics[width=\textwidth]{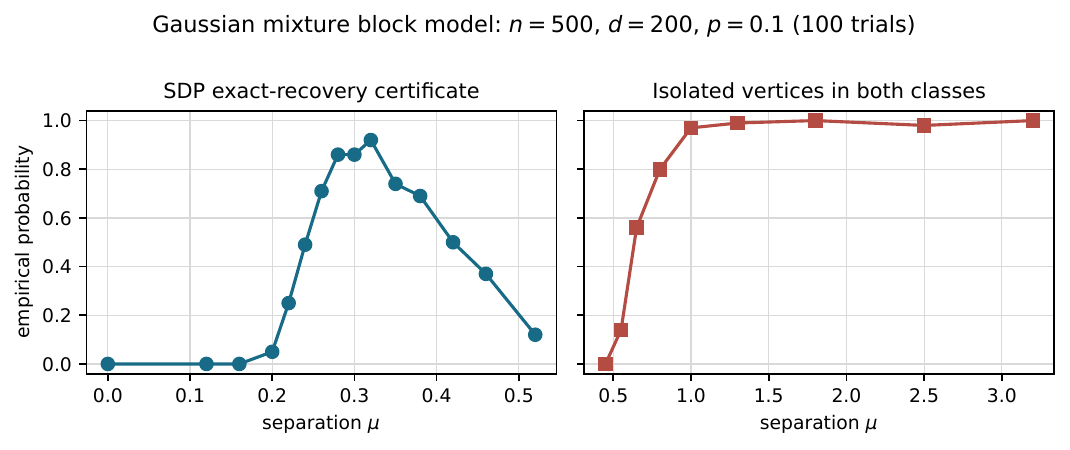}
    \caption{Recovery and its large-separation obstruction in
    the Gaussian mixture block model with $n=500$, $d=200$, and $p=0.1$.
    The left panel reports the frequency with which the signed-Laplacian dual
    certificate is strictly positive on $\one^\perp$, certifying that the SDP
    in \eqref{eq:gmbm-sdp} uniquely recovers all labels.  The right panel
    reports the frequency with which each label class contains an isolated
    vertex.  Each point is the empirical frequency over $100$ independent
    graphs.}
    \label{fig:gmbm-recovery-simulations}
\end{figure}

In the left panel of
\cref{fig:gmbm-recovery-simulations}, we conjugate $B_\mu$ by the true labels
and check the deterministic condition in
\cref{lem:signed-laplacian-certificate}.  The certificate succeeds with high
probability over a moderate range of $\mu$, illustrating
\cref{thm:gmbm-exact}.  As $\mu$ grows further, its success probability
decreases, while the probability of isolated vertices in both classes rises
to one.  This displays the nonmonotone effect of separation behind
\cref{thm:gmbm-large-separation}.

\FloatBarrier

\paragraph{Organization and notation}
The remaining sections contain the complete proofs.
\Cref{sec:matrix-tools,sec:embedding-proof} prove spherical matrix
concentration and embedding recovery.  \Cref{sec:gaussian-models} proves the
corresponding Gaussian vector results, \cref{sec:synchronization-proof} proves
the synchronization theorem, and \cref{sec:gmbm} proves the Gaussian mixture
recovery and impossibility theorems.  The auxiliary estimates for the three
models are collected in Appendices~\ref{sec:spherical-harmonics},
\ref{sec:gaussian-kernel-estimates}, and
\ref{sec:gmbm-auxiliary-estimates}, respectively.  The proof
of the dimension-free covariance estimate is deferred to
Appendix~\ref{sec:hilbert-covariance-proof}.

Throughout the paper, constants may change from line to line.  Constants with
subscript $D$ depend only on the requested polynomial tail exponent $D$, and
absolute constants are denoted by $c,C,c_0,C_0$.  We write
$\|\cdot\|_{\op}$ for the operator norm, $\|\cdot\|_{\HS}$ for the
Hilbert--Schmidt norm, $\|\cdot\|_{\mathrm F}$ for the matrix Frobenius
norm, and
$L^2_0(\sigma)$ for the subspace of mean-zero functions on the sphere.

\section{A decoupling--Chernoff framework for spectral concentration}
\label{sec:matrix-tools}

This section isolates the common matrix-concentration argument
used throughout the paper.  The model-specific analysis consists of removing
any one-endpoint conditional mean and controlling the kernel sections and the
population integral operator.  Once these inputs are available, the proof has
four steps:
\begin{enumerate}
\item replace the symmetric kernel matrix by a rectangular
matrix built from two independent latent samples;
\item control the conditional covariance of its columns by a
dimension-free Hilbert-space estimate;
\item apply matrix Chernoff to the resulting sum of independent
positive-semidefinite rank-one matrices; and
\item decouple back to the original symmetric matrix.
\end{enumerate}
The core estimate is proved in
\cref{prop:decoupled-G}.  The proof of
\cref{thm:main-concentration} then reduces to checking its hypotheses for a
spherical cap.  The same argument is reused for the higher-harmonic and
Gaussian kernels in \cref{sec:embedding-proof,sec:gaussian-models}.

Throughout the proof sections, we use the shorthand
\[L=1+\log(1/p).\]

\subsection{Two concentration inputs}
\label{sec:concentration-tools}

The next two lemmas are the probabilistic inputs used after decoupling.  The
first is a matrix Chernoff inequality for sums of positive-semidefinite
matrices.  The second is a dimension-free covariance estimate for random
vectors in a Hilbert space, tailored to the kernel sections $h(u,\cdot)$.  Its
proof is a one-sided positive-semidefinite specialization of the
dimension-free matrix Laplace method of Hsu et al.~\citeyearpar{HsuKakadeZhang}; see
also Oliveira~\citeyearpar{Oliveira} for closely related Hilbert-space rank-one
covariance concentration.

For clarity, if $\phi\in\cH$, then
$\phi\otimes\phi$ denotes the rank-one operator
$x\mapsto\langle\phi,x\rangle\phi$.  Thus
$\sum_i\phi_i\otimes\phi_i$ is the unnormalized sample covariance operator.
The point of the second lemma is that its bound depends on the population
operator norm and the section bound, but not on the dimension of $\cH$.

\begin{lemma}[Matrix Chernoff, Theorem~5.1 in \citet{Tropp}]\label{lem:matrix-chernoff}
Let $Y_1,\ldots,Y_N$ be independent positive-semidefinite $m\times m$
matrices.  Suppose, for some $b>0$, that
\[
        0\preceq Y_j\preceq b I_m
        \qquad\text{almost surely}
\]
and put
\[
        \mu=\left\|\sum_{j=1}^N\E Y_j\right\|_{\op}.
\]
Then, for every $s\ge0$,
\[
        \Pp\left\{
        \left\|\sum_{j=1}^NY_j\right\|_{\op}
        >
        C\bigl(\mu+b(s+\log m)\bigr)
        \right\}
        \le e^{-s}.
\]
\end{lemma}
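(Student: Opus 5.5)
This lemma is a repackaging of the upper-tail matrix Chernoff bound of \citet[Theorem~5.1]{Tropp}, and I would derive it directly from that bound. Since each $Y_j\succeq0$, the sum $S=\sum_jY_j$ is positive semidefinite, so $\|S\|_{\op}=\lambda_{\max}(S)$; similarly $\mu=\lambda_{\max}(\sum_j\E Y_j)$. The statement is homogeneous under $Y_j\mapsto Y_j/b$, $\mu\mapsto\mu/b$, so I would first reduce to $b=1$.

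\emph{Input from Tropp.} For independent $0\preceq Y_j\preceq I_m$ and any $\bar\mu\ge\lambda_{\max}(\sum_j\E Y_j)$, Tropp's theorem gives, for $\delta\ge0$,
\[
        \Pp\{\lambda_{\max}(S)\ge(1+\delta)\bar\mu\}
        \le m\left(\frac{e^{\delta}}{(1+\delta)^{1+\delta}}\right)^{\bar\mu}.
\]
Using an upper bound $\bar\mu$ in place of the exact $\mu_{\max}$ is legitimate. Tropp's proof only uses $\lambda_{\max}(\sum_j\E Y_j)\le\bar\mu$, which enters through the bound $\E\,\mathrm{tr}\exp(\theta S)\le m\exp\bigl((e^\theta-1)\bar\mu\bigr)$ for $\theta>0$, followed by optimizing over $\theta$. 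I would note this explicitly, since it is the one point needing care.

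\emph{Choice of parameters.} Set
\[
        \bar\mu=\max\{\mu,\;s+\log m\}
\]
and take $1+\delta=e^2$. Then
\[
        \frac{e^{\delta}}{(1+\delta)^{1+\delta}}
        \le\left(\frac{e}{1+\delta}\right)^{1+\delta}
        =e^{-e^2}.
\]
Hence
\[
        \Pp\{\lambda_{\max}(S)\ge e^2\bar\mu\}
        \le m\,e^{-e^2\bar\mu}
        \le m\,e^{-e^2(s+\log m)}
        \le e^{-s}.
\]
The last inequality uses $e^2\ge1$, $s\ge0$ and $\log m\ge0$. Since $\bar\mu\le\mu+s+\log m$, the event $\|S\|_{\op}>C(\mu+s+\log m)$ with $C=e^2$ is contained in $\{\lambda_{\max}(S)\ge e^2\bar\mu\}$. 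Undoing the normalization by $b$ then gives the stated inequality with an absolute constant $C$.

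\emph{Main obstacle.} The only delicate step is the small-$\mu$ regime, where $\mu\ll b(s+\log m)$ and Tropp's bound at the exact mean is useless. I handle this by inflating the mean parameter to $\bar\mu$ before applying the tail bound, which is valid by the monotonicity in $\bar\mu$ of the Laplace-transform argument noted above. Everything else is routine calculus.
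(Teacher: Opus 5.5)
Your proposal is correct and takes the same route as the paper, which simply cites \citet[Theorem~5.1]{Tropp}; you add the routine reduction from Tropp's form to the stated one. The key details check out: replacing $\mu$ by $\bar\mu=\max\{\mu,s+\log m\}$ is justified because Tropp's Laplace-transform bound $m\exp\bigl((e^\theta-1)\mu\bigr)$ is monotone in the mean parameter, and taking $1+\delta=e^2$ then gives the stated inequality with $C=e^2$ after rescaling by $b$.
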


\begin{lemma}[Hilbert covariance Chernoff]\label{lem:hilbert-cov}
Let $n\ge2$, and let $\phi_1,\ldots,\phi_n$ be i.i.d.\ random vectors in a
separable Hilbert space $\cH$.  Assume, for some $p>0$, that
\[
        \|\phi_i\|_{\cH}^2\le p
        \qquad\text{almost surely}.
\]
Let
\[
        \cC=\E(\phi_1\otimes\phi_1),
        \qquad
        \kappa=\|\cC\|_{\op}.
\]
Then, for every $D>0$, with probability at least $1-n^{-D}$,
\[
        \left\|
        \sum_{i=1}^n \phi_i\otimes\phi_i
        \right\|_{\op}
        \le
        C_D(n\kappa+p\log n).
\]
\end{lemma}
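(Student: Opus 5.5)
The plan is to run the dimension-free matrix Laplace method for the positive semidefinite sum $S=\sum_{i=1}^n\phi_i\otimes\phi_i$, following Hsu et al., so that only the trace of a nonnegative function of $\cC$ enters, and to pay for the possibly infinite dimension of $\cH$ with a truncation. First a normalization. Replacing $\phi_i$ by $\phi_i/\sqrt p$ reduces to $p=1$, with $\kappa$ replaced by $\kappa/p$. For $p=1$ the goal is $\|S\|_{\op}\le C_D(n\kappa+\log n)$. Since $\tr\cC=\E\|\phi_1\|^2\le1$, the operator $\cC$ is trace class. Moreover $S$ has rank at most $n$, with range inside $\mathrm{span}\{\phi_i\}$.

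The key step is the exponential moment bound. For $0<\theta\le1$ and a PSD operator $Y=\phi\otimes\phi$ with $\|Y\|\le1$, convexity gives $e^{\theta Y}\preceq I+(e^\theta-1)Y$. Tropp's subadditivity of matrix cumulant generating functions (Lieb's concavity), applied in finite-dimensional compressions, then yields
\[
 \E\tr\bigl(e^{\theta S}-I\bigr)\le\tr\bigl(\exp((e^\theta-1)n\cC)-I\bigr).
\]
We use $\tr(e^{\theta S}-I)$ rather than $\tr e^{\theta S}$ because it is finite in infinite dimensions. Since $e^{x}-1\le x e^{x}$ for $x\ge0$, the right side is at most $(e^\theta-1)n\,\tr(\cC)\,e^{(e^\theta-1)n\kappa}\le 2n\,e^{2n\kappa}$ at $\theta=1$. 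The factor $\tr\cC\le1$ absorbs the dimension, and the factor $n$ costs only $\log n$.

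To conclude, note $e^{\|S\|}-1\le\tr(e^{S}-I)$ because $S\succeq0$. Markov's inequality then gives
\[
 \Pp\{\|S\|>t\}\le\frac{2n\,e^{2n\kappa}}{e^t-1}.
\]
Taking $t=2n\kappa+(D+2)\log n$ gives probability at most $n^{-D}$. This is the stated bound with $C_D$ depending only on $D$.

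The main obstacle is justifying Lieb/Tropp subadditivity in a separable Hilbert space. I would handle it by projecting onto finite-dimensional subspaces $P_k\cH$ that increase to $\cH$. The compressions $P_kSP_k$ satisfy the inequality with $P_k\cC P_k$, and $\|P_k\cC P_k\|\le\kappa$ and $\tr P_k\cC P_k\le1$ hold uniformly in $k$. As $k\to\infty$, $P_kSP_k\to S$ in norm almost surely, since $S$ has finite rank, and Fatou's lemma passes the moment bound to the limit. An alternative is to work directly with the $n\times n$ Gram matrix $(\langle\phi_i,\phi_j\rangle)$, but the compression route is cleaner.
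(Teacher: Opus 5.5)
Your proposal is correct and follows essentially the same route as the paper: you compress to finite rank, use the convexity bound $e^{\theta Y}\preceq I+(e^\theta-1)Y$ together with Lieb--Tropp subadditivity, bound $\tr(e^{\theta S}-I)$ by $a\,e^{a\kappa}\tr\cC$ at $\theta=1/p$ (your normalization $p=1$), and conclude by Markov. The differences are minor: the paper removes the projection by applying the tail bound uniformly in $m$ to the increasing events $\{\|\Pi_mS\Pi_m\|_{\op}>t\}$, whereas you pass the trace-exponential bound to the limit by Fatou, and your choice $t=2n\kappa+(D+2)\log n$ needs $D+3$ in place of $D+2$ to cover $n=2$.
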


Its proof is deferred to
Appendix~\ref{sec:hilbert-covariance-proof}; this keeps the main argument
focused on how the concentration inputs fit together.

\subsection{The decoupled rectangular matrix}
\label{sec:decoupled-matrix}

For a symmetric canonical kernel $k$, let $q$ bound the squared
$L^2$ norm of every section $k(u,\cdot)$, and let $\Lambda$ be the operator
norm of its population integral operator.  If $G^0$ denotes the
zero-diagonal decoupled matrix and a typical column has squared Euclidean norm
$O(nq)$, the calculation below gives
\begin{equation}
        \|G^0\|_{\op}
        \lesssim
        \sqrt{nq\log n}+n\Lambda.
\label{eq:abstract-decoupled-output}
\end{equation}
The two terms have different origins: the first is the
fluctuation of an individual sampled column, while the second is the
population covariance contribution.  Thus applying the method to a new
latent-vector model amounts to the following checks: remove any one-endpoint
conditional mean, bound the section norm, bound the population operator, and
verify the column tail.

We now carry out these steps for the spherical cap kernel.
Write
\[
        h(u,v)=\one_{\{\langle u,v\rangle\ge\tau\}}-p,
        \qquad u,v\in\Sd.
\]
Rotational invariance gives
\begin{equation}
        \int_{\Sd}h(u,v)\,d\sigma(v)=0
        \qquad\text{for every }u\in\Sd,
\label{eq:h-canonical-main}
\end{equation}
so $h$ is canonical.  Let $T_h:L^2(\sigma)\to L^2(\sigma)$ be its integral
operator and set
\[
        (T_hf)(u)=\int_{\Sd}h(u,v)f(v)\,d\sigma(v),
        \qquad
        \Lambda=\|T_h\|_{\op}.
\]

We next prove \eqref{eq:abstract-decoupled-output} with $q=p$ for this
kernel.

Let $V_1,\ldots,V_n$ be an independent copy of $U_1,\ldots,U_n$ and define
\begin{equation}
        G_{ij}:=h(U_i,V_j),
        \qquad 1\le i,j\le n.
\label{eq:G-dec-def}
\end{equation}
Because the graph has no self-loops, the matrix required by
decoupling is
\[
        G^0_{ij}:=h(U_i,V_j)\one_{\{i\ne j\}}.
\]
For fixed $U=(U_1,\ldots,U_n)$ and $v\in\Sd$, set
\[
        g_U(v)=\bigl(h(U_1,v),\ldots,h(U_n,v)\bigr)^\top.
\]
Then
\[
        G=\sum_{j=1}^n g_U(V_j)e_j^\top.
\]
If $D_j=I_n-e_je_j^\top$, then the $j$th column of $G^0$
is $D_jg_U(V_j)$.  We work with these columns directly, which avoids a
separate diagonal correction at the end of the proof.
Conditioned on $U$, the columns are independent and centered by
\eqref{eq:h-canonical-main}.

All ``good'' events in the conditional-covariance and
column-bound arguments below depend only on the first
sample $U$.  Once such a realization is fixed, the remaining probability
estimates concern only the independent columns indexed by the $V_j$'s.  This
separation is what allows an ordinary matrix concentration inequality to be
used despite the dependence in the original graph.

\subsubsection{Conditional covariance}
\label{sec:conditional-covariance}

Conditioned on $U$, a full column $g_U(V)$ has
second-moment matrix $\Sigma_U$.  The $j$th zero-diagonal column has
second-moment matrix $D_j\Sigma_UD_j$, so the matrix Chernoff mean will be
controlled once $\|\Sigma_U\|_{\op}$ is bounded.

\begin{samepage}
This matrix is the Gram matrix of the kernel sections,
whereas their covariance operator on $L^2(\sigma)$ is $T_h^2$.  More
precisely, for $u\in\Sd$ write
$\phi_u=h(u,\cdot)\in L^2(\sigma)$.  If $V\sim\sigma$ is
independent of $U$, define
$\Sigma_U:=\E_V[g_U(V)g_U(V)^\top\mid U]$.  Then, for every $i,k\in[n]$,
\begin{align*}
        (\Sigma_U)_{ik}
        &=e_i^\top\E_V\bigl[g_U(V)g_U(V)^\top\mid U\bigr]e_k \\
        &=\E_V\bigl[h(U_i,V)h(U_k,V)\mid U\bigr] \\
        &=\int_{\Sd}h(U_i,v)h(U_k,v)\,d\sigma(v) \\
        &=\langle h(U_i,\cdot),h(U_k,\cdot)\rangle_{L^2(\sigma)}
          =\langle \phi_{U_i},\phi_{U_k}\rangle_{L^2(\sigma)}.
\end{align*}
\end{samepage}
The nonzero eigenvalues of $\Sigma_U$ agree with those of
\begin{equation}
        \cS_U:=\sum_{i=1}^n \phi_{U_i}\otimes\phi_{U_i}.
\label{eq:cap-empirical-section-covariance}
\end{equation}
Moreover,
\begin{equation}
        \|\phi_u\|_2^2
        =
        \int h(u,v)^2\,d\sigma(v)
        =
        p(1-p)\le p,
\label{eq:cap-section-norm}
\end{equation}
and
\[
        \bigl[\E_U(\phi_U\otimes\phi_U)f\bigr](x)
        =\int_{\Sd}h(u,x)\!\left(\int_{\Sd}h(u,y)f(y)\,d\sigma(y)\right)
          d\sigma(u)
        =(T_h^2f)(x),
\]
where the last equality uses the symmetry
$h(u,x)=h(x,u)$.  Hence
\begin{equation}
        \E_U(\phi_U\otimes\phi_U)=T_h^2,
        \qquad
        \|T_h^2\|_{\op}=\Lambda^2.
\label{eq:cap-section-covariance}
\end{equation}
Thus \cref{lem:hilbert-cov} controls $\|\Sigma_U\|_{\op}$.

\begin{lemma}[Good covariance event]\label{lem:good-cov}
For every $D>0$, with probability at least $1-n^{-D-4}$ over
$U_1,\ldots,U_n$,
\[
        \|\Sigma_U\|_{\op}
        \le
        C_D(n\Lambda^2+p\log n).
\]
\end{lemma}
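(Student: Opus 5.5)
The plan is to apply \cref{lem:hilbert-cov} directly to the kernel sections $\phi_i=\phi_{U_i}=h(U_i,\cdot)$ in $\cH=L^2(\sigma)$, and then transfer the bound from $\cS_U$ to $\Sigma_U$.

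First, $L^2(\sigma)$ is a separable Hilbert space. The map $u\mapsto\phi_u$ is measurable into $L^2(\sigma)$: it is continuous off a null set, or one can simply check weak measurability via $u\mapsto\langle\phi_u,f\rangle=(T_hf)(u)$. Hence $\phi_{U_1},\ldots,\phi_{U_n}$ are i.i.d.\ random elements of $\cH$. By \eqref{eq:cap-section-norm}, $\|\phi_{U_i}\|_{\cH}^2=p(1-p)\le p$ almost surely, so the section hypothesis of \cref{lem:hilbert-cov} holds with the same parameter $p$. By \eqref{eq:cap-section-covariance}, the population covariance is $\cC=\E(\phi_U\otimes\phi_U)=T_h^2$. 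Therefore $\kappa=\|T_h^2\|_{\op}=\Lambda^2$; since $T_h$ is self-adjoint (the kernel $h$ is symmetric and bounded, so $T_h$ is Hilbert--Schmidt), $\|T_h^2\|_{\op}=\|T_h\|_{\op}^2$.

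Next, invoke \cref{lem:hilbert-cov} with exponent $D+4$ in place of $D$. This gives, with probability at least $1-n^{-D-4}$,
\[
\|\cS_U\|_{\op}\le C_{D+4}(n\Lambda^2+p\log n).
\]
Renaming $C_{D+4}$ as $C_D$ yields the stated constant.

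Finally, identify $\|\Sigma_U\|_{\op}$ with $\|\cS_U\|_{\op}$. Let $\Phi:\R^n\to\cH$ be the synthesis operator $a\mapsto\sum_i a_i\phi_{U_i}$. Then $\Phi^*\Phi=\Sigma_U$, since its entries are $\langle\phi_{U_i},\phi_{U_k}\rangle$ as computed above, and $\Phi\Phi^*=\cS_U$. Both operators have norm $\|\Phi\|_{\op}^2$, so $\|\Sigma_U\|_{\op}=\|\cS_U\|_{\op}$, and the bound above transfers directly.

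There is no real obstacle here; the only points needing care are the measurability of $U\mapsto\phi_U$ as an $\cH$-valued random variable, and the choice of exponent $D+4$ in the lemma to get the required tail.
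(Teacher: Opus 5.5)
Your proposal is correct and matches the paper's proof. You apply \cref{lem:hilbert-cov} with exponent $D+4$ to the sections $\phi_{U_i}=h(U_i,\cdot)$, using \eqref{eq:cap-section-norm} and \eqref{eq:cap-section-covariance}, and then transfer the bound from $\cS_U$ to $\Sigma_U$; the paper does this by noting the two operators share nonzero eigenvalues, and your $\Phi^*\Phi$ versus $\Phi\Phi^*$ argument is the same fact, while your remarks on measurability and on $\|T_h^2\|_{\op}=\|T_h\|_{\op}^2$ fill in details the paper leaves implicit.
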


\begin{proof}
Apply \cref{lem:hilbert-cov} with tail exponent $D+4$ to
$\phi_{U_i}=h(U_i,\cdot)$.  The section and population covariance bounds are
\eqref{eq:cap-section-norm} and
\eqref{eq:cap-section-covariance}.  Since $\Sigma_U$ and the operator in
\eqref{eq:cap-empirical-section-covariance} have the same
nonzero eigenvalues, the claimed bound follows from \cref{lem:hilbert-cov}.
\end{proof}

\subsubsection{Uniform column bound}
\label{sec:column-bound}

The second concentration input is an upper tail bound for the column norm.  For
fixed $v$, this norm is controlled by a binomial count of the sample points
falling in the cap centered at $v$.

For fixed $v$, the variables
\[
        \one_{\{\langle U_i,v\rangle\ge\tau\}},
        \qquad i=1,\ldots,n,
\]
are independent Bernoulli$(p)$ random variables.  Since
\[
        h(U_i,v)^2\le
        2\one_{\{\langle U_i,v\rangle\ge\tau\}}+2p^2,
\]
we have
\[
        \|g_U(v)\|_2^2\le 2N_v+2np^2,
\]
where $N_v\sim\mathrm{Bin}(n,p)$ under the randomness of $U$ for fixed $v$.
Thus, when $np\ge C_D\log n$, we set $R=C_D\sqrt{np}$.  For $C_D$
sufficiently large, the standard binomial Chernoff bound
\cite[Section~2.3]{VershyninHDP} gives
\begin{equation}
        \Pp_{U,V}\{\|g_U(V)\|_2>R\}\le n^{-2D-7}.
\label{eq:column-joint-tail}
\end{equation}
For fixed $U$, define
\[
        q_U:=\Pp_V\{\|g_U(V)\|_2>R\}.
\]

\begin{lemma}[Conditional column-tail bound]\label{lem:column-tail}
For every $D>0$ there is a constant $C_D>0$ such that, if
$np\ge C_D\log n$ and $R=C_D\sqrt{np}$ in the definition of $q_U$, then with
probability at least $1-n^{-D-4}$ over $U$,
\[
        q_U\le n^{-D-3}.
\]
\end{lemma}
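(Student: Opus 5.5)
This is a Markov inequality applied to the joint tail bound \eqref{eq:column-joint-tail}, using Fubini to pass from the joint probability to the conditional one. Since $U$ and $V$ are independent, Fubini gives
\[
        \E_U q_U
        =\E_U\,\Pp_V\{\|g_U(V)\|_2>R\}
        =\Pp_{U,V}\{\|g_U(V)\|_2>R\}
        \le n^{-2D-7}.
\]
The random variable $q_U\in[0,1]$ is nonnegative and measurable in $U$; measurability holds because the event $\{\|g_U(V)\|_2>R\}$ is jointly measurable. Markov's inequality then yields
\[
        \Pp_U\{q_U>n^{-D-3}\}
        \le n^{D+3}\,\E_U q_U
        \le n^{D+3}n^{-2D-7}
        =n^{-D-4}.
\]
This is exactly the claimed statement.

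The only step that needs care is the justification of \eqref{eq:column-joint-tail} with the stated exponent $2D+7$, since that is what leaves room for the Markov loss. I would verify it as follows.

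\textbf{Reduction to a binomial tail.} For fixed $v$, the count $N_v\sim\mathrm{Bin}(n,p)$, and we have $\|g_U(v)\|_2^2\le 2N_v+2np^2\le 2N_v+2np$. Hence the event $\|g_U(v)\|_2>R=C_D\sqrt{np}$ implies
\[
        N_v>\tfrac12 C_D^2np-np\ge e^2np
\]
once $C_D$ is large.

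\textbf{Chernoff bound.} The Chernoff bound $\Pp\{N\ge t\}\le (enp/t)^t$ for $t\ge e^2np$ gives a bound of at most $e^{-t}\le e^{-c C_D^2np}$. Under $np\ge C_D\log n$ this is at most $n^{-2D-7}$ for $C_D$ large depending on $D$.

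\textbf{Integration over $V$.} Integrating this bound over $V\sim\sigma$, which is independent of $U$, gives \eqref{eq:column-joint-tail} uniformly. This works because the fixed-$v$ bound does not depend on $v$.

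Since \eqref{eq:column-joint-tail} is already asserted in the text, no real obstacle remains; the choice of constants is what makes the exponents match.
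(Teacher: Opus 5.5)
Your proposal is correct and follows the paper's proof exactly: Fubini gives $\E_U q_U=\Pp_{U,V}\{\|g_U(V)\|_2>R\}\le n^{-2D-7}$ from \eqref{eq:column-joint-tail}, and Markov's inequality at level $n^{-D-3}$ yields failure probability $n^{-D-4}$. Your check of \eqref{eq:column-joint-tail} via $\|g_U(v)\|_2^2\le 2N_v+2np^2$ and the binomial Chernoff bound also matches the paper's justification preceding the lemma.
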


\begin{proof}
By \eqref{eq:column-joint-tail},
$\E_U q_U\le n^{-2D-7}$.  Markov's inequality gives
\[
        \Pp_U\{q_U>n^{-D-3}\}
        \le n^{D+3}\E_U q_U
        \le n^{-D-4}.
\]
\end{proof}

\subsubsection{Matrix Chernoff}

\begin{proposition}[Decoupled matrix bound]
\label{prop:decoupled-G}
For every $D>0$ there is a constant $C_D>0$ such that, if
$np\ge C_D\log n$, then
\[
        \Pp\left\{
        \|G^0\|_{\op}
        >
        C_D\left(\sqrt{np\log n}+n\Lambda\right)
        \right\}
        \le n^{-D-1}.
\]
\end{proposition}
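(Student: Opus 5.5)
We bound $\|G^0\|_{\op}^2=\|G^0(G^0)^\top\|_{\op}$. Write $G^0(G^0)^\top=\sum_{j=1}^n D_jg_U(V_j)g_U(V_j)^\top D_j$. Conditional on $U$, this is a sum of independent positive-semidefinite rank-one matrices. We intersect the two $U$-measurable good events of \cref{lem:good-cov} and \cref{lem:column-tail}. Together they fail with probability at most $2n^{-D-4}$. On their intersection we apply \cref{lem:matrix-chernoff} conditionally on $U$, with $m=n$.

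The column norm is only bounded with high probability, not almost surely, so we truncate. Set $Y_j=D_jg_U(V_j)g_U(V_j)^\top D_j\one_{\{\|g_U(V_j)\|_2\le R\}}$ with $R=C_D\sqrt{np}$. Then $0\preceq Y_j\preceq R^2I_n$, since deleting a coordinate does not increase the norm. On the event of \cref{lem:column-tail}, a union bound over $j$ shows that the truncation is inactive for all $j$, except with conditional probability at most $nq_U\le n^{-D-2}$. For the Chernoff mean, $\E[Y_j\mid U]\preceq D_j\Sigma_UD_j$. Hence
\[
\Bigl\|\sum_{j=1}^n\E[Y_j\mid U]\Bigr\|_{\op}\le\sum_{j=1}^n\|D_j\Sigma_UD_j\|_{\op}\le n\|\Sigma_U\|_{\op}\le C_D\bigl(n^2\Lambda^2+np\log n\bigr)
\]
on the good covariance event. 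Here positivity gives $\|D_j\Sigma_UD_j\|_{\op}\le\|\Sigma_U\|_{\op}$.

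Now apply \cref{lem:matrix-chernoff} with $s=(D+2)\log n$ and $b=R^2=C_D np$. With conditional probability at least $1-n^{-D-2}$,
\[
\Bigl\|\sum_j Y_j\Bigr\|_{\op}\le C\bigl(C_D(n^2\Lambda^2+np\log n)+C_Dnp\,(D+3)\log n\bigr)\le C_D'\bigl(n^2\Lambda^2+np\log n\bigr).
\]
Taking square roots and using $\sqrt{a+b}\le\sqrt a+\sqrt b$ gives $\|G^0\|_{\op}\le C_D(n\Lambda+\sqrt{np\log n})$. Integrating over $U$, the total failure probability is at most $2n^{-D-4}+n^{-D-2}+n^{-D-2}\le n^{-D-1}$ for $n\ge 2$ after adjusting constants. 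The hypothesis $np\ge C_D\log n$ is needed only through \cref{lem:column-tail}.

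The main point to get right is the conditional structure. The good events must depend only on $U$, so that the Chernoff step involves only the independent $V_j$. The truncation must not spoil the mean bound, which holds because truncation only decreases the positive-semidefinite summands. The rest is bookkeeping. No additional diagonal correction is needed, since we work with the columns $D_jg_U(V_j)$ directly.
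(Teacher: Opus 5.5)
Your proposal is correct and follows the paper's proof essentially step for step. You use the same $U$-measurable good events from \cref{lem:good-cov,lem:column-tail}, the same truncation at $R=C_D\sqrt{np}$ of the zero-diagonal columns $D_jg_U(V_j)$, conditional matrix Chernoff for the sum of positive-semidefinite rank-one terms, and the same removal of the truncation via $nq_U\le n^{-D-2}$. The differences are cosmetic: you bound the Chernoff mean by the triangle inequality $n\|\Sigma_U\|_{\op}$ rather than via the identity $\sum_jD_j\Sigma_UD_j=(n-2)\Sigma_U+\diag(\Sigma_U)$, and your choice $s=(D+2)\log n$ makes the final probability sum fit under $n^{-D-1}$ only for $n\ge3$, which is automatic under $np\ge C_D\log n$ or can be fixed by taking $s=(D+3)\log n$ as the paper does.
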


\begin{proof}
\proofstep{Truncation and the conditional mean.}
Fix a realization of $U$ satisfying the events in
\cref{lem:column-tail,lem:good-cov}, and truncate each column
at the deterministic level
$R=C_D\sqrt{np}$ required by matrix Chernoff.  Namely, set
\[
        \widetilde g_j(V):=
        D_jg_U(V)\one_{\{\|g_U(V)\|_2\le R\}}.
\]
Let $\widetilde G^0$ be the matrix with columns
$\widetilde g_1(V_1),\ldots,\widetilde g_n(V_n)$.  Then
\[
        \widetilde G^0(\widetilde G^0)^\top
        =
        \sum_{j=1}^n
        \widetilde g_j(V_j)\widetilde g_j(V_j)^\top.
\]
The summands
\[
        Y_j:=\widetilde g_j(V_j)\widetilde g_j(V_j)^\top
\]
are conditionally independent positive-semidefinite matrices and satisfy
\[
        0\preceq Y_j\preceq R^2I_n.
\]
Moreover,
\begin{equation}
        \sum_{j=1}^n\E[Y_j\mid U]
        \preceq
        \sum_{j=1}^nD_j\Sigma_UD_j
        =(n-2)\Sigma_U+\diag(\Sigma_U).
\label{eq:zero-diagonal-covariance-sum}
\end{equation}
The identity follows by expanding
$D_j\Sigma_UD_j$ and summing over $j$.  Since $\Sigma_U\succeq0$,
$\diag(\Sigma_U)\preceq\|\Sigma_U\|_{\op}I_n$; hence the last display is
bounded above by $(n-1)\|\Sigma_U\|_{\op}I_n$.
On the good covariance event of \cref{lem:good-cov},
\begin{equation}
        \mu_U:=
        \left\|\sum_{j=1}^n\E[Y_j\mid U]\right\|_{\op}
        \le
        (n-1)\|\Sigma_U\|_{\op}
        \le
        C_D(n^2\Lambda^2+np\log n).
\label{eq:decoupled-mu-bound}
\end{equation}

\proofstep{Conditional matrix Chernoff.}
Applying \cref{lem:matrix-chernoff}, conditionally on $U$, to
$\widetilde G^0(\widetilde G^0)^\top$ with $b=R^2$ and
$s=(D+3)\log n$,
we obtain
\[
        b(s+\log n)=R^2(D+4)\log n\le C_Dnp\log n.
\]
Combining \cref{lem:matrix-chernoff} with
\eqref{eq:decoupled-mu-bound} shows that, with conditional
probability at least $1-n^{-D-3}$,
\[
        \|\widetilde G^0(\widetilde G^0)^\top\|_{\op}
        \le
        C_D(n^2\Lambda^2+np\log n).
\]
Taking square roots gives
\[
        \|\widetilde G^0\|_{\op}
        \le
        C_D\left(n\Lambda+\sqrt{np\log n}\right).
\]

\proofstep{Removal of the truncation.}
Finally, on the event
$\max_{1\le j\le n}\|g_U(V_j)\|_2\le R$, which has conditional probability at
least $1-nq_U\ge1-n^{-D-2}$ by
\cref{lem:column-tail}, the original matrix satisfies
\[
        G^0=\widetilde G^0.
\]
Combining this conditional probability with the events in
\cref{lem:column-tail,lem:good-cov} and then integrating over
$U$ proves the claim.
\end{proof}

\subsection{Decoupling back to the symmetric matrix}
\label{sec:decoupling-symmetric}

Having bounded the independent-column surrogate, we transfer the estimate back
to the original symmetric matrix.  The canonical property
\eqref{eq:h-canonical-main} is the condition that makes the standard order-two
decoupling inequality applicable.

\begin{lemma}[Operator-norm decoupling]\label{lem:decoupling}
Let $U_1,\ldots,U_n$ and $V_1,\ldots,V_n$ be independent samples from a
probability space.  Let $k$ be a symmetric canonical kernel:
\[
        \E[k(u,U)]=0
        \qquad\text{for every fixed }u.
\]
Define
\[
        M_{ij}=k(U_i,U_j)\one_{i\ne j},
        \qquad
        G^0_{ij}=k(U_i,V_j)\one_{i\ne j}.
\]
There is a universal constant $C$ such that, for every $t>0$,
\[
        \Pp\{\|M\|_{\op}>t\}
        \le
        C\,\Pp\{\|G^0\|_{\op}>t/C\}.
\]
\end{lemma}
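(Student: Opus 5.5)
The plan is to deduce the lemma from the de la Peña--Montgomery-Smith decoupling inequality for degenerate $U$-statistics with values in a Banach space \cite[Chapter~3]{deLaPenaGine}. That inequality states: for Banach-space-valued kernels $f_{ij}$ and independent sequences $(U_i)$ and $(V_j)$,
\[
        \Pp\Bigl\{\Bigl\|\sum_{i\ne j}f_{ij}(U_i,U_j)\Bigr\|>t\Bigr\}
        \le C\,\Pp\Bigl\{\Bigl\|\sum_{i\ne j}f_{ij}(U_i,V_j)\Bigr\|>t/C\Bigr\}
\]
with a universal constant $C$. This holds for arbitrary, not necessarily symmetric, measurable $f_{ij}$ in the order-two case, so the constant does not depend on $n$ or on the ambient dimension.

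First, take the Banach space to be $(\R^{n\times n},\|\cdot\|_{\op})$, which is finite dimensional and hence separable. Set
\[
        f_{ij}(u,v)=k(u,v)\,e_ie_j^\top,\qquad i\ne j.
\]
Then $\sum_{i\ne j}f_{ij}(U_i,U_j)=M$ and $\sum_{i\ne j}f_{ij}(U_i,V_j)=G^0$ exactly. Substituting these identities into the inequality gives the claim directly.

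Second, check the hypotheses of the inequality. Its tail form (Theorem~3.1.1 of de la Peña--Giné) needs no degeneracy, only measurability. The canonical property $\E k(u,U)=0$ is not needed for this direction. It matters only for the reverse inequality or for symmetrized variants, and it is why decoupled quantities such as $G^0$ have centered columns, which was used in \cref{prop:decoupled-G}. If one instead wants the version requiring symmetric kernels, $f_{ij}=f_{ji}^\top$ composed with the swap of arguments, the symmetry of $k$ gives $f_{ji}(v,u)=k(u,v)e_je_i^\top$. Symmetrize by replacing $f_{ij}$ with $\tfrac12(f_{ij}(u,v)+f_{ji}(v,u))$; this leaves the sum over $U$ unchanged. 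The corresponding decoupled sum becomes $\tfrac12(G^0+(G^{0\prime})^\top)$, where $G^{0\prime}$ has the roles of $U$ and $V$ swapped. Since $G^{0\prime}$ has the same law as $G^0$, a union bound gives the stated inequality with a doubled constant.

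The main obstacle is this bookkeeping: matching the exact form of the cited inequality, symmetric or general kernel and tail or moment version, to the matrix-valued sum so that the decoupled object is precisely $G^0$. This needs the zero diagonal, and it needs the same law for the swapped copy. I would use the general tail version for arbitrary $f_{ij}$ to avoid the symmetrization step altogether.
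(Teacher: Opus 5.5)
Your proposal is correct and follows the paper's proof. Both apply the de la Pe\~na--Montgomery-Smith tail decoupling inequality in the finite-dimensional Banach space $(\R^{n\times n},\|\cdot\|_{\op})$ to the matrix-valued kernels $k(u,v)e_ie_j^\top$, $i\ne j$, whose coupled and decoupled sums are exactly $M$ and $G^0$.

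Your observation that canonicity is not needed in this direction is accurate, even though the paper's wording suggests otherwise. The symmetrization detour is harmless but unnecessary: symmetry of $k$ makes the swapped copy $G^{0\prime}$ equal to $(G^0)^\top$, so the symmetrized decoupled sum already has operator norm at most $\|G^0\|_{\op}$ and no union bound is required.
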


\begin{proof}
This is the standard tail decoupling inequality for Banach-space-valued
canonical U-statistics of order two; see de la Pe\~na and Gin\'e
\citeyearpar[Chapter~3]{deLaPenaGine}.  We apply that result in the Banach space of
$n\times n$ matrices equipped with the operator norm, to the matrix-valued
kernel whose $(i,j)$ entry is $k(U_i,U_j)\one_{i\ne j}$.  Its
decoupled version is exactly $G^0$.
\end{proof}

\begin{proof}[Proof of \cref{thm:main-concentration}]
Let $V_1,\ldots,V_n$ be the independent copy introduced in
\eqref{eq:G-dec-def}.  Let $C_0\ge1$ be the universal constant in
\cref{lem:decoupling}.  Apply \cref{prop:decoupled-G} with tail parameter
$D+2+\log_2 C_0$.  The resulting failure probability is
$n^{-D-3-\log_2 C_0}$, which is at most
$C_0^{-1}n^{-D-3}$ because $n\ge2$.  After relabeling the constant in the
operator-norm threshold, we therefore obtain
\begin{equation}
        \Pp\left\{
        \|G^0\|_{\op}>
        C_D\left(\sqrt{np\log n}+n\Lambda\right)
        \right\}
        \le C_0^{-1}n^{-D-3}.
\label{eq:cap-zero-diagonal-tail}
\end{equation}

Define
\[
        M_{ij}=h(U_i,U_j)\one_{\{i\ne j\}}.
\]
For $i\ne j$, the definitions of $A$ and $h$ give
$M_{ij}=A_{ij}-p$, while $M_{ii}=0$.  Since
$\E A=p(J-I_n)$, it follows entry by entry that
\begin{equation}
        M=A-\E A.
\label{eq:cap-centered-matrix-identification}
\end{equation}

Set
\[
        t=C_0C_D\left(\sqrt{np\log n}+n\Lambda\right).
\]
Applying \cref{lem:decoupling}, followed by
\eqref{eq:cap-zero-diagonal-tail}, yields
\[
\begin{aligned}
 \Pp\left\{\|A-\E A\|_{\op}>t\right\}
 &=\Pp\left\{\|M\|_{\op}>t\right\}\\
 &\le C_0\,\Pp\left\{\|G^0\|_{\op}>t/C_0\right\}\\
 &\le n^{-D-3}
 \le n^{-D},
\end{aligned}
\]
where the first equality uses
\eqref{eq:cap-centered-matrix-identification}.
Absorbing $C_0$ into $C_D$, we have proved that, with probability at least
$1-n^{-D}$,
\begin{equation}
        \|A-\E A\|_{\op}
        \le
        C_D\left(\sqrt{np\log n}+n\Lambda\right).
\label{eq:cap-preliminary-concentration}
\end{equation}

By \cref{prop:cap-operator}, $\Lambda\le Cp\tau$.  Hence
$n\Lambda\le Cnp\tau$, and substituting this bound into
\eqref{eq:cap-preliminary-concentration} gives
\eqref{eq:main-ptau}.
\end{proof}

\section{Proof of spherical embedding recovery}\label{sec:embedding-proof}

This section proves \cref{thm:spherical-inner-product}.  We separate the
degree-one harmonic, which carries the latent embedding, from the higher
harmonics and bound the empirical higher-harmonic matrix.  The two
deterministic lemmas below then convert the resulting operator-norm error into
vector approximation and, subsequently, Gram-matrix approximation.

\begin{lemma}[Frobenius eigenspace perturbation]
\label{lem:frobenius-perturbation}
Let $S=Y\Gamma Y^\top$ be an $n\times n$ positive semidefinite matrix of
rank $d$, where $Y^\top Y=I_d$ and every eigenvalue of $\Gamma$ is at least
$\gamma>0$.  Let $E$ be symmetric, and let $\widehat Y$ have
orthonormal columns spanning the top $d$ eigenspace of $S+E$.  If
$\|E\|_{\op}\le\gamma/4$, then
\[
        \min_{O\in O(d)}
        \|\widehat YO-Y\|_{\mathrm F}
        \le
        C\sqrt d\,\frac{\|E\|_{\op}}{\gamma}.
\]
\end{lemma}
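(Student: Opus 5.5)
The proof combines the Davis--Kahan $\sin\Theta$ theorem, in its Frobenius form, with the standard Procrustes comparison between the sine of the principal angles and the aligned distance.

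\textbf{Step 1: eigenvalue separation.} Let $\lambda_1\ge\cdots\ge\lambda_n$ be the eigenvalues of $S+E$. The eigenvalues of $S$ are at least $\gamma$ on $\range(Y)$ and vanish on its orthogonal complement. Weyl's inequality then gives $\lambda_d(S+E)\ge\gamma-\|E\|_{\op}\ge 3\gamma/4$. The eigenvalues of $S$ on $\range(Y)^\perp$ are zero, so the relevant separation between $\{\lambda_1,\ldots,\lambda_d\}$ and the spectrum of $S$ on $\range(Y)^\perp$ is at least $3\gamma/4$. In particular the top-$d$ eigenspace of $S+E$ is well defined up to the choice of basis.

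\textbf{Step 2: Davis--Kahan.} Let $Y_\perp$ be an orthonormal basis of $\range(Y)^\perp$. Write $\widehat\Lambda=\diag(\lambda_1,\ldots,\lambda_d)$, so that $(S+E)\widehat Y=\widehat Y\widehat\Lambda$. Multiplying on the left by $Y_\perp^\top$ and using $Y_\perp^\top S=0$ gives
\[
        Y_\perp^\top\widehat Y\,\widehat\Lambda
        =Y_\perp^\top E\widehat Y.
\]
Since $\widehat\Lambda\succeq(3\gamma/4)I_d$,
\[
        \|Y_\perp^\top\widehat Y\|_{\mathrm F}
        \le\frac{4}{3\gamma}\|Y_\perp^\top E\widehat Y\|_{\mathrm F}
        \le\frac{4}{3\gamma}\|E\|_{\op}\|\widehat Y\|_{\mathrm F}
        =\frac{4\sqrt d}{3\gamma}\|E\|_{\op}.
\]
The left side is $\|\sin\Theta\|_{\mathrm F}$, where $\Theta$ is the matrix of principal angles between $\range(\widehat Y)$ and $\range(Y)$. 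This direct one-sided argument avoids the general Sylvester-equation form of Davis--Kahan: only $\widehat Y$ is multiplied by the inverse of $\widehat\Lambda$, and the zero block of $S$ on $Y_\perp$ makes the equation trivial to solve.

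\textbf{Step 3: Procrustes.} Take the singular value decomposition $Y^\top\widehat Y=W_1\cos\Theta\,W_2^\top$ and set $O=W_2W_1^\top$. Then
\[
        \|\widehat YO-Y\|_{\mathrm F}^2
        =2d-2\tr(\cos\Theta)
        =2\sum_k(1-\cos\theta_k).
\]
Each term satisfies $1-\cos\theta_k\le\sin^2\theta_k$, because $\cos\theta_k\in[0,1]$. Summing gives $\|\widehat YO-Y\|_{\mathrm F}\le\sqrt2\,\|\sin\Theta\|_{\mathrm F}$. Combining this with Step 2 yields the claimed bound, with the explicit constant $C=4\sqrt2/3$.

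\textbf{Main obstacle.} The only delicate point is obtaining the $\sqrt d$ factor, rather than $\sqrt n$ or a rank-of-$E$ factor. It comes from bounding $\|Y_\perp^\top E\widehat Y\|_{\mathrm F}$ by $\|E\|_{\op}\|\widehat Y\|_{\mathrm F}=\sqrt d\,\|E\|_{\op}$. This requires keeping $\widehat Y$, which has $d$ columns, on the right throughout, instead of applying a Frobenius bound to $E$ itself.
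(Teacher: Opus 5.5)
Your proof is correct and follows the same route as the paper: Weyl's inequality to separate the signal cluster, the Frobenius $\sin\Theta$ bound, and the $\sqrt2$ Procrustes comparison. The only difference is that you derive the $\sin\Theta$ step directly from $Y_\perp^\top S=0$ rather than citing Davis--Kahan. This makes the argument self-contained, removes any need for uniqueness of the top-$d$ eigenspace, and gives the slightly better constant $4\sqrt2/3$.
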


\begin{proof}
The $d$ nonzero eigenvalues of $S$ are at least $\gamma$, while all other
eigenvalues vanish.  Weyl's inequality therefore leaves a gap of at least
$\gamma/2$ between the perturbed signal cluster and the remaining
eigenvalues.  The Frobenius form of the Davis--Kahan theorem
\cite{DavisKahan,BhatiaMatrixAnalysis} gives
\begin{equation}
        \|\sin\Theta(\widehat Y,Y)\|_{\mathrm F}
        \le
        \frac{2\sqrt d\,\|E\|_{\op}}{\gamma}.
\label{eq:frobenius-davis-kahan}
\end{equation}
For the orthogonal Procrustes alignment \cite[Section~2.2]{CapeTangPriebe},
\begin{equation}
        \min_{O\in O(d)}\|\widehat YO-Y\|_{\mathrm F}
        \le \sqrt2\,
        \|\sin\Theta(\widehat Y,Y)\|_{\mathrm F}.
\label{eq:frobenius-procrustes}
\end{equation}
Combining \eqref{eq:frobenius-davis-kahan} and
\eqref{eq:frobenius-procrustes} proves the claim.
\end{proof}

\begin{lemma}[Gram matrix perturbation]
\label{lem:gram-perturbation}
For $P,Q\in\R^{n\times d}$ and $O\in O(d)$,
\[
        \|PP^\top-QQ^\top\|_{\mathrm F}
        \le
        \bigl(\|P\|_{\op}+\|Q\|_{\op}\bigr)
        \|PO-Q\|_{\mathrm F}.
\]
\end{lemma}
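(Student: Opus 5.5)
The plan is a telescoping decomposition followed by the triangle inequality and submultiplicativity of norms. Since $O$ is orthogonal, $OO^\top=I_d$, so
\[
        PP^\top=(PO)(PO)^\top .
\]
This lets us replace $P$ by $PO$ throughout. Write
\[
        (PO)(PO)^\top-QQ^\top
        =(PO)(PO-Q)^\top+(PO-Q)Q^\top ,
\]
which one checks by expanding the right-hand side: the cross terms $-(PO)Q^\top$ and $+(PO)Q^\top$ cancel.

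Next apply the triangle inequality in the Frobenius norm. We also use the standard mixed inequality
\[
        \|MN\|_{\mathrm F}\le\|M\|_{\op}\|N\|_{\mathrm F},
        \qquad
        \|MN\|_{\mathrm F}\le\|M\|_{\mathrm F}\|N\|_{\op}.
\]
To prove it, write $\|MN\|_{\mathrm F}^2$ as a sum over columns of $N$ of $\|Mn_k\|_2^2\le\|M\|_{\op}^2\|n_k\|_2^2$; the second form follows by transposing. Applying the first form to the first term gives $\|PO\|_{\op}\|(PO-Q)^\top\|_{\mathrm F}=\|P\|_{\op}\|PO-Q\|_{\mathrm F}$, because right multiplication by an orthogonal matrix preserves the operator norm and transposition preserves the Frobenius norm. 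Applying the second form to the second term gives $\|PO-Q\|_{\mathrm F}\|Q^\top\|_{\op}=\|Q\|_{\op}\|PO-Q\|_{\mathrm F}$. Summing the two bounds yields the claimed inequality.

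There is no real obstacle here. The only points that need care are the correct placement of $O$, namely using $PO$ rather than $P$ in the first factor so that the difference $PO-Q$ appears in both terms, and the mixed operator/Frobenius norm inequality, which is standard and can be justified in one line as above.
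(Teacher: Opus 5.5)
Your proof is correct and is essentially the paper's argument. The paper likewise uses $PP^\top=(PO)(PO)^\top$, but splits the difference in the transposed order, $(PO-Q)(PO)^\top+Q(PO-Q)^\top$, and then bounds each term with the mixed inequality $\|MN^\top\|_{\mathrm F}\le\|M\|_{\mathrm F}\|N\|_{\op}$. Since the Frobenius norm is invariant under transposition, that difference is immaterial.
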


\begin{proof}
Since $(PO)(PO)^\top=PP^\top$, write
\[
 PP^\top-QQ^\top
 =(PO-Q)(PO)^\top+Q(PO-Q)^\top.
\]
The claim follows from
$\|MN^\top\|_{\mathrm F}\le\|M\|_{\mathrm F}\|N\|_{\op}$.
\end{proof}

With these deterministic perturbation tools in place, the only remaining
probabilistic task is to control the empirical higher-harmonic matrix.  The
next proposition repeats the decoupling argument with the degree-one signal
removed.

Fix $u\in\Sd$, let $V\sim\sigma$, and define the degree-one
Funk--Hecke eigenvalue by
\begin{equation}
        \lambda_1
        :=
        \E_V\!\left[h(u,V)\langle u,V\rangle\right]
        =
        \E_V\!\left[
        \one_{\{\langle u,V\rangle\ge\tau\}}\langle u,V\rangle
        \right].
\label{eq:lambda1-main-definition}
\end{equation}
The value in \eqref{eq:lambda1-main-definition} does not depend on $u$ by
rotational invariance; the second equality uses
$\E_V\langle u,V\rangle=0$.  The degree-one reproducing kernel is
$K_1(u,v)=d\langle u,v\rangle$, so
\cref{lem:first-harmonic} identifies the degree-one component of $h$ as
\[
        h_1(u,v)=\lambda_1K_1(u,v)
        =d\lambda_1\langle u,v\rangle.
\]
We therefore define
\[
        h_\perp(u,v)
        =
        \one_{\{\langle u,v\rangle\ge\tau\}}-p
        -d\lambda_1\langle u,v\rangle
\]
to be the spherical threshold kernel with its degree-one harmonic removed.
Let $T_{h_\perp}$ be its integral operator and set
\[
        \Lambda_{\ge2}=\|T_{h_\perp}\|_{\op}.
\]

\begin{proposition}[Higher-harmonic concentration]
\label{prop:higher-harmonic}
Under the assumptions of \cref{thm:spherical-inner-product}, with probability
at least $1-n^{-D-2}$,
\[
        \left\|
        \bigl(h_\perp(U_i,U_j)\one_{\{i\ne j\}}\bigr)_{i,j=1}^n
        \right\|_{\op}
        \le C_D\left(\sqrt{np\log n}+n\Lambda_{\ge2}\right).
\]
\end{proposition}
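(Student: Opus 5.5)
We rerun the decoupling--Chernoff template of \cref{sec:matrix-tools} with $h$ replaced by $h_\perp$. The first check is that $h_\perp$ is a symmetric canonical kernel. Symmetry is clear. For fixed $u$, we have $\int h(u,v)\,d\sigma(v)=0$ by \eqref{eq:h-canonical-main}, and $\int\langle u,v\rangle\,d\sigma(v)=0$. Hence $\E_V h_\perp(u,V)=0$, so \cref{lem:decoupling} applies with $k=h_\perp$. We then set $G^0_{ij}=h_\perp(U_i,V_j)\one_{\{i\ne j\}}$ and $\phi_u=h_\perp(u,\cdot)$. As before, the conditional second-moment matrix $\Sigma_U$ has the same nonzero eigenvalues as $\sum_i\phi_{U_i}\otimes\phi_{U_i}$. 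By symmetry, $\E_U(\phi_U\otimes\phi_U)=T_{h_\perp}^2$, whose norm is $\Lambda_{\ge2}^2$.

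\textbf{Section norm.} Since $h_1$ is the orthogonal projection of $h(u,\cdot)$ onto degree-one harmonics, we have $\|\phi_u\|_2^2=p(1-p)-\|h_1(u,\cdot)\|_2^2\le p$. Indeed, $\|h_1(u,\cdot)\|_2^2=d\lambda_1^2\ge0$ because $\E\langle u,V\rangle^2=1/d$. \Cref{lem:hilbert-cov} then gives $\|\Sigma_U\|_{\op}\le C_D(n\Lambda_{\ge2}^2+p\log n)$ with probability at least $1-n^{-D-6}$, exactly as in \cref{lem:good-cov}.

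\textbf{Column tail (main obstacle).} The cap kernel was bounded by indicators, but $h_\perp$ contains the unbounded-looking linear term $d\lambda_1\langle U_i,v\rangle$. I would write
\[
\|g_U(v)\|_2^2\le 3N_v+3np^2+3d^2\lambda_1^2\sum_i\langle U_i,v\rangle^2 .
\]
For fixed $v$, the quantity $\sum_i\langle U_i,v\rangle^2$ is a sum of $n$ i.i.d.\ bounded variables with mean $1/d$, each sub-exponential at scale $1/d$. Bernstein's inequality therefore gives $\sum_i\langle U_i,v\rangle^2\le C_D(n/d+\log n)$ with probability $1-n^{-2D-9}$. Using $\lambda_1\asymp p\tau$ (from \cref{prop:cap-operator} and Appendix~\ref{sec:spherical-harmonics}) and $\tau^2\lesssim L/d$, we get $d^2\lambda_1^2\lesssim dp^2L$. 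The linear term thus contributes $C_D\,dp^2L(n/d+\log n)=C_D(np^2L+dp^2L\log n)$. The upper dimension bound $d\le c_Dnp L/\log n$ makes this at most $C_D\,np\cdot pL\cdot(1+p L)$, and for $p\le p_0$ this is $\lesssim np$. Hence $R=C_D\sqrt{np}$ still works, \eqref{eq:column-joint-tail} holds, and \cref{lem:column-tail} follows verbatim via Markov. The delicate point is verifying $\lambda_1^2d^2\lesssim dp^2L$ uniformly in $d\ge C_D\log(1/p)$. If the cited estimate only gives $\lambda_1\lesssim p\tau$, I would bound $\tau\lesssim\sqrt{L/d}$ directly from the cap tail.

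\textbf{Conclusion.} Truncated matrix Chernoff with $b=R^2$ and the mean bound $(n-1)\|\Sigma_U\|_{\op}$, as in \cref{prop:decoupled-G}, gives $\|G^0\|_{\op}\le C_D(\sqrt{np\log n}+n\Lambda_{\ge2})$ off probability $n^{-D-3}$ after adjusting $D$. \Cref{lem:decoupling} then transfers this bound to the symmetric matrix $(h_\perp(U_i,U_j)\one_{\{i\ne j\}})$. The constant $C_0$ is absorbed exactly as in the proof of \cref{thm:main-concentration}, which yields failure probability at most $n^{-D-2}$.
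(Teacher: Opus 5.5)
Your proposal is correct and follows essentially the same route as the paper. You check canonicity, bound the sections, apply \cref{lem:hilbert-cov} to the conditional covariance with population operator $T_{h_\perp}^2$, control the columns at level $R=C_D\sqrt{np}$, run the truncated matrix Chernoff step, and transfer the bound back via \cref{lem:decoupling}. The differences are minor: you get $\|\phi_u\|_2^2\le p$ exactly from the orthogonality of $h_\perp(u,\cdot)$ and $h_1(u,\cdot)$, where the paper uses $2p+2d\lambda_1^2\le Cp$, and you control the linear part of each column by a fixed-$v$ scalar Bernstein bound folded into the joint-tail and Markov step, where the paper uses the uniform sample-covariance bound $\|\sum_iU_iU_i^\top\|_{\op}\le Cn/d$ (which needs $d\le c_Dn$).
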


\begin{proof}
\proofstep{Step 1: Section norm and conditional covariance.}
The kernel $h_\perp$ is symmetric and canonical, and its integral-operator
norm is $\Lambda_{\ge2}$.  Its sections satisfy
\[
\begin{aligned}
 \int h_\perp(u,v)^2\,d\sigma(v)
 &\le
 2\int h(u,v)^2\,d\sigma(v)
 +2d^2\lambda_1^2\int\langle u,v\rangle^2\,d\sigma(v)\\
 &\le 2p+2d\lambda_1^2\le Cp.
\end{aligned}
\]
Here $d\lambda_1^2\le Cp^2L\le Cp$, where
$L=1+\log(1/p)$, by \cref{lem:first-harmonic}.

We now verify the two inputs to the decoupled argument.  Let
$V_1,\ldots,V_n$ be an independent spherical sample.  Applying
\cref{lem:hilbert-cov} to the sections
$\phi_u=h_\perp(u,\cdot)$ gives, with probability at least
$1-n^{-D-4}$ over $U=(U_1,\ldots,U_n)$,
\begin{equation}
 \left\|\sum_{i=1}^n\phi_{U_i}\otimes\phi_{U_i}\right\|_{\op}
 \le C_D\bigl(n\Lambda_{\ge2}^2+p\log n\bigr).
\label{eq:higher-harmonic-covariance-event}
\end{equation}
Here the covariance operator is $T_{h_\perp}^2$, whose norm is
$\Lambda_{\ge2}^2$.

\proofstep{Step 2: Uniform column bound.}
For the uniform column bound, fix $v\in\mathbb S^{d-1}$.  Then
\[
 \sum_{i=1}^n h_\perp(U_i,v)^2
 \le
 2\sum_{i=1}^n h(U_i,v)^2
 +2d^2\lambda_1^2\sum_{i=1}^n\langle U_i,v\rangle^2.
\]
For fixed $v$, the first sum is at most a constant times a
$\operatorname{Bin}(n,p)$ variable plus $np^2$.  Hence the binomial Chernoff
bound \cite[Section~2.3]{VershyninHDP} gives
\begin{equation}
 \Pp_{U,V}\left\{
   \sum_{i=1}^nh(U_i,V)^2>C_Dnp
 \right\}\le n^{-2D-10}.
\label{eq:higher-harmonic-column-tail}
\end{equation}
If $q_U$ denotes the conditional probability of this event given $U$,
Markov's inequality applied to
\eqref{eq:higher-harmonic-column-tail} shows that
$q_U\le n^{-D-5}$ outside an event of
probability $n^{-D-5}$.  Independently, the standard sample-covariance bound
for the isotropic vectors $\sqrt d\,U_i$
\cite[Theorem~4.6.1]{VershyninHDP} gives
\begin{equation}
        \left\|\sum_{i=1}^nU_iU_i^\top\right\|_{\op}
        \le C\frac nd
\label{eq:higher-harmonic-sample-covariance}
\end{equation}
with probability at least $1-n^{-D-5}$.  This uses $d\le c_Dn$, which
follows from the upper bound on $d$ in
\cref{thm:spherical-inner-product}.  On
\eqref{eq:higher-harmonic-sample-covariance}, uniformly in $v$,
\[
 2d^2\lambda_1^2\sum_{i=1}^n\langle U_i,v\rangle^2
 \le Cnd\lambda_1^2
 \le Cnp;
\]
the last inequality follows from $d\lambda_1^2\le Cp^2L$ and $pL\le C$.
Consequently, conditional on a realization of $U$ satisfying
\eqref{eq:higher-harmonic-sample-covariance} and
$q_U\le n^{-D-5}$ as obtained from
\eqref{eq:higher-harmonic-column-tail},
all $n$ decoupled columns have squared norm at most $C_Dnp$ except on an
event of conditional probability at most $nq_U\le n^{-D-4}$.

\proofstep{Step 3: Chernoff concentration and decoupling.}
We now repeat the positive-semidefinite Chernoff calculation
in \cref{prop:decoupled-G}, directly for the zero-diagonal columns.  The
conditional mean is controlled by
\eqref{eq:higher-harmonic-covariance-event}, and Step~2 gives the squared
column bound $C_Dnp$.  Consequently, the zero-diagonal decoupled matrix has
norm at most
\begin{equation}
        C_D\left(\sqrt{np\log n}+n\Lambda_{\ge2}\right)
\label{eq:higher-harmonic-decoupled-bound}
\end{equation}
with the required conditional probability.  Applying the
operator-norm decoupling inequality in \cref{lem:decoupling} to
\eqref{eq:higher-harmonic-decoupled-bound} gives the claimed symmetric-matrix
bound.
\end{proof}

\begin{proof}[Proof of \cref{thm:spherical-inner-product}]
\proofstep{Step 1: Sample covariance and signal decomposition.}
Put $L=1+\log(1/p)$ and
\[
        M=\frac dnX^\top X.
\]
The standard covariance bound for independent isotropic subgaussian vectors
\cite[Theorem~4.6.1]{VershyninHDP} gives, with probability at least
$1-n^{-D-2}$,
\begin{equation}
        \|M-I_d\|_{\op}
        \le
        \delta_n:=C_D\sqrt{\frac{d+\log n}{n}}.
\label{eq:spherical-frob-covariance}
\end{equation}
The upper bound on $d$ in
\cref{thm:spherical-inner-product} implies $d\le c_Dn$, so
the constants can
be chosen such that $\delta_n\le1/2$.
On this event $M$ is positive definite, so $X$ has full column rank.  We may
therefore define
\[
        Y=X(X^\top X)^{-1/2}
        =\sqrt{\frac dn}\,XM^{-1/2}.
\]

Let
\[
        E_{ij}=h_\perp(U_i,U_j)\one_{\{i\ne j\}}.
\]
The degree-one decomposition is the exact identity
\begin{equation}
        B+d\lambda_1I_n=d\lambda_1XX^\top+E.
\label{eq:spherical-frob-decomposition}
\end{equation}
By \cref{prop:higher-harmonic,lem:higher-harmonics,lem:first-harmonic},
\[
 \|E\|_{\op}
 \le C_D\left(\sqrt{np\log n}+np\frac Ld\right),
 \qquad
 \lambda_1\asymp p\sqrt{\frac Ld}.
\]
The nonzero eigenvalues of $d\lambda_1XX^\top$ are the eigenvalues of
$n\lambda_1M$ and hence are at least $n\lambda_1/2$.  Moreover,
\begin{equation}
 \rho_n:=\frac{\|E\|_{\op}}{n\lambda_1}
 \le C_D\left(
        \sqrt{\frac{d\log n}{npL}}+\sqrt{\frac Ld}
 \right).
\label{eq:spherical-frob-ratio}
\end{equation}

\proofstep{Step 2: Eigenspace perturbation.}
The lower and upper bounds on $d$ make the two terms on the right smaller
than a sufficiently small absolute constant.  Adding
$d\lambda_1I_n$ shifts every eigenvalue by the same amount, so it does not
change the top $d$ eigenspace.  We may therefore apply
\cref{lem:frobenius-perturbation} to
\eqref{eq:spherical-frob-decomposition}.  It yields an $O\in O(d)$ such
that
\begin{equation}
        \|\widehat YO-Y\|_{\mathrm F}
        \le C_D\sqrt d\,\rho_n.
\label{eq:spherical-eigenspace-bound}
\end{equation}
After multiplying by $\sqrt{n/d}$ and dividing by $\sqrt n$,
\[
 \frac1{\sqrt n}
 \left\|\widehat UO-XM^{-1/2}\right\|_{\mathrm F}
 \le C_D\rho_n.
\]
Since $\|X\|_{\mathrm F}=\sqrt n$ and
\eqref{eq:spherical-frob-covariance} implies
$\|M^{-1/2}-I_d\|_{\op}\le C\delta_n$,
\[
 \frac1{\sqrt n}
 \|XM^{-1/2}-X\|_{\mathrm F}
 \le C\delta_n.
\]
Under the assumptions of
\cref{thm:spherical-inner-product}, this covariance error is
already contained
in the sparse perturbation term.  Indeed, $pL\le C$ and $d\ge C_DL$ give
\[
        \frac{d+\log n}{n}
        \le C_D\frac{d\log n}{npL}.
\]
The triangle inequality and \eqref{eq:spherical-frob-ratio} therefore prove
the vector bound \eqref{eq:spherical-vector-recovery}.

\proofstep{Step 3: Gram matrix recovery.}
On the covariance event
\eqref{eq:spherical-frob-covariance},
\[
        \|\widehat U\|_{\op}=\sqrt{\frac nd},
        \qquad
        \|X\|_{\op}\le C\sqrt{\frac nd}.
\]
Applying \cref{lem:gram-perturbation} with the alignment in
\eqref{eq:spherical-eigenspace-bound} gives
\[
 \frac{\sqrt d}{n}\|\widehat K-XX^\top\|_{\mathrm F}
 \le \frac C{\sqrt n}\|\widehat UO-X\|_{\mathrm F}.
\]
The vector bound \eqref{eq:spherical-vector-recovery},
together with a union bound over the covariance event
\eqref{eq:spherical-frob-covariance} and the event in
\cref{prop:higher-harmonic} proves
\eqref{eq:spherical-gram-recovery} and completes the proof.
\end{proof}

\section{Proofs for the Gaussian vector model}\label{sec:gaussian-models}

This section proves
\cref{thm:raw-gaussian-concentration,thm:raw-gaussian-inner-product}.  The
Gaussian Hoeffding decomposition \cite{Hoeffding} separates a radial
one-endpoint projection, the bilinear signal
$\beta\langle x,y\rangle$, and a canonical remainder.  Double-centering
removes the rank-two radial term, decoupling--Chernoff controls the remainder,
and the Frobenius perturbation argument from \cref{sec:embedding-proof}
recovers the bilinear signal.  Throughout this section, set
$L=1+\log(1/p)$.

Let $G,G'\stackrel{\mathrm{i.i.d.}}{\sim}N(0,I_d)$ and set
\[
        S=\frac{\langle G,G'\rangle}{\sqrt d}.
\]
Write $\gamma_d$ for the standard Gaussian probability measure on $\R^d$.
Let $u=u_{d,p}$ be the upper $p$-quantile of $S$.  For
$x,y\in\R^d$, put
\[
        q_{\rm G}(x,y)
        =
        \one_{\{\langle x,y\rangle/\sqrt d\ge u\}}-p.
\]
The first Hoeffding projection is
\begin{equation}
        a(x)
        =
        \E_Y q_{\rm G}(x,Y)
        =
        \overline\Phi\left(\frac{u\sqrt d}{\|x\|}\right)-p,
\label{eq:raw-a-def}
\end{equation}
where $Y\sim N(0,I_d)$ and $\overline\Phi(t)=\Pp\{N(0,1)\ge t\}$.
Thus $a(x)$ records the part of the centered edge indicator that can be
predicted from one endpoint alone.  Subtracting this contribution from both
endpoints gives the canonical part:
\begin{equation}
        r(x,y)=q_{\rm G}(x,y)-a(x)-a(y).
\label{eq:raw-r-def}
\end{equation}
Then $r$ is canonical: $\E_Y r(x,Y)=0$ and $\E_G r(G,y)=0$.

We also isolate the degree-one Gaussian signal.  Define
\begin{equation}
        m_1=\E\left[S\one_{\{S\ge u\}}\right],
        \qquad
        \beta=\frac{m_1}{\sqrt d}.
\label{eq:raw-beta-def}
\end{equation}
The degree-one component of $q_{\rm G}$, equivalently of $r$, is the
projection onto the span of the bilinear coordinate kernels $x_k y_k$,
$1\le k\le d$.  Indeed, by rotational symmetry,
\[
        \E\bigl[q_{\rm G}(G,G')G_aG'_b\bigr]
        =
        \frac{\delta_{ab}}{d}
        \E\bigl[q_{\rm G}(G,G')\langle G,G'\rangle\bigr]
        =
        \delta_{ab}\beta.
\]
Thus this projection is $\beta\langle x,y\rangle$.  Let
\[
        r_\perp(x,y):=r(x,y)-\beta\langle x,y\rangle.
\]

The Gaussian population estimates used below are collected in
\cref{lem:raw-gaussian-coeff}; their proof is deferred to
Appendix~\ref{sec:gaussian-kernel-estimates}.

\begin{lemma}[First projection bound]\label{lem:raw-first-projection}
Let $a_G=(a(G_1),\ldots,a(G_n))^\top$.  Under the assumptions of
\cref{thm:raw-gaussian-concentration}, with probability at least
$1-n^{-D}$,
\[
        \|a_G\|_2
        \le
        C_D pL\sqrt{\frac nd},
        \qquad
        \|a_G\|_\infty
        \le
        C_D pL\sqrt{\frac{\log n}{d}}+n^{-D-2}.
\]
\end{lemma}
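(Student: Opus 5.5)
The plan is to treat $a(x)$ as a smooth function of the single scalar $\|x\|^2$ and to linearize it around the typical value $\|x\|^2\approx d$. Write $\|G_i\|^2=d(1+\eta_i)$, where $\eta_i=(\|G_i\|^2-d)/d$ is a centered, normalized chi-square fluctuation. Then $a(G_i)=\overline\Phi(u/\sqrt{1+\eta_i})-p$. The steps, in order, are as follows.

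First, I need the scale of $u$ and $p$. Since $S$ is approximately standard Gaussian in the relevant regime (this is the content of \cref{lem:raw-gaussian-coeff}), we have $u\asymp\sqrt{L}$ and $\overline\Phi(u)\asymp p$. The condition $d\ge C_D L^2\log n$ should give moderate-deviation control of $S$ near its $p$-quantile. The quantity I want is the derivative
\[
\frac{d}{d\eta}\overline\Phi\bigl(u(1+\eta)^{-1/2}\bigr)=\tfrac12 u(1+\eta)^{-3/2}\varphi\bigl(u(1+\eta)^{-1/2}\bigr).
\]
For $|\eta|\le c/L$, the ratio $\varphi(u(1+\eta)^{-1/2})/\varphi(u)$ is $\exp(O(u^2|\eta|))=O(1)$. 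Since $u\varphi(u)\asymp u^2\overline\Phi(u)\asymp pL$, this gives the pointwise bound
\[
|a(x)-a_0|\le CpL|\eta|,\qquad a_0:=\overline\Phi(u)-p,
\]
valid on $|\eta|\le c/L$.

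Second, I will show that $a_0$ is negligible. By definition $\E a(G)=0$, so $a_0=-\E[a(G)-a_0]$. Combining this with the linear bound and a Taylor expansion to second order (where $\E\eta=0$ removes the first-order term) should give
\[
|a_0|\le CpL^2\,\E\eta^2+\text{tail}\le CpL^2/d.
\]
This is at most $pL/\sqrt d$ whenever $d\ge L^2$. If this turns out to be delicate, I will instead absorb $a_0$ into the per-coordinate bound directly.

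Third, I will apply the chi-square tail bound. With probability $1-n^{-D-3}$, $\max_i|\eta_i|\le C_D\sqrt{\log n/d}$, and this is at most $c/L$ exactly because $d\ge C_D L^2\log n$. On this event every coordinate satisfies
\[
|a(G_i)|\le CpL\sqrt{\log n/d}+|a_0|,
\]
which yields the $\ell_\infty$ bound. I do not expect to need the additive $n^{-D-2}$ slack, but it can be used to absorb any $a_0$ remainder or truncation error.

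Fourth, for the $\ell_2$ bound I will use
\[
\|a_G\|_2^2\le 2C^2p^2L^2\sum_i\eta_i^2+2na_0^2.
\]
The sum $\sum_i\eta_i^2$ has mean $2n/d$, and Bernstein's inequality for the subexponential variables $d\eta_i^2$ bounds it by $C_D(n+\log n)/d\le C_D n/d$ with high probability. Together with $na_0^2\le np^2L^4/d^2\le np^2L^2/d$, this gives $\|a_G\|_2\le C_DpL\sqrt{n/d}$.

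The main obstacle is the first step: making rigorous the uniform comparison $\varphi(u(1+\eta)^{-1/2})\lesssim\varphi(u)$ and the estimate $u\varphi(u)\asymp pL$. The relation between $u$ and $\overline\Phi$ only comes through the non-Gaussian law of $S$, so I will rely on the quantile estimate $u^2\le CL$ and $\overline\Phi(u)\asymp p$ from \cref{lem:raw-gaussian-coeff}.
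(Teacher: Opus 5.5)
Your plan follows the paper's proof, in slightly different coordinates. You linearize $\overline\Phi(u\sqrt d/\|x\|)$ in the radial fluctuation on the window where that fluctuation is $O(1/L)$, and use $u\varphi(u)\asymp pL$ to get a $CpL$ Lipschitz constant. You take a union bound over the radii for $\ell_\infty$, concentrate the sum of squared radial fluctuations for $\ell_2$, and control the offset $a_0=\overline\Phi(u)-p$ through $\E a(G)=0$. The paper bounds the offset only to first order, $|a_0|\le CpL/\sqrt d$, which already suffices. Your second-order bound $CpL^2/d$ is correct, but it is not needed.

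One step fails as written. In Step 4 you apply Bernstein's inequality to $d\eta_i^2$ with $\eta_i=\|G_i\|^2/d-1$, calling these variables subexponential. They are not.
\begin{itemize}
\item $\sqrt d\,\eta_i=(\|G_i\|^2-d)/\sqrt d$ has only the mixed tail $\exp(-c\min(t^2,t\sqrt d))$.
\item Consequently $\Pp\{d\eta_i^2>s\}$ decays only like $\exp(-c\sqrt{sd})$ once $s\gtrsim d$.
\item In fact $\E\exp(\lambda d\eta_i^2)=\infty$ for every $\lambda>0$, because $\chi^2_d$ has an exponential rather than Gaussian upper tail.
\end{itemize}

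This is why the paper works with the norm $s_i=\|G_i\|/\sqrt d$ rather than its square. Since $x\mapsto\|x\|$ is $1$-Lipschitz, $\sqrt d(s_i-1)=\|G_i\|-\sqrt d$ is subgaussian with bounded norm. Hence $d(s_i-1)^2$ is genuinely subexponential, and scalar Bernstein gives $\sum_i(s_i-1)^2\le C_Dn/d$.

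The repair in your notation is immediate; either of the following works.
\begin{itemize}
\item On your Step 3 event $|s_i-1|\le 1$, so $\eta_i^2=(s_i-1)^2(s_i+1)^2\le 9(s_i-1)^2$, and you then apply Bernstein to $d(s_i-1)^2$.
\item Apply Bernstein to the truncated variables $d\eta_i^2\one_{\{|\eta_i|\le1\}}$. These have tail $2e^{-cs}$ for $s\le d$ and vanish beyond $d$, so they are subexponential with bounded norm, and they coincide with $d\eta_i^2$ on that event.
\end{itemize}
Some quadratic concentration of this kind is genuinely needed. The crude bound $\sum_i\eta_i^2\le n\max_i\eta_i^2$ would lose a factor $\sqrt{\log n}$ in the $\ell_2$ estimate. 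With this change the argument is complete.
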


\begin{proof}
Let
\[
        f(s)=\overline\Phi(u/s),\qquad
        s_i=\frac{\|G_i\|}{\sqrt d}.
\]

Conditioning on $G$ in the definition of the threshold gives
$p=\E f(\|G\|/\sqrt d)$.  Hence
\[
        a(G_i)=f(s_i)-p
        =
        \bigl(f(s_i)-f(1)\bigr)-\eta,
        \qquad
        \eta:=\E\bigl[f(\|G\|/\sqrt d)-f(1)\bigr].
\]
Thus the size of $a(G_i)$ is controlled by the radius fluctuation
$s_i-1$.

Let
\[
        t_n=C_D\sqrt{\frac{\log n}{d}},
        \qquad
        \mathcal E_{\rm rad}:=\{\max_i |s_i-1|\le t_n\}.
\]
By Gaussian norm concentration \cite[Theorem~3.1.1]{VershyninHDP},
$\Pp(\mathcal E_{\rm rad})\ge1-n^{-D-2}$ after increasing $C_D$.
After increasing the constant in the lower bound on $d$, the assumption
$d\ge C_D L^2\log n$ makes $t_n\le c/L$.  For every $s\in[1-c/L,1+c/L]$,
\[
        |f'(s)|
        =
        \frac{u}{s^2}\varphi(u/s)
        \le CpL
\]
by \cref{lem:raw-gaussian-coeff} and the Gaussian tail comparison
\cite[Proposition~2.1.2]{VershyninHDP}.  In particular, $f$ is
$CpL$-Lipschitz on $[1-c/L,1+c/L]$.

We also use a quadratic, rather than a coordinatewise, radius bound.  Since
the Euclidean norm is one-Lipschitz and
$|\E\|G_i\|/\sqrt d-1|\le C/d$, Gaussian concentration
\cite[Theorem~5.2.2]{VershyninHDP} shows that
$\sqrt d(s_i-1)$ has uniformly bounded subgaussian norm.  Consequently,
$d(s_i-1)^2$ has uniformly bounded subexponential norm
\cite[Lemma~2.7.7]{VershyninHDP}.  Scalar Bernstein's inequality
\cite[Theorem~2.8.1]{VershyninHDP} and $n\ge C_D\log n$ therefore give
\begin{equation}
        \sum_{i=1}^n(s_i-1)^2\le C_D\frac nd
\label{eq:raw-radius-quadratic-sum}
\end{equation}
with probability at least $1-n^{-D-2}$.  On the intersection of this event
with $\mathcal E_{\rm rad}$,
\begin{equation}
        \max_i |f(s_i)-f(1)|
        \le
        C_DpL\sqrt{\frac{\log n}{d}},
        \qquad
        \left(\sum_i |f(s_i)-f(1)|^2\right)^{1/2}
        \le
        C_DpL\sqrt{\frac nd}.
\label{eq:raw-first-projection-fluctuation}
\end{equation}
 
The centering term satisfies
\begin{equation}
        |\eta|
        \le
        \E|f(\|G\|/\sqrt d)-f(1)|
        \le
        CpL\,\E\left|\frac{\|G\|}{\sqrt d}-1\right|
        +
        \Pp\left\{\left|\frac{\|G\|}{\sqrt d}-1\right|>\frac cL\right\}
        \le
        C\frac{pL}{\sqrt d},
\label{eq:raw-first-projection-centering}
\end{equation}
where the last step uses
$\E|\|G\|/\sqrt d-1|\le C/\sqrt d$ and the same Gaussian norm
concentration; the tail term is absorbed under
$d\ge C_D L^2\log n$, after increasing $C_D$.  Combining
\eqref{eq:raw-first-projection-fluctuation} and
\eqref{eq:raw-first-projection-centering} gives the stated $\ell_2$ and
$\ell_\infty$ bounds on the
intersection of the two events.  A union bound completes the proof.
\end{proof}

The next proposition is purely algebraic.  It identifies the one-endpoint
radius fluctuation in the adjacency matrix and explains why
double-centering is useful: the two rank-one first-projection terms
disappear, leaving only the canonical matrix and a diagonal correction.

\begin{proposition}[Gaussian vector Hoeffding reduction]\label{prop:raw-gaussian-reduction}
Let
\[
        R_{ij}=r(G_i,G_j)\one_{\{i\ne j\}},
        \qquad
        a_G=(a(G_1),\ldots,a(G_n))^\top.
\]
Then
\begin{equation}
        A^{\rm G}-\E A^{\rm G}
        =
        R+a_G\one^\top+\one a_G^\top-2\diag(a_G).
\label{eq:raw-exact-decomp}
\end{equation}
Consequently,
\begin{equation}
        \Pi_n(A^{\rm G}-\E A^{\rm G})\Pi_n
        =
        \Pi_nR\Pi_n-2\Pi_n\diag(a_G)\Pi_n.
\label{eq:raw-centered-decomp}
\end{equation}
\end{proposition}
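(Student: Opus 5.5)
This is an entrywise identity followed by a projection computation, so I will verify \eqref{eq:raw-exact-decomp} entry by entry and then multiply on both sides by $\Pi_n$.

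First, record that $\E A^{\rm G}=p(J-I)$. For $i\ne j$, the pair $G_i,G_j$ has the law of $G,G'$, so $\E A^{\rm G}_{ij}=\Pp\{S\ge u\}=p$ by the choice of $u$. The diagonal entries of $A^{\rm G}$ vanish.

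\emph{Off-diagonal entries ($i\ne j$).} Here $A^{\rm G}_{ij}-p=q_{\rm G}(G_i,G_j)$. By \eqref{eq:raw-r-def} this equals
\[
r(G_i,G_j)+a(G_i)+a(G_j)=R_{ij}+(a_G\one^\top)_{ij}+(\one a_G^\top)_{ij}.
\]
The term $-2\diag(a_G)$ contributes nothing off the diagonal, so the identity holds there.

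\emph{Diagonal entries.} The left side of \eqref{eq:raw-exact-decomp} is $0$. On the right, $R_{ii}=0$, and
\[
(a_G\one^\top)_{ii}+(\one a_G^\top)_{ii}-2a(G_i)=0.
\]
This proves \eqref{eq:raw-exact-decomp}.

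\emph{Double-centering.} Since $\Pi_n\one=0$ and $\Pi_n$ is symmetric, we have $\Pi_n a_G\one^\top\Pi_n=\Pi_n a_G(\Pi_n\one)^\top=0$. Likewise $\Pi_n\one a_G^\top\Pi_n=0$. Conjugating \eqref{eq:raw-exact-decomp} by $\Pi_n$ therefore gives \eqref{eq:raw-centered-decomp}.

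There is no real obstacle; the only care needed is the diagonal bookkeeping, which is exactly why the $-2\diag(a_G)$ correction appears.
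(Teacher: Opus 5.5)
Your proof is correct and follows the paper's argument: the off-diagonal identity $A^{\rm G}_{ij}-p=r(G_i,G_j)+a(G_i)+a(G_j)$ comes directly from the definition of $r$, and conjugating by $\Pi_n$ with $\Pi_n\one=0$ kills the two rank-one terms. Your explicit check of the diagonal entries is a detail the paper leaves implicit, and it is exactly what justifies the $-2\diag(a_G)$ correction.
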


\begin{proof}
For $i\ne j$,
\[
        A^{\rm G}_{ij}-p
        =
        r(G_i,G_j)+a(G_i)+a(G_j).
\]
This gives \eqref{eq:raw-exact-decomp}.  Multiplying on both sides by
$\Pi_n$ and using $\Pi_n\one=0$ gives \eqref{eq:raw-centered-decomp}.
\end{proof}

It remains to control the canonical matrix $R$.
The following proposition applies the same decoupling-and-Chernoff framework
as \cref{prop:decoupled-G}.  The only new step
is to restrict to typical radii and recenter the kernel under the resulting
truncated Gaussian law.

\begin{proposition}[Canonical Gaussian matrix bound]\label{prop:raw-canonical-bound}
Let $k$ denote either $r$ or $r_\perp$, and let
$\Lambda_k=\|T_k\|_{\op}$.  Under the assumptions of
\cref{thm:raw-gaussian-concentration}, with the additional condition
$d\le c n$ when $k=r_\perp$,
\[
        \left\|
        \bigl(k(G_i,G_j)\one_{\{i\ne j\}}\bigr)_{i,j=1}^n
        \right\|_{\op}
        \le
        C_D\left(\sqrt{np\log n}+n\Lambda_k\right)
\]
with probability at least $1-n^{-D}$.
\end{proposition}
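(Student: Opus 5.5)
The plan is to run the decoupling--Chernoff template of \cref{prop:decoupled-G,lem:decoupling}. The first difficulty is that the Gaussian sections $k(x,\cdot)$ are not uniformly bounded in $L^2(\gamma_d)$ when $\|x\|$ is atypically large. The second is that the column count $\sum_i \one_{\{\langle G_i,v\rangle/\sqrt d\ge u\}}$ for fixed $v$ is $\mathrm{Bin}(n,q(v))$ rather than $\mathrm{Bin}(n,p)$.

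\textbf{Step 1: Restrict to typical radii.} Let $\mathcal R=\{x:|\|x\|/\sqrt d-1|\le c/L\}$. By Gaussian norm concentration and $d\ge C_DL^2\log n$, all $G_i$ lie in $\mathcal R$ with probability $1-n^{-D-5}$. The same holds for the independent copy $G'_j$ used in decoupling. Let $\widetilde\gamma$ be $\gamma_d$ conditioned on $\mathcal R$. On this event the sample is a sample from $\widetilde\gamma$. The coupling error costs only $n^{-D-5}$, since the total variation distance between the two product laws is at most $n\gamma_d(\mathcal R^c)$.

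\textbf{Step 2: Recenter the kernel.} Under $\widetilde\gamma$, $k$ is no longer exactly canonical. I would therefore recenter:
\[
        \widetilde k(x,y)=k(x,y)-\widetilde a(x)-\widetilde a(y)+\widetilde c,
        \qquad
        \widetilde a(x)=\E_{\widetilde\gamma}k(x,Y),
        \qquad
        \widetilde c=\E_{\widetilde\gamma}\widetilde a .
\]
This $\widetilde k$ is canonical for $\widetilde\gamma$.

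The matrix of $\widetilde a$ terms has the form $\widetilde a\one^\top+\one\widetilde a^\top$ plus diagonal and constant pieces. Its norm is at most $C\sqrt n\,\|\widetilde a_G\|_2+n|\widetilde c|$. Since $\E_{\gamma_d}k(x,Y)=0$, we have $|\widetilde a(x)|\le \gamma_d(\mathcal R^c)^{1/2}\|k(x,\cdot)\|_{L^2(\gamma_d)}/\gamma_d(\mathcal R)$. This is superpolynomially small in $n$ because $\gamma_d(\mathcal R^c)\le e^{-cd/L^2}\le n^{-C_D}$. The error is therefore negligible against $\sqrt{np\log n}$.

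\textbf{Step 3: Check the three inputs for $\widetilde k$.}

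\emph{Section norm.} For $x\in\mathcal R$, $\|\widetilde k(x,\cdot)\|^2_{L^2(\widetilde\gamma)}\le Cp$. This follows from $q(x)\le p+CpL\cdot c/L\le Cp$ by the Lipschitz bound in the proof of \cref{lem:raw-first-projection}, together with $|a(x)|\le Cp$. For $r_\perp$ we also need $\beta^2\E\langle x,Y\rangle^2\le C\beta^2 d\le Cp^2L\le Cp$, using \cref{lem:raw-gaussian-coeff}.

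\emph{Population operator.} I need $\|T_{\widetilde k}\|_{L^2(\widetilde\gamma)}\le \Lambda_k+n^{-C}$. The integral operators differ only by restricting to $\mathcal R\times\mathcal R$, renormalizing the measure, and subtracting the tiny $\widetilde a$ terms. Hilbert-space compression of $T_k$ to $L^2(\mathcal R)$ does not increase its norm, and the $1/\gamma_d(\mathcal R)$ factor is $1+o(1)$.

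\emph{Column tail.} For fixed $v\in\mathcal R$, $\sum_i k(G_i,v)^2$ is dominated by a constant times $\mathrm{Bin}(n,q(v))$ with $q(v)\le Cp$. For $r_\perp$ there is an additional term $\beta^2\sum_i\langle G_i,v\rangle^2\le \beta^2\|\sum_iG_iG_i^\top\|_{\op}\|v\|^2$. Using $d\le cn$, this is at most $C\beta^2 n d\le Cnp$, exactly as in Step~2 of \cref{prop:higher-harmonic}. The binomial Chernoff bound and Markov's inequality over $U$ then give the conditional column-tail event.

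\textbf{Step 4: Conclude.} With these inputs, \cref{lem:hilbert-cov} and the conditional matrix Chernoff argument of \cref{prop:decoupled-G} bound the zero-diagonal decoupled matrix by $C_D(\sqrt{np\log n}+n\Lambda_k)$. \Cref{lem:decoupling} applied under $\widetilde\gamma$, where $\widetilde k$ is canonical, transfers the bound to the symmetric matrix. Adding back the negligible recentering errors and the failure probability of the radius event proves the proposition.

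The main obstacle is Step 2 and the operator comparison in Step 3. I must check carefully that conditioning on typical radii perturbs canonicity and $\Lambda_k$ only by amounts $n^{-C}$. If the $L^2$-tail estimate for $\widetilde a$ is too crude, I would instead bound $|\widetilde a(x)|$ directly via Cauchy--Schwarz with the Gaussian tail of $\|Y\|$, using $d\ge C_DL^2\log n$.
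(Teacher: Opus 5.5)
Your proposal is correct and follows the paper's proof essentially step for step. You truncate to the radial shell $\bigl|\|x\|/\sqrt d-1\bigr|\le c/L$ and recenter the kernel under the conditioned law, using the same $O(\sqrt{p\varepsilon})$ Cauchy--Schwarz bound on the correction; you then verify the section, operator, and column inputs, run the Hilbert-covariance, matrix-Chernoff, and decoupling template, and remove the truncation by a union bound over the sample points. The only cosmetic difference is that the paper writes the recentered operator as $Q_\nu T_{k,\nu}Q_\nu$, so recentering costs nothing in operator norm, whereas you treat it as an additive negligible perturbation; both suffice.
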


\begin{proof}
\proofstep{Step 1: Truncation and recentering.}
We spell out the truncation because conditioning changes the canonical
measure.  Let
\[
        \mathcal T=\left\{x:
        \left|\frac{\|x\|}{\sqrt d}-1\right|\le\frac cL\right\},
        \qquad
        \varepsilon=\gamma_d(\mathcal T^c),
        \qquad
        \nu=\gamma_d(\,\cdot\mid\mathcal T).
\]
Gaussian norm concentration \cite[Theorem~3.1.1]{VershyninHDP} and
$d\ge C_DL^2\log n$ give
$\varepsilon\le n^{-2D-20}$.  For $x\in\mathcal T$, define
\[
        m(x)=\int k(x,y)\,d\nu(y),
        \qquad
        m_0=\int m(x)\,d\nu(x),
\]
and
\[
        k_{\mathcal T}(x,y)=k(x,y)-m(x)-m(y)+m_0.
\]
Then $k_{\mathcal T}$ is canonical under $\nu$.  Since $k$ is canonical
under $\gamma_d$ and \cref{lem:raw-gaussian-coeff} bounds its squared section norm
by $Cp$ uniformly for $x\in\mathcal T$, Cauchy--Schwarz gives
\[
        \sup_{x\in\mathcal T}|m(x)|+|m_0|\le C\sqrt{p\varepsilon}.
\]
Moreover, if $Q_\nu$ denotes the orthogonal projection onto the mean-zero
subspace of $L^2(\nu)$, then
\[
        T_{k_{\mathcal T},\nu}
        =Q_\nu T_{k,\nu}Q_\nu,
        \qquad
        \|T_{k_{\mathcal T},\nu}\|_{\op}
        \le\frac{\Lambda_k}{1-\varepsilon}.
\]
The last inequality follows by identifying $L^2(\nu)$ with the subspace of
$L^2(\gamma_d)$ supported on $\mathcal T$ and observing that restriction and
renormalization multiply the operator norm by at most
$(1-\varepsilon)^{-1}$.

\proofstep{Step 2: Conditional covariance and uniform column bound.}
We now repeat the proof of \cref{prop:decoupled-G} under $\nu$, using the
zero-diagonal decoupled matrix required by \cref{lem:decoupling}.  The section
bound in \cref{lem:raw-gaussian-coeff} gives
\[
        \sup_{x\in\mathcal T}
        \int k_{\mathcal T}(x,y)^2\,d\nu(y)\le Cp.
\]
Consequently, \cref{lem:hilbert-cov} bounds the conditional covariance of
the decoupled columns by
\[
        C_D\bigl(n\Lambda_k^2+p\log n\bigr).
\]
If $\Sigma$ is the covariance of a full column, the identity
\eqref{eq:zero-diagonal-covariance-sum} shows that deleting the diagonal does
not enlarge the Chernoff mean scale.
For the column bound, fix a typical second endpoint $y$.  Under $\nu$, the
number of threshold indicators in that column is binomial with success
probability at most
$(1-\varepsilon)^{-1}\overline\Phi(u\sqrt d/\|y\|)\le Cp$.  The binomial
Chernoff and Markov argument of \cref{lem:column-tail} therefore bounds all
sampled threshold columns by $C_Dnp$ with the required conditional
probability.  The first Hoeffding projections are $O(p)$ on $\mathcal T$ and
contribute at most $Cnp^2$.  If $k=r_\perp$, the additional linear column
satisfies
\[
        \sum_{i=1}^n\beta^2\langle G_i,y\rangle^2
        \le C\beta^2 n\|y\|^2
        \le Cnp^2L
        \le Cnp
\]
on the standard Gaussian sample-covariance event
\cite[Theorem~4.6.1]{VershyninHDP}; conditioning all sample points to lie in
$\mathcal T$ changes its failure probability by at most
$(1-\varepsilon)^{-n}$.  Here we used $d\le cn$ and
$\beta^2\le Cp^2L/d$.  Thus the squared column bound is $C_Dnp$.

\proofstep{Step 3: Decoupling and removal of the truncation.}
Applying \cref{lem:matrix-chernoff} conditionally therefore
gives
\begin{equation}
        \|G^0_{\mathcal T}\|_{\op}
        \le C_D\left(\sqrt{np\log n}+n\Lambda_k\right).
\label{eq:raw-truncated-decoupled-bound}
\end{equation}
The decoupling inequality in \cref{lem:decoupling} transfers
\eqref{eq:raw-truncated-decoupled-bound} to the zero-diagonal
matrix generated by
$k_{\mathcal T}$.  On the event that all original sample points lie in
$\mathcal T$, the matrices generated by $k$ and $k_{\mathcal T}$ differ by
the off-diagonal part of $m\one^\top+\one m^\top-m_0J$.  Its operator norm is
at most $Cn\sqrt{p\varepsilon}$, including the diagonal correction, and is
smaller than $n^{-D-2}$.  Finally,
$\Pp\{\exists i:G_i\notin\mathcal T\}\le n\varepsilon$.  Combining the
events and relabeling $D$ proves the proposition.
\end{proof}

\begin{proof}[Proof of \cref{thm:raw-gaussian-concentration}]

The condition $np\ge C_D\log n$ implies
$L\le C\log n$.  Hence $d\ge C_DL^2\log n$ implies the
$d\ge CL^3$ hypothesis of \cref{lem:raw-gaussian-coeff}.
Apply \cref{prop:raw-canonical-bound} to $r$ with tail exponent $D+2$ and
use \cref{lem:raw-gaussian-coeff}.  This gives
\begin{equation}
        \|R\|_{\op}
        \le
        C_D\left(
        \sqrt{np\log n}
        +np\sqrt{\frac Ld}
        +np\frac{L^2}d
        \right)
\label{eq:raw-R-bound}
\end{equation}
with probability at least $1-n^{-D-2}$.  On the event from
\cref{lem:raw-first-projection}, also taken with tail exponent $D+2$, the
Hoeffding reduction \cref{prop:raw-gaussian-reduction} gives
\[
        \|\Pi_n(A^{\rm G}-\E A^{\rm G})\Pi_n\|_{\op}
        \le
        \|R\|_{\op}+2\|a_G\|_\infty.
\]
The $\|a_G\|_\infty$ term is absorbed by $\sqrt{np\log n}$ because
$pL$ is bounded for $0<p\le p_0$ and $np\ge C_D\log n$.  This proves
\eqref{eq:raw-centered-main}, since $d\ge C_DL^2\log n$ and
$L\le C\log n$ imply
\[
        \frac{L^2}{d}\le C\sqrt{\frac Ld}.
\]

For the full-matrix estimate, \eqref{eq:raw-exact-decomp} gives
\[
        \|a_G\one^\top+\one a_G^\top-2\diag(a_G)\|_{\op}
        \le
        2\sqrt n\,\|a_G\|_2+2\|a_G\|_\infty,
\]
which we combine with the bound \eqref{eq:raw-R-bound} on
$\|R\|_{\op}$.  By
\cref{lem:raw-first-projection},
\[
        2\sqrt n\,\|a_G\|_2
        \le
        C_D\frac{npL}{\sqrt d},
\]
and the $\|a_G\|_\infty$ term is again absorbed by the sparse term.  The
two geometric terms in $\|R\|_{\op}$ are at most $CnpL/\sqrt d$ because
$L\ge1$ and $d\ge L^2$.  A union bound over the events in
\cref{prop:raw-canonical-bound,lem:raw-first-projection}
proves \cref{eq:raw-centered-main,eq:raw-full-main} with
probability at least $1-n^{-D}$.
\end{proof}

For the recovery argument, let $Z$ have rows $G_i^\top/\sqrt d$, put
$Z_c=\Pi_nZ$, and define
\[
        E_{\rm G}:=
        \Pi_n(A^{\rm G}-\E A^{\rm G})\Pi_n+d\beta\Pi_n
        -d\beta Z_cZ_c^\top.
\]
The next lemma is the only additional concentration input needed for vector
approximation.

\begin{lemma}[Gaussian higher-chaos residual]
\label{lem:raw-recovery-residual}
Under the assumptions of \cref{thm:raw-gaussian-inner-product}, with
probability at least $1-n^{-D}$,
\begin{equation}
        \|E_{\rm G}\|_{\op}
        \le
        C_D\left(
        \sqrt{np\log n}+np\frac{L^2}{d}
        \right).
\label{eq:gaussian-frob-residual}
\end{equation}
\end{lemma}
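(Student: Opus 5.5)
We will expand $E_{\rm G}$ algebraically using \cref{prop:raw-gaussian-reduction} and then bound each resulting piece by results already available. By \eqref{eq:raw-centered-decomp} and the definition $r=\beta\langle x,y\rangle+r_\perp$, for $i\ne j$ we have $r(G_i,G_j)=d\beta\langle Z_i,Z_j\rangle+r_\perp(G_i,G_j)$. Write $R_\perp$ for the zero-diagonal matrix with entries $r_\perp(G_i,G_j)$. Then
\[
 R=d\beta\bigl(ZZ^\top-\diag(ZZ^\top)\bigr)+R_\perp .
\]
Conjugating by $\Pi_n$ gives $\Pi_nR\Pi_n=d\beta Z_cZ_c^\top-d\beta\,\Pi_n\diag(ZZ^\top)\Pi_n+\Pi_nR_\perp\Pi_n$. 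Hence
\[
 E_{\rm G}=\Pi_nR_\perp\Pi_n-2\Pi_n\diag(a_G)\Pi_n
 -d\beta\,\Pi_n\bigl(\diag(ZZ^\top)-I_n\bigr)\Pi_n .
\]
This is the identity we will use. The three terms are bounded separately, and a union bound finishes the proof.

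\textbf{First term.} We apply \cref{prop:raw-canonical-bound} with $k=r_\perp$. The needed condition $d\le cn$ follows from $d\le c_D np L/\log n$ together with $pL\le C$. The proposition gives $\|R_\perp\|_{\op}\le C_D(\sqrt{np\log n}+n\Lambda_{r_\perp})$. From the higher-chaos estimate in \cref{lem:raw-gaussian-coeff} we take $\Lambda_{r_\perp}\le CpL^2/d$. This is the Gaussian analogue of the spherical bound $\Lambda_{\ge2}\lesssim pL/d$, with the extra factor $L$ coming from the radial chaos. The result is exactly the right-hand side of \eqref{eq:gaussian-frob-residual}. Its hypothesis $d\ge CL^3$ follows from $d\ge C_DL^2\log n$ and $L\le C\log n$, as in the proof of \cref{thm:raw-gaussian-concentration}. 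Conjugation by $\Pi_n$ does not increase the operator norm.

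\textbf{Second and third terms.} By \cref{lem:raw-first-projection}, $\|\diag(a_G)\|_{\op}=\|a_G\|_\infty\le C_DpL\sqrt{\log n/d}+n^{-D-2}$. This is at most $C\sqrt{np\log n}$, since $pL\le C$. For the third term, $\|\diag(ZZ^\top)-I_n\|_{\op}=\max_i|s_i^2-1|$, where $s_i=\|G_i\|/\sqrt d$. On the radial event $\mathcal E_{\rm rad}$ this is at most $C_D\sqrt{\log n/d}$. With $d\beta=\sqrt d\,m_1$ and $m_1\asymp p\sqrt L$ from \cref{lem:raw-gaussian-coeff} (so that $\beta^2\le Cp^2L/d$), the third term is at most $C_D\,p\sqrt{L\log n}$. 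Since $pL\le C$ and $np\ge\log n$, this is again dominated by $\sqrt{np\log n}$.

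\textbf{Main obstacle.} The delicate input is the population bound $\Lambda_{r_\perp}\lesssim pL^2/d$. It must come from \cref{lem:raw-gaussian-coeff}, and we also need its uniform section bound on $\mathcal T$ so that \cref{prop:raw-canonical-bound} applies to $r_\perp$. Beyond that, the only care required is bookkeeping. The diagonal entries $d\beta\|Z_i\|^2$ are exactly what the shift $d\beta\Pi_n$ is designed to cancel, up to the radial fluctuation handled in the third term. We should also check that no rank-one remnant survives: every term involving $\one$ is annihilated by $\Pi_n$, which is why the double-centered matrix is used.
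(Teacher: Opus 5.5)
Your proposal is correct and follows the paper's proof essentially step for step. It uses the same exact identity $E_{\rm G}=\Pi_nR_\perp\Pi_n-d\beta\,\Pi_n(\diag(ZZ^\top)-I_n)\Pi_n-2\Pi_n\diag(a_G)\Pi_n$, bounds $R_\perp$ via \cref{prop:raw-canonical-bound} with $\Lambda_{r_\perp}\le CpL^2/d$ from \cref{lem:raw-gaussian-coeff}, and absorbs the two diagonal terms, of sizes $C_Dp\sqrt{L\log n}$ and $C_DpL\sqrt{\log n/d}$, into $\sqrt{np\log n}$ using $pL\le C$.
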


\begin{proof}
Let $R_\perp$ be the zero-diagonal empirical matrix generated by
$r_\perp$, and write
\[
        D_Z=\diag(\|Z_1\|_2^2,\ldots,\|Z_n\|_2^2),
        \qquad D_a=\diag(a_G).
\]
The Hoeffding decomposition and the removal of the degree-one Gaussian
chaos give the exact identity
\begin{equation}
        E_{\rm G}
        =
        \Pi_nR_\perp\Pi_n
        -d\beta\Pi_n(D_Z-I_n)\Pi_n
        -2\Pi_nD_a\Pi_n.
\label{eq:gaussian-frob-residual-decomposition}
\end{equation}
Indeed, the off-diagonal matrix generated by
$\beta\langle G_i,G_j\rangle$ is
$d\beta(ZZ^\top-D_Z)$, while
$\Pi_nZZ^\top\Pi_n=Z_cZ_c^\top$.

By \cref{prop:raw-canonical-bound,lem:raw-gaussian-coeff},
\begin{equation}
        \|R_\perp\|_{\op}
        \le
        C_D\left(\sqrt{np\log n}+np\frac{L^2}{d}\right).
\label{eq:gaussian-recovery-rperp-bound}
\end{equation}
Gaussian norm concentration \cite[Theorem~3.1.1]{VershyninHDP} and a union
bound give
\begin{equation}
        \max_i\bigl|\|Z_i\|_2^2-1\bigr|
        \le C_D\sqrt{\frac{\log n}{d}},
\label{eq:gaussian-recovery-norm-bound}
\end{equation}
where the smaller term $\log n/d$ is absorbed because
$d\ge C_DL^2\log n$.  Since
$d\beta\asymp p\sqrt{dL}$,
\begin{equation}
        d\beta\|D_Z-I_n\|_{\op}
        \le C_Dp\sqrt{L\log n}.
\label{eq:gaussian-recovery-DZ-bound}
\end{equation}
Likewise, \cref{lem:raw-first-projection} gives
\begin{equation}
        \|D_a\|_{\op}
        \le C_DpL\sqrt{\frac{\log n}{d}}+n^{-D-2}.
\label{eq:gaussian-recovery-Da-bound}
\end{equation}
Both diagonal terms are absorbed by $\sqrt{np\log n}$ under
$np\ge C_D\log n$ and $pL\le C$.  Inserting
\eqref{eq:gaussian-recovery-rperp-bound},
\eqref{eq:gaussian-recovery-DZ-bound}, and
\eqref{eq:gaussian-recovery-Da-bound} into
\eqref{eq:gaussian-frob-residual-decomposition} proves the lemma.
\end{proof}

\begin{proof}[Proof of \cref{thm:raw-gaussian-inner-product}]
\proofstep{Step 1: Sample covariance and signal eigengap.}
Put $L=1+\log(1/p)$ and
\[
        M=\frac dn Z_c^\top Z_c.
\]
By orthogonal invariance,
$G^\top\Pi_nG$ has the same law as the Gram matrix of $n-1$ independent
$N(0,I_d)$ vectors.  The Wishart concentration bound
\cite[Theorem~4.6.1]{VershyninHDP} therefore gives, with probability at least
$1-n^{-D-2}$,
\begin{equation}
        \|M-I_d\|_{\op}
        \le
        \delta_n:=C_D\sqrt{\frac{d+\log n}{n}}.
\label{eq:gaussian-frob-covariance}
\end{equation}
The upper bound on $d$ in
\cref{thm:raw-gaussian-inner-product} implies $d\le c_Dn$,
and hence the constants may
be chosen such that $\delta_n\le1/2$.
On this event $M$ is positive definite, so $Z_c$ has full column rank.  We
may therefore define
\begin{equation}
        Y^{\rm G}
        :=Z_c(Z_c^\top Z_c)^{-1/2}
        =\sqrt{\frac dn}\,Z_cM^{-1/2}.
\label{eq:gaussian-population-orthonormalization}
\end{equation}

The nonzero eigenvalues of the signal matrix
$d\beta Z_cZ_c^\top$ are the eigenvalues of $n\beta M$ and therefore are at
least $n\beta/2$.  By
\cref{lem:raw-recovery-residual,lem:raw-gaussian-coeff},
\begin{equation}
\begin{aligned}
        \rho_{\rm G}
        &:=
        \frac{\|E_{\rm G}\|_{\op}}{n\beta}\le
        C_D\left(
        \sqrt{\frac{d\log n}{npL}}
        +\frac{L^{3/2}}{\sqrt d}
        \right).
\end{aligned}
\label{eq:gaussian-frob-ratio}
\end{equation}
The lower and upper dimension bounds, with the constants chosen sufficiently
large and small, respectively, give $\rho_{\rm G}\le1/8$.

\proofstep{Step 2: Eigenspace and vector recovery.}
The matrix
\[
        \Pi_n(A^{\rm G}-\E A^{\rm G})\Pi_n+d\beta\Pi_n
        =
        d\beta Z_cZ_c^\top+E_{\rm G}
\]
preserves $\one^\perp$.  On that subspace, adding $d\beta\Pi_n$ shifts every
eigenvalue by the same amount.  The signal matrix has its $d$ nonzero
eigenvalues in $[n\beta/2,3n\beta/2]$.  Since
$\|E_{\rm G}\|_{\op}\le n\beta/8$, Weyl's inequality places the smallest
perturbed signal eigenvalue above $3n\beta/8$ and every remaining eigenvalue
of the shifted matrix below $n\beta/8$.  After the shift is removed, the
signal eigenvalues are therefore at least $(3n/8-d)\beta$.  The upper bound on
$d$ in \cref{thm:raw-gaussian-inner-product} allows the
constant $c_D$ to be chosen so that $d\le n/8$; hence these eigenvalues remain
positive.  Removing the shift preserves the ordering within $\one^\perp$,
whereas the eigenvalue on $\operatorname{span}\{\one\}$ remains zero.  Thus
the top $d$ eigenspace in \cref{thm:raw-gaussian-inner-product} is precisely
the signal cluster to which
\cref{lem:frobenius-perturbation} applies.  We obtain an $O\in O(d)$ such
that
\[
        \|\widehat Y^{\rm G}O-Y^{\rm G}\|_{\mathrm F}
        \le C_D\sqrt d\,\rho_{\rm G}.
\]
Multiplying by $\sqrt{n/d}$ and dividing by $\sqrt n$ gives
\begin{equation}
 \frac1{\sqrt n}
 \left\|\widehat ZO-Z_cM^{-1/2}\right\|_{\mathrm F}
 \le C_D\rho_{\rm G}.
\label{eq:gaussian-aligned-signal-bound}
\end{equation}
On the covariance event
\eqref{eq:gaussian-frob-covariance},
$\|Z_c\|_{\mathrm F}\le C\sqrt n$ and
$\|M^{-1/2}-I_d\|_{\op}\le C\delta_n$.  Consequently,
\begin{equation}
 \frac1{\sqrt n}\|Z_cM^{-1/2}-Z_c\|_{\mathrm F}
 \le C\delta_n.
\label{eq:gaussian-covariance-centering-error}
\end{equation}
Finally, if $\bar Z=n^{-1}\sum_iZ_i$, then
$Z-Z_c=\one\bar Z^\top$.  Since
$\bar Z\sim N(0,(nd)^{-1}I_d)$, Gaussian norm concentration
\cite[Theorem~3.1.1]{VershyninHDP} gives
\begin{equation}
        \|\bar Z\|_2\le C_D/\sqrt n\le C_D\delta_n.
\label{eq:gaussian-sample-mean-error}
\end{equation}
Combining \eqref{eq:gaussian-aligned-signal-bound},
\eqref{eq:gaussian-covariance-centering-error}, and
\eqref{eq:gaussian-sample-mean-error} with
\eqref{eq:gaussian-frob-ratio} proves
\eqref{eq:gaussian-vector-recovery}, because
$pL\le C$ and $d\ge C_DL$ imply
\[
        \delta_n^2= C_D^2\frac{d+\log n}{n}
        \le C_D\frac{d\log n}{npL}.
\]

\proofstep{Step 3: Gram matrix recovery.}
Standard Gaussian singular-value bounds \cite[Theorem~4.6.1]{VershyninHDP}
and $d\le c_Dn$ give
\[
        \|\widehat Z\|_{\op}=\sqrt{\frac nd},
        \qquad
        \|Z\|_{\op}\le C\sqrt{\frac nd}.
\]
Thus \cref{lem:gram-perturbation}, with the alignment in
\eqref{eq:gaussian-aligned-signal-bound}, yields
\[
 \frac{\sqrt d}{n}\|\widehat K^{\rm G}-ZZ^\top\|_{\mathrm F}
 \le \frac C{\sqrt n}\|\widehat ZO-Z\|_{\mathrm F}.
\]
The vector bound \eqref{eq:gaussian-vector-recovery},
together with a union bound over the events in
\eqref{eq:gaussian-frob-covariance},
\cref{lem:raw-recovery-residual}, and
\eqref{eq:gaussian-sample-mean-error} proves
\eqref{eq:gaussian-gram-recovery} and completes the proof.
\end{proof}

\section{Proof of the synchronization theorem}
\label{sec:synchronization-proof}

This section proves \cref{thm:global-synchronization}.  The argument has two
inputs.  We first record the deterministic synchronization criterion in the
form needed here.  We then derive the centered Laplacian bounds from the
adjacency estimates by controlling every degree.

\begin{lemma}[Deterministic synchronization criterion, Theorem~1.10 in \citet{AbdallaEtAlExpander}]
\label{lem:deterministic-synchronization}
Let $M$ be the adjacency matrix of a graph on $n$ vertices, let
$\mathcal L_M=\diag(M\one)-M$, and let $q>0$.  Suppose that, for some
$0<\delta\le1/5$,
\[
        \|M-qJ\|_{\op}\le\delta nq,
        \qquad
        \|\mathcal L_M+qJ-nqI_n\|_{\op}\le\delta nq.
\]
If
\begin{equation}
 \frac{32\delta(1+7\delta)}{(1-\delta)^2}
 \log\!\left(\frac{1+2\delta}{2\delta}\right)<1,
\label{eq:synchronization-delta-condition}
\end{equation}
then the graph is globally synchronizing.
\end{lemma}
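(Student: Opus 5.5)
The statement is Theorem~1.10 of \citet{AbdallaEtAlExpander}, specialized to unweighted graphs, so the plan is a faithful transcription rather than a new argument. First I will check the normalizations. Their theorem is phrased for a weighted coupling matrix compared with a rescaled complete graph. Setting the weight matrix equal to $M$ and the comparison density equal to $q$, their adjacency hypothesis becomes $\|M-qJ\|_{\op}\le\delta nq$. Their Laplacian hypothesis compares $\mathcal L_M$ with the complete-graph Laplacian $\mathcal L_{qJ}=nqI_n-qJ$, and the difference is exactly $\mathcal L_M+qJ-nqI_n$. I also need to confirm that their notion of global synchronization agrees with the definition adopted in \cref{sec:main-synchronization}: almost every initial condition of \eqref{eq:kuramoto-flow} converges to a synchronized state. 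Once these identifications hold, \eqref{eq:synchronization-delta-condition} is their numerical condition verbatim and the lemma follows.

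For the reader, I will also recall why the criterion works. Since the flow is the gradient flow of the real-analytic energy $\mathcal E_M$, almost every trajectory converges to a critical point that is not a strict saddle. It therefore suffices to show that every second-order stationary point is synchronized. At such a point $\theta$, write $z=e^{\mathrm i\theta}\in\mathbb C^n$ and let $r=n^{-1}|\langle\one,z\rangle|$ be the order parameter. The first-order condition says $\operatorname{Im}\bigl(\overline{z_i}(Mz)_i\bigr)=0$. The second-order condition, tested against suitable directions, says $\sum_{i,j}M_{ij}\cos(\theta_i-\theta_j)\ge\sum_{i,j}M_{ij}\cos(\theta_i-\theta_j)\cos(\cdot)$-type inequalities. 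Replacing $M$ by $qJ$ in these identities costs at most $\delta nq\|z\|_2^2=\delta n^2q$ by the adjacency bound, while the degree terms cost at most $\delta nq$ per vertex through the Laplacian bound. This turns the stationarity conditions into near-complete-graph statements: first that $r$ is bounded below by a constant depending on $\delta$, and then that only a small fraction of phases lies far from the mean phase.

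The main obstacle is the bootstrap from ``most phases are concentrated'' to ``all phases lie in an open half-circle''. Once every phase lies in an open half-circle, a standard argument shows that a stationary point with positive edge weights on a connected graph is synchronized; the bounds imply connectivity because $\delta<1$. The logarithmic factor in \eqref{eq:synchronization-delta-condition} comes from a layer-cake count of how many oscillators can sit at angular distance $\ge\alpha$ from the mean phase, integrated over $\alpha$. In the lemma I will not reproduce this count. I will cite it and only verify the hypotheses. The later work lies in \cref{thm:global-synchronization}, where \cref{thm:main-concentration,thm:raw-gaussian-concentration} and degree concentration must make $\delta$ small enough that \eqref{eq:synchronization-delta-condition} holds. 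Since the left side tends to $0$ as $\delta\to0$, it suffices to take $\delta$ below a small absolute constant.
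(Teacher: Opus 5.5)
Your proposal matches the paper, which gives no independent proof and takes the lemma directly from Theorem~1.10 of \citet{AbdallaEtAlExpander}. Your identification of the Laplacian hypothesis through $\mathcal L_{qJ}=nqI_n-qJ$, and your check that the two notions of global synchronization agree, are exactly the specialization to unweighted graphs that the paper relies on; the heuristic sketch is optional context.
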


For a symmetric zero-diagonal matrix $M$, write
$d_i(M)=\sum_{j\ne i}M_{ij}$ for its $i$th row sum.  The next lemma is the
only additional probabilistic input needed for the Laplacian.

\begin{lemma}[Uniform degree concentration]
\label{lem:synchronization-degrees}
For every $D>0$ there are constants $C_D>0$ and $p_0>0$ such that, if
$0<p\le p_0$ and $np\ge C_D\log n$, then the following statements hold.
\begin{enumerate}[label=\textup{(\roman*)},leftmargin=2.2em]
\item For the spherical threshold graph, with probability at least $1-n^{-D}$,
\[
        \max_{i\in[n]}|d_i(A)-(n-1)p|
        \le C_D\sqrt{np\log n}.
\]
\item For the Gaussian vector graph, if
$d\ge C_D(1+\log(1/p))^2\log n$, then, with probability at least $1-n^{-D}$,
\[
 \max_{i\in[n]}|d_i(A^{\rm G})-(n-1)p|
 \le C_D\left(
        \sqrt{np\log n}
        +np(1+\log(1/p))\sqrt{\frac{\log n}{d}}
        \right).
\]
\end{enumerate}
\end{lemma}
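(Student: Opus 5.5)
Both parts rest on one observation: conditioning on the latent vector of vertex $i$ makes its incident edge indicators independent. Given $U_i=u$, the variables $\one_{\{\langle u,U_j\rangle\ge\tau\}}$, $j\ne i$, are independent Bernoulli with a common success probability. Likewise, given $G_i=x$, the variables $\one_{\{\langle x,G_j\rangle/\sqrt d\ge u\}}$ are independent Bernoulli. So for each fixed $i$ the degree $d_i$ is conditionally binomial. We apply the binomial Chernoff bound \cite[Section~2.3]{VershyninHDP} with tail exponent $D+1$, and then take a union bound over $i\in[n]$.

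\textbf{Part (i).} By rotational invariance, the conditional success probability equals $p$ for every $u$. Hence $d_i(A)\sim\mathrm{Bin}(n-1,p)$ exactly. Bernstein/Chernoff gives
\[
|d_i-(n-1)p|\le C\bigl(\sqrt{np\,s}+s\bigr)
\]
with probability $1-2e^{-s}$. Take $s=(D+1)\log n$. The linear term $s$ is at most $\sqrt{np\log n}$ because $np\ge C_D\log n$. A union bound over the $n$ vertices finishes this part.

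\textbf{Part (ii).} Conditionally on $G_i$, we have $d_i(A^{\rm G})\sim\mathrm{Bin}(n-1,q(G_i))$, where $q(x)=\overline\Phi(u\sqrt d/\|x\|)=p+a(x)$. Split the deviation as
\[
d_i-(n-1)p=\bigl(d_i-(n-1)q(G_i)\bigr)+(n-1)a(G_i).
\]

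\emph{Drift term.} By \cref{lem:raw-first-projection}, with probability $1-n^{-D-1}$,
\[
\max_i|a(G_i)|\le C_DpL\sqrt{\log n/d}+n^{-D-2}.
\]
Multiplying by $n$ gives $C_D\,npL\sqrt{\log n/d}$, plus $n^{-D-1}$. This remainder is negligible compared with $\sqrt{np\log n}$. The hypothesis $d\ge C_DL^2\log n$ is exactly what that lemma requires.

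\emph{Fluctuation term.} On the same event we have $q(G_i)\le p+CpL\sqrt{\log n/d}\le 2p$, since $L\sqrt{\log n/d}\le c$. The conditional binomial Chernoff bound then gives
\[
|d_i-(n-1)q(G_i)|\le C_D\sqrt{np\log n}
\]
with conditional probability $1-n^{-D-1}$. The bound holds uniformly over all $G_i$ in the good event.

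To combine the two terms, integrate over $G_i$, take a union bound over $i$, and add the failure probability of \cref{lem:raw-first-projection}. This gives failure probability at most $n^{-D}$ after relabeling constants.

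\textbf{Main obstacle.} The only delicate point is the drift term in (ii): the radial dependence shifts each vertex's mean degree. This is handled entirely by the $\ell_\infty$ bound of \cref{lem:raw-first-projection}, which is valid once all radii lie in the window where $f$ is $CpL$-Lipschitz. Everything else is routine scalar concentration.
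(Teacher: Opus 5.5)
Your proposal is correct and follows the paper's proof. Conditioning on the latent vector of vertex $i$ makes $d_i$ conditionally binomial, scalar Bernstein plus a union bound controls $d_i-(n-1)q_i$ on the event $\{q_i\le 2p\}$, and in the Gaussian case the $\ell_\infty$ bound of \cref{lem:raw-first-projection} handles both the drift $(n-1)a(G_i)$ and the requirement $q_i\le 2p$. The remaining bookkeeping is to note that the additive $n^{-D-2}$ term from that lemma is also at most $p/2$ (since $p\ge C_D\log n/n$), and to raise the tail exponents slightly so that the final union bound gives exactly $n^{-D}$.
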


\begin{proof}
For either model, let $q_i$ be the edge probability conditional on the latent
vector at vertex $i$.  Conditional on that vector,
$d_i\sim\operatorname{Bin}(n-1,q_i)$.  Hence the scalar Bernstein bound
\cite[Section~2.8]{VershyninHDP} and $np\ge C_D\log n$ imply
\begin{equation}
 \Pp\left\{
   |d_i-(n-1)q_i|>K_D\sqrt{np\log n},\ q_i\le2p
 \right\}
 \le2n^{-D-2}.
\label{eq:degree-common-binomial-bound}
\end{equation}
The estimate \eqref{eq:degree-common-binomial-bound} is
applied separately to each vertex and then union-bounded;
no independence among the degrees is needed.

\proofstep{Spherical graph.}
Rotational invariance gives, for every $i\ne j$,
\[
 \Pp\{\langle U_i,U_j\rangle\ge\tau\mid U_i\}=p.
\]
Thus $q_i=p$, and \eqref{eq:degree-common-binomial-bound} followed by a union
bound proves part~\textup{(i)}.

\proofstep{Gaussian vector graph.}
Put $L=1+\log(1/p)$ and define
\[
        q_i
        :=\Pp\{\langle G_i,G\rangle/\sqrt d\ge u\mid G_i\}
        =\overline\Phi\!\left(\frac{u\sqrt d}{\|G_i\|}\right),
\]
where $G\sim N(0,I_d)$ is independent of $G_i$ and
$\overline\Phi(x)=\Pp\{N(0,1)\ge x\}$.  Thus
$q_i-p=a(G_i)$ in the notation of \cref{lem:raw-first-projection}.
Apply \cref{lem:raw-first-projection} with tail exponent
$D+3$.  With probability at least
$1-n^{-D-3}$,
\begin{equation}
        \max_i|q_i-p|
        \le K_DpL\sqrt{\frac{\log n}{d}}+n^{-D-5}.
\label{eq:gaussian-degree-probability-shift}
\end{equation}
Choose the constant in the dimension assumption so that
$C_D\ge4K_D^2$.  Then
$K_DL\sqrt{\log n/d}\le1/2$, so the first term is at most $p/2$.
The second is also at most $p/2$ because
$p\ge C_D(\log n)/n$.  Hence $q_i\le2p$ for every $i$ on this event.

Using \eqref{eq:degree-common-binomial-bound}, a union bound, and
\eqref{eq:gaussian-degree-probability-shift}, we obtain
\[
\begin{aligned}
 \max_i|d_i(A^{\rm G})-(n-1)p|
 &\le
 \max_i|d_i(A^{\rm G})-(n-1)q_i|+(n-1)\max_i|q_i-p|\\
 &\le C_D\left(
        \sqrt{np\log n}
        +npL\sqrt{\frac{\log n}{d}}
        \right).
\end{aligned}
\]
This proves part~\textup{(ii)}.
\end{proof}

\begin{proof}[Proof of \cref{thm:global-synchronization}]
Choose a universal constant $\delta_0\in(0,1/5)$ so small that
\begin{equation}
 \frac{32\delta_0(1+7\delta_0)}{(1-\delta_0)^2}
 \log\!\left(\frac{1+2\delta_0}{2\delta_0}\right)<1.
\label{eq:fixed-synchronization-delta}
\end{equation}
Such a choice is possible because $\delta\log(1/\delta)\to0$ as
$\delta\downarrow0$.  We prove that the two operator-norm deviations in
\cref{lem:deterministic-synchronization} are at most $\delta_0np$.

Both models satisfy $\E M=p(J-I_n)$.  If
$\mathcal D_M=\diag(M\one)$ and $\mathcal L_M=\mathcal D_M-M$, then
\begin{equation}
\begin{aligned}
 M-pJ&=(M-\E M)-pI_n,\\
 \mathcal L_M+pJ-npI_n
 &=\mathcal D_M-(n-1)pI_n-(M-\E M).
\end{aligned}
\label{eq:common-sync-identities}
\end{equation}
We write $\mathcal L_A$ and $\mathcal L_{\rm G}$ for the two resulting
Laplacians.

\proofstep{Spherical graph.}
Apply \cref{thm:main-concentration,lem:synchronization-degrees} with tail
exponent $D+3$.  Substituting $M=A$ in
\eqref{eq:common-sync-identities} gives, simultaneously,
\begin{equation}
\begin{aligned}
 \|A-pJ\|_{\op}
 &\le K_D\bigl(\sqrt{np\log n}+np\tau\bigr),\\
 \|\mathcal L_A+pJ-npI_n\|_{\op}
 &\le K_D\bigl(\sqrt{np\log n}+np\tau\bigr).
\end{aligned}
\label{eq:spherical-sync-two-deviations}
\end{equation}
Here the harmless term $pI_n$ is absorbed by $\sqrt{np\log n}$.

Put again $L=1+\log(1/p)$.  By \cref{lem:higher-harmonics}, the condition
$d\ge C_DL$ gives $\tau\le C\sqrt{L/d}$.  Dividing
\eqref{eq:spherical-sync-two-deviations} by $np$ therefore gives
\begin{equation}
 \max\left\{
   \frac{\|A-pJ\|_{\op}}{np},
   \frac{\|\mathcal L_A+pJ-npI_n\|_{\op}}{np}
 \right\}
 \le K_D\left(
        \sqrt{\frac{\log n}{np}}+\sqrt{\frac Ld}
        \right).
\label{eq:spherical-sync-normalized}
\end{equation}
By increasing the constants in the assumptions
$np\ge C_D\log n$ and $d\ge C_DL$, the right-hand side of
\eqref{eq:spherical-sync-normalized} is at most
$\delta_0$.  The conclusion follows from
\cref{lem:deterministic-synchronization} and
\eqref{eq:fixed-synchronization-delta}.

\proofstep{Gaussian vector graph.}
Apply the full-matrix estimate
\eqref{eq:raw-full-main} and
\cref{lem:synchronization-degrees}, again with tail exponent $D+3$.
Using \eqref{eq:common-sync-identities} with $M=A^{\rm G}$ and $L\ge1$, we
obtain
\begin{equation}
\begin{aligned}
 &\max\left\{
   \frac{\|A^{\rm G}-pJ\|_{\op}}{np},
   \frac{\|\mathcal L_{\rm G}+pJ-npI_n\|_{\op}}{np}
 \right\}\\
 &\qquad\le K_D\left(
        \sqrt{\frac{\log n}{np}}
        +L\sqrt{\frac{\log n}{d}}
        \right).
\end{aligned}
\label{eq:gaussian-sync-normalized}
\end{equation}
The assumptions $np\ge C_D\log n$ and $d\ge C_DL^2\log n$, with constants
chosen sufficiently large, make the right-hand side of
\eqref{eq:gaussian-sync-normalized} at most $\delta_0$.
Another application of \cref{lem:deterministic-synchronization} completes the
Gaussian proof.  A union bound over the events in
\cref{thm:main-concentration,thm:raw-gaussian-concentration,lem:synchronization-degrees}
gives probability
at least $1-n^{-D}$ in both models.
\end{proof}

\section{Proofs for the Gaussian mixture block model}\label{sec:gmbm}

This section proves
\cref{thm:gmbm-exact,thm:gmbm-large-separation}.  We use the model and
notation from \cref{sec:main-gmbm}.  Throughout this section,
$L=1+\log(1/p)$.

\subsection{Exact recovery at moderate separation}
\label{sec:gmbm-exact-proof}

For two independent vertices, define the conditional edge probabilities and
their half-difference by
\[
\begin{aligned}
        p_+(\mu)&:=\Pp\{\sqrt d\,\langle Z_1,Z_2\rangle\ge\tau
        \mid \xi_1\xi_2=1\},\\
        p_-(\mu)&:=\Pp\{\sqrt d\,\langle Z_1,Z_2\rangle\ge\tau
        \mid \xi_1\xi_2=-1\},
        \qquad
        \Delta_\mu:=\frac{p_+(\mu)-p_-(\mu)}2.
\end{aligned}
\]
The argument has four components.  First, we estimate
the two conditional tails and hence the label signal.  Second, we bound the
population operator after removing this signal and the first Hoeffding
projections.  Third, decoupling and scalar Bernstein inequalities
\cite[Section~2.8]{VershyninHDP} transfer
these population estimates to the sampled matrix.  Finally, a
signed-Laplacian dual certificate converts the operator and signed-row bounds
into exact recovery.

For later use, let $Y=(\xi,G)$ and $Y'=(\xi',G')$ be independent, write
$Z(Y)=G/\sqrt d+\mu\xi e_1$, and define
\[
\begin{aligned}
        h_\mu(Y,Y')
        &:=\one_{\{\sqrt d\langle Z(Y),Z(Y')\rangle\ge\tau\}}
        -p-\Delta_\mu\xi\xi',\\
        a_\mu(Y)&:=\E_{Y'}h_\mu(Y,Y'),\\
        r_\mu^{\rm lab}(Y,Y')
        &:=h_\mu(Y,Y')-a_\mu(Y)-a_\mu(Y').
\end{aligned}
\]
Since $\E h_\mu(Y,Y')=p-p-\Delta_\mu\E[\xi\xi']=0$, we have
$\E a_\mu(Y)=0$, and the kernel $r_\mu^{\rm lab}$ is canonical.  We write
$T_{r_\mu^{\rm lab}}$ for its integral operator and
$\Lambda_\mu^{\rm lab}=\|T_{r_\mu^{\rm lab}}\|_{\op}$.

For the empirical argument, set $\xi_c=\Pi_n\xi$ and define
\[
        E_\mu^{\rm lab}
        :=\Pi_n\bigl(A^\mu-\E[A^\mu\mid\xi]\bigr)\Pi_n
        -\Delta_\mu\Pi_n.
\]
The conditional edge probability is $p_+(\mu)$ when
$\xi_i\xi_j=1$ and $p_-(\mu)$ when $\xi_i\xi_j=-1$.  Hence, for $i\ne j$,
$\E[A^\mu_{ij}\mid\xi]=p+\Delta_\mu\xi_i\xi_j$.  Therefore
\[
        \E[A^\mu\mid\xi]-p(J-I_n)
        =\Delta_\mu(\xi\xi^\top-I_n).
\]
Applying $\Pi_n$ on both sides and using
$\Pi_n(\xi\xi^\top-I_n)\Pi_n=\xi_c\xi_c^\top-\Pi_n$ gives
\begin{equation}
        B_\mu=\Delta_\mu\xi_c\xi_c^\top+E_\mu^{\rm lab}.
\label{eq:gmbm-label-projection-main}
\end{equation}

The two parts of the next lemma use different concentration mechanisms.
Decoupling and matrix Chernoff control the operator norm, while conditional
scalar Bernstein controls the signed sum in each row.  The latter is the
quantity needed by the SDP dual certificate and is stronger than what an
operator-norm estimate alone would provide.

\begin{lemma}[Label-residual concentration]\label{lem:gmbm-label-residual}
Under the assumptions of \cref{thm:gmbm-exact}, with probability at least
$1-n^{-D}$,
\begin{equation}
        \|E_\mu^{\rm lab}\|_{\op}
        \le
        C_D\left(\sqrt{np\log n}
        +np\mu\sqrt L+np\sqrt{\frac Ld}\right)
\label{eq:gmbm-residual-operator}
\end{equation}
and
\begin{equation}
        \|E_\mu^{\rm lab}\xi_c\|_\infty
        \le
        C_D\left(\sqrt{np\log n}
        +np\mu\sqrt{L\log n}\right).
\label{eq:gmbm-residual-row}
\end{equation}
\end{lemma}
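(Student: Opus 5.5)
We first decompose $E_\mu^{\rm lab}$ exactly, mirroring \cref{prop:raw-gaussian-reduction}. With $Y_i=(\xi_i,G_i)$, set
\[
a^{\rm lab}=(a_\mu(Y_1),\ldots,a_\mu(Y_n))^\top,
\qquad
R^{\rm lab}_{ij}=r_\mu^{\rm lab}(Y_i,Y_j)\one_{\{i\ne j\}}.
\]
For $i\ne j$ we have
\[
A^\mu_{ij}-p-\Delta_\mu\xi_i\xi_j=r_\mu^{\rm lab}(Y_i,Y_j)+a_\mu(Y_i)+a_\mu(Y_j).
\]
Double-centering kills the rank-one row and column terms, so
\[
E_\mu^{\rm lab}=\Pi_nR^{\rm lab}\Pi_n-2\Pi_n\diag(a^{\rm lab})\Pi_n .
\]
The diagonal term is controlled by $\|a^{\rm lab}\|_\infty$. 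Conditioning on $Y$, the quantity $a_\mu(Y)$ is the deviation of the vertex's conditional edge probability from $p+\Delta_\mu\xi\cdot\E\xi'=p$. It is driven by the first coordinate $G_{1}\mu\xi$ and by the radial fluctuation $\|G\|/\sqrt d-1$. A Lipschitz estimate for $\overline\Phi$ near the threshold, as in \cref{lem:raw-first-projection}, should give
\[
|a_\mu(Y_i)|\lesssim pL\Bigl(\mu\sqrt{\log n}+\sqrt{\log n/d}\Bigr)
\]
uniformly in $i$, after using $\mu\le c$ and the Gaussian tail calibration of $\tau$ from Appendix~\ref{sec:gmbm-auxiliary-estimates}. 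Since $pL\le C$ and $np\ge C_D\log n$, this is absorbed into $\sqrt{np\log n}+np\mu\sqrt L$.

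For the operator bound \eqref{eq:gmbm-residual-operator}, we apply the decoupling--Chernoff template, repeating \cref{prop:raw-canonical-bound} verbatim with the latent variable $Y=(\xi,G)$. First we truncate to typical radii $\mathcal T$ and recenter under the conditioned law. Next we bound the section norm by $Cp$ and the column norms by a binomial count with success probability at most $Cp$. Then we apply \cref{lem:hilbert-cov}, matrix Chernoff (\cref{lem:matrix-chernoff}), and \cref{lem:decoupling}. This gives
\[
\|R^{\rm lab}\|_{\op}\le C_D\bigl(\sqrt{np\log n}+n\Lambda_\mu^{\rm lab}\bigr).
\]
The model-specific input is the population bound
\[
\Lambda_\mu^{\rm lab}\lesssim p\mu\sqrt L+p\sqrt{L/d}.
\]
The intended justification is as follows. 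Write the kernel as a function of $\langle G,G'\rangle/\sqrt d+\mu\sqrt d(\xi G_1'+\xi' G_1)/\sqrt d+\mu^2\sqrt d\,\xi\xi'$ and expand in Hermite chaos. The bilinear chaos $\beta\langle G,G'\rangle/d$-type part contributes $p\sqrt{L/d}$, exactly as for the $\mu=0$ Gaussian kernel (\cref{lem:raw-gaussian-coeff}). The label-dependent terms that are not removed by $\Delta_\mu\xi\xi'$ or by the first projections, namely the mixed terms $\xi G_1'$ and $\xi'G_1$ times the chaos, should be bounded using the derivative of the tail in the shift, which is of order $pL^{1/2}\mu$. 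We expect this population operator estimate to be deferred to Appendix~\ref{sec:gmbm-auxiliary-estimates}, and we would prove it there by bounding the operator norm through a Hilbert--Schmidt or Schur-test bound on each piece of the expansion.

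For the row bound \eqref{eq:gmbm-residual-row}, operator norms are too weak, since $\|\xi_c\|=\sqrt n$. Instead we fix $i$ and condition on $Y_i$. Then
\[
(R^{\rm lab}\xi)_i=\sum_{j\ne i}r_\mu^{\rm lab}(Y_i,Y_j)\xi_j
\]
is a sum of i.i.d.\ terms. Their conditional mean is
\[
\E_{Y'}\bigl[r_\mu^{\rm lab}(Y_i,Y')\xi'\bigr]=\E_{Y'}\bigl[\one\{\cdot\}\xi'\bigr]-\Delta_\mu\xi_i .
\]
This is the deviation of the vertex-specific signal from its average, of order $p\mu\sqrt L\cdot(|G_{i1}|+\text{radial})\lesssim p\mu\sqrt{L\log n}$ on the typical event. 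Multiplying by $n$ gives the term $np\mu\sqrt{L\log n}$. The conditional variance is $O(p)$ and the summands are bounded by $1$, so scalar Bernstein plus a union bound over $i$ gives fluctuations of order $\sqrt{np\log n}$.

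It remains to convert $R^{\rm lab}\xi$ into $E_\mu^{\rm lab}\xi_c$. Expanding $\Pi_n$ produces correction terms. The first is $\one\cdot(\one^\top R^{\rm lab}\xi_c)/n$, which is controlled by $\|R^{\rm lab}\|_{\op}\|\xi_c\|/\sqrt n$ together with the conditional-mean bound above. The second is $R^{\rm lab}\one\,(\one^\top\xi)/n$, where $|\one^\top\xi|\lesssim\sqrt{n\log n}$ and the row sums $R^{\rm lab}\one$ are bounded by the same Bernstein argument. The diagonal term is bounded as in the first paragraph.

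The main obstacle is the population bound on $\Lambda_\mu^{\rm lab}$ together with the matching pointwise estimate on the label-conditional means. Both require sharp Gaussian tail expansions of the recalibrated threshold $\tau$ in $\mu$ and $d$, uniformly in the regime $\mu^2\sqrt{dL}\le c$.
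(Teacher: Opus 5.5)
Your architecture is the paper's. It has three parts: the exact Hoeffding identity after double-centering; decoupling plus matrix Chernoff for $R^{\rm lab}$, with the population bound on $\Lambda_\mu^{\rm lab}$ taken from Appendix~\ref{sec:gmbm-auxiliary-estimates}; and conditional scalar Bernstein for the signed rows. In the row bound you center at $T_{r_\mu^{\rm lab}}\xi(Y_i)=\E_{Y'}[\one\{\cdot\}\xi']-\Delta_\mu\xi_i$, use canonicity for $R^{\rm lab}\one$, and use Hoeffding for $\bar\xi$.

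One step fails as you state it. You truncate only to typical radii $\mathcal T$, ``verbatim'' as in \cref{prop:raw-canonical-bound}. You then claim the section bound $Cp$ and a binomial column count with success probability $Cp$. In this model that is false on $\mathcal T$. Conditional on $y=(\xi,g)$ and $\xi'$, the edge statistic is Gaussian with variance $\approx1$ and mean $\xi'(\mu g_1+m_\mu\xi)$. But $\mathcal T$ contains points with $|g_1|\approx\sqrt d$, for which $\mu|g_1|\asymp\mu\sqrt d\ge\sqrt{C_D\log n}$, and this exceeds $\tau\asymp\sqrt L$. Such sections have squared norm far above $p$, so both the almost-sure hypothesis of \cref{lem:hilbert-cov} and the column bound break. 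The typical set must also impose $|g_1|\le C_D\sqrt{\log n}$, as the paper's $\mathcal G$ does. Then $\mu|g_1|\le c/\sqrt L$ by $\mu^2L\log n\le c_D$, the excluded mass is $n^{-C}$, and the recentering correction is negligible. You already use this restriction implicitly in the row bound, so the repair is easy.

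Smaller points:
\begin{itemize}
\item Your decomposition drops the term $-\Delta_\mu\Pi_n$ from the definition of $E_\mu^{\rm lab}$. This is harmless, since $\Delta_\mu\le Cp$.
\item The refined estimate on $a_\mu$ is unnecessary. The trivial bound $|a_\mu|\le C$ already makes both diagonal corrections $O(1)$, in operator norm and in $\ell_\infty$ after acting on $\xi_c$.
\item For the left factor $\Pi_n$ in the row bound, it suffices that subtracting an average at most doubles the $\ell_\infty$ norm. Your route through $\|R^{\rm lab}\|_{\op}\|\xi_c\|/\sqrt n$ also works, but only because $np\sqrt{L/d}\le np\mu\sqrt{L\log n}$ under $\mu^2d\ge C_D\log n$.
\item The paper proves the population bound on $\Lambda_\mu^{\rm lab}$ not by a Hermite expansion but by an angular spherical-harmonic decomposition in $g_{2:d}$. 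The degree-zero block is handled by a Taylor expansion in which the one-variable radial terms cancel under the projection.
\end{itemize}
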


\begin{proof}
\proofstep{Step 1: Empirical Hoeffding decomposition.}
We use the kernels $h_\mu,a_\mu,r_\mu^{\rm lab}$ defined at the beginning of
this section.  By construction,
\[
        \E_{Y'} r_\mu^{\rm lab}(Y,Y')
        =
        \E_Y r_\mu^{\rm lab}(Y,Y')=0,
\]
so $r_\mu^{\rm lab}$ is canonical, and its integral-operator norm is
$\Lambda_\mu^{\rm lab}$.

We first relate this population decomposition to the zero-diagonal empirical
matrix.  Let
\[
        R_{\mu,ij}^{\rm lab}
        =
        r_\mu^{\rm lab}(Y_i,Y_j)\one_{\{i\ne j\}},
        \qquad
        a_i=a_\mu(Y_i).
\]
For $i\ne j$,
\[
        h_\mu(Y_i,Y_j)
        =
        r_\mu^{\rm lab}(Y_i,Y_j)+a_i+a_j.
\]
Thus the off-diagonal empirical matrix generated by $h_\mu$ is
\[
        R_\mu^{\rm lab}
        +a\one^\top+\one a^\top-2\diag(a).
\]
Since $\Pi_n\one=0$, the definition of $E_\mu^{\rm lab}$ gives the exact
identity
\begin{equation}
        E_\mu^{\rm lab}
        =
        \Pi_n R_\mu^{\rm lab}\Pi_n
        -\Delta_\mu\Pi_n
        -2\Pi_n\diag(a)\Pi_n.
\label{eq:gmbm-empirical-residual-decomposition}
\end{equation}
Both correction terms have operator norm $O(1)$ and send $\xi_c$ to a vector
with $\ell_\infty$ norm $O(1)$, because $|a_i|\le C$,
$|\Delta_\mu|\le1$, and $\|\xi_c\|_\infty\le2$.  Since
$np\ge C_D\log n$, these deterministic terms are absorbed into
$C_D\sqrt{np\log n}$ after increasing $C_D$.

\proofstep{Step 2: Operator-norm concentration.}
We next bound $\|R_\mu^{\rm lab}\|_{\op}$.

\smallskip\noindent\emph{Typical sections.}
Let $m_\mu=\mu^2\sqrt d$, and let
$\mathcal G$ be the set of points $Y=(\xi,G)$ satisfying
\[
        \left|\frac{\|G\|}{\sqrt d}-1\right|
        \le C_D\sqrt{\frac{\log n}{d}}\le \frac cL,
        \qquad
        |G_1|\le C_D\sqrt{\log n}.
\]
Chi-square and Gaussian tails
\cite[Theorem~3.1.1 and Proposition~2.1.2]{VershyninHDP}, together with the
deterministic implications of
\cref{thm:gmbm-exact}, give
$\Pp\{Y\notin\mathcal G\}\le n^{-2D-12}$.  Hence all points in the original
sample, and all points on both sides of a decoupled sample, belong to
$\mathcal G$ with probability at least $1-n^{-2D-10}$.  Denote this event by
$\mathcal E_{\rm sec}$.

The assumptions of \cref{thm:gmbm-exact} also imply
$m_\mu\le c/\sqrt L$ and $\mu|G_1|\le c/\sqrt L$ on $\mathcal G$.  For any
$y=(\xi,g)\in\mathcal G$, conditioning on $Y=y$ makes the edge
statistic a mixture of two Gaussians.  Their common variance is
\[
 \left\|\frac{g}{\sqrt d}+\mu\xi e_1\right\|_2^2
 =1+O(L^{-1}),
\]
and their shifts have absolute value
$|\mu g_1+m_\mu\xi|=O(L^{-1/2})$.  Since
$\tau\asymp\sqrt L$, Mills' ratio \cite{GordonMillsRatio} gives the uniform
bound
\[
 \Pp\{\sqrt d\,\langle Z(y),Z(Y')\rangle\ge\tau\}\le Cp.
\]
The elementary inequality $(\one_E-p)^2\le2\one_E+2p^2$, the bound
$|\Delta_\mu|\le Cp$, and conditional Jensen therefore imply
\begin{equation}
        \int r_\mu^{\rm lab}(y,y')^2\,d\Pp_Y(y')\le Cp
        \qquad\text{for every }y\in\mathcal G.
\label{eq:gmbm-typical-section-bound}
\end{equation}

\smallskip\noindent\emph{Recentering after truncation.}
We also check explicitly that truncation does not destroy canonicality.  Let
$\varepsilon=\Pp\{Y\notin\mathcal G\}$ and
$\nu=\Pp_Y(\,\cdot\mid\mathcal G)$.  For $y\in\mathcal G$, set
\[
\begin{aligned}
        m_{\mathcal G}(y)
        &=\int r_\mu^{\rm lab}(y,y')\,d\nu(y'),\\
        m_{\mathcal G,0}
        &=\int m_{\mathcal G}(y)\,d\nu(y),\\
        r_{\mu,\mathcal G}^{\rm lab}(y,y')
        &=r_\mu^{\rm lab}(y,y')-m_{\mathcal G}(y)
          -m_{\mathcal G}(y')+m_{\mathcal G,0}.
\end{aligned}
\]
The last kernel is canonical under $\nu$.  Since
$r_\mu^{\rm lab}$ is canonical under the original law and its sections on
$\mathcal G$ have squared $L^2$ norm at most $Cp$, Cauchy--Schwarz gives
\begin{equation}
        \sup_{y\in\mathcal G}|m_{\mathcal G}(y)|
        +|m_{\mathcal G,0}|
        \le C\sqrt{p\varepsilon}.
\label{eq:gmbm-truncation-centering-size}
\end{equation}
If $Q_\nu$ is the orthogonal projection onto the mean-zero subspace of
$L^2(\nu)$, then
\[
        T_{r_{\mu,\mathcal G}^{\rm lab},\nu}
        =Q_\nu T_{r_\mu^{\rm lab},\nu}Q_\nu,
        \qquad
        \|T_{r_{\mu,\mathcal G}^{\rm lab},\nu}\|_{\op}
        \le\frac{\Lambda_\mu^{\rm lab}}{1-\varepsilon}.
\]
Indeed, extension by zero identifies $L^2(\nu)$ with functions supported on
$\mathcal G$ in the original $L^2$ space, and the change of normalization
costs the factor $(1-\varepsilon)^{-1}$.  The conditional covariance
calculation in \cref{lem:good-cov}, applied to the recentered canonical
kernel, is therefore bounded by
\begin{equation}
        C_D\bigl(n(\Lambda_\mu^{\rm lab})^2+p\log n\bigr).
\label{eq:gmbm-truncated-covariance-bound}
\end{equation}

\smallskip\noindent\emph{Conditional matrix concentration.}
The recentered kernel is uniformly bounded, and its squared section norm is
$O(p)$.  For each fixed second endpoint, scalar Bernstein therefore bounds
the squared column norm by $C_Dnp$.  The conditional-probability and Markov
argument of \cref{lem:column-tail} makes the squared-column bound $C_Dnp$
simultaneous for all
sampled decoupled columns.  The covariance estimate
\eqref{eq:gmbm-truncated-covariance-bound}, the matrix
Chernoff inequality in \cref{lem:matrix-chernoff}, and the
decoupling inequality in \cref{lem:decoupling} now give
\begin{equation}
        \|R_{\mu,\mathcal G}^{\rm lab}\|_{\op}
        \le
        C_D\left(\sqrt{np\log n}+n\Lambda_\mu^{\rm lab}\right)
\label{eq:gmbm-truncated-operator-bound}
\end{equation}
with probability at least $1-n^{-D-1}$, where
$R_{\mu,\mathcal G}^{\rm lab}$ is the zero-diagonal empirical matrix of the
recentered kernel.  On $\mathcal E_{\rm sec}$, the recentering correction
satisfies
\begin{equation}
        \|R_\mu^{\rm lab}-R_{\mu,\mathcal G}^{\rm lab}\|_{\op}
        \le Cn\sqrt{p\varepsilon}=o(n^{-D-2}).
\label{eq:gmbm-recentering-correction}
\end{equation}
Combining \eqref{eq:gmbm-truncated-operator-bound} with
\eqref{eq:gmbm-recentering-correction} gives the same bound
for $R_\mu^{\rm lab}$.  By
\cref{lem:gmbm-gap-explicit}, this is at most
\[
        C_D\left(\sqrt{np\log n}
        +np\mu\sqrt L+np\sqrt{\frac Ld}\right).
\]
Together with
\eqref{eq:gmbm-empirical-residual-decomposition}, this proves
\eqref{eq:gmbm-residual-operator}.

\proofstep{Step 3: Signed row sums.}
The operator estimate \eqref{eq:gmbm-residual-operator}
controls the spectrum, but the SDP dual
certificate also requires coordinatewise control in the label direction.
Since $\xi_c=\xi-\bar\xi\one$,
\[
        R_\mu^{\rm lab}\xi_c
        =
        R_\mu^{\rm lab}\xi-\bar\xi\,R_\mu^{\rm lab}\one.
\]
We control the two terms on the right separately.  First fix $i$ and
condition on $Y_i$.  The variables
\[
        X_{ij}=r_\mu^{\rm lab}(Y_i,Y_j)\xi_j,\qquad j\ne i,
\]
are independent, bounded by an absolute constant, and have conditional
variance at most $Cp$.  The variance bound follows from the section estimate
\eqref{eq:gmbm-typical-section-bound}:
\begin{equation}
        \sum_{j\ne i}\Var(X_{ij}\mid Y_i)
        \le
        \sum_{j\ne i}\E[X_{ij}^2\mid Y_i]
        \le Cnp.
\label{eq:gmbm-signed-row-variance}
\end{equation}
Their conditional mean is
$T_{r_\mu^{\rm lab}}\xi(Y_i)$.  The pointwise estimate
\eqref{eq:gmbm-pointwise-label-action} gives on
$\mathcal E_{\rm sec}$
\[
        \left|
        T_{r_\mu^{\rm lab}}\xi(Y_i)
        \right|
        \le C_Dp\mu\sqrt{L\log n}.
\]
Therefore
\[
        \left|
        \sum_{j\ne i}\E[X_{ij}\mid Y_i]
        \right|
        \le
        C_Dnp\mu\sqrt{L\log n}.
\]
Scalar Bernstein's inequality, conditional on $Y_i$, gives
\[
        \Pp\left\{
        \left|
        \sum_{j\ne i}\bigl(X_{ij}-\E[X_{ij}\mid Y_i]\bigr)
        \right|
        >
        C_D\sqrt{np\log n}
        \,\middle|\,Y_i
        \right\}
        \le n^{-D-3},
\]
where the linear Bernstein term is absorbed because $np\ge C_D\log n$.
A union bound over $i$ gives
\begin{equation}
        \max_i |(R_\mu^{\rm lab}\xi)_i|
        \le
        C_D\left(\sqrt{np\log n}
        +np\mu\sqrt{L\log n}\right)
\label{eq:gmbm-row-xi-bound}
\end{equation}
with probability at least $1-n^{-D-2}$.  The same argument with
$X_{ij}=r_\mu^{\rm lab}(Y_i,Y_j)$ gives
\begin{equation}
        \max_i |(R_\mu^{\rm lab}\one)_i|
        \le
        C_D\sqrt{np\log n},
\label{eq:gmbm-row-one-bound}
\end{equation}
with probability at least $1-n^{-D-2}$, because $r_\mu^{\rm lab}$ is
canonical and hence $T_{r_\mu^{\rm lab}}1=0$.  Hoeffding's inequality
\cite{HoeffdingInequality} for the
Rademacher labels gives
$|\bar\xi|\le C_D\sqrt{\log n/n}$ with probability at least $1-n^{-D-2}$.
Since this factor is at most one,
\eqref{eq:gmbm-row-xi-bound} and
\eqref{eq:gmbm-row-one-bound} imply
\begin{equation}
        \max_i
        \left|
        \sum_{j\ne i}
        r_\mu^{\rm lab}(Y_i,Y_j)(\xi_j-\bar\xi)
        \right|
        \le
        C_D\left(\sqrt{np\log n}
        +np\mu\sqrt{L\log n}\right).
\label{eq:gmbm-row-centered-label-bound}
\end{equation}
The estimate \eqref{eq:gmbm-row-centered-label-bound} is the
required row bound for
$R_\mu^{\rm lab}\xi_c$.  The left
multiplication by $\Pi_n$ can only subtract the average of this vector, so it
increases the $\ell_\infty$ norm by at most a factor of two.  Adding the already
controlled deterministic diagonal and $-\Delta_\mu\Pi_n$ terms in
\eqref{eq:gmbm-empirical-residual-decomposition} proves
\eqref{eq:gmbm-residual-row}.
\end{proof}

We use the following deterministic specialization of the standard SDP
dual-certificate argument; see \citet[Lemma~3]{HajekWuXuSDP}.

\begin{lemma}[Signed-Laplacian certificate]
\label{lem:signed-laplacian-certificate}
Let $C$ be a symmetric $n\times n$ matrix and define
\[
        S_C=\diag(C\one)-C.
\]
If $S_C\succeq0$ and $\ker(S_C)=\operatorname{span}\{\one\}$, then $J$ is
the unique optimizer of
\[
        \max\{\langle C,X\rangle:X\succeq0,\ \diag(X)=\one\}.
\]
In particular, these conditions hold if
\[
        C=\Delta J+H,\qquad \Delta>0,\qquad
        \|H\|_{\op}+\|H\one\|_\infty<n\Delta.
\]
\end{lemma}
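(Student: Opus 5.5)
The plan is the standard weak-duality argument for the elliptope relaxation. First I verify that $J$ is feasible and optimal when $S_C\succeq0$ and $S_C\one=0$. Note that $S_C\one=C\one-C\one=0$ holds by construction. For any feasible $X$ (so $X\succeq0$ and $\diag(X)=\one$),
\[
 \langle C,X\rangle=\langle \diag(C\one),X\rangle-\langle S_C,X\rangle
 =\sum_i(C\one)_i-\langle S_C,X\rangle
 =\langle C,J\rangle-\langle S_C,X\rangle .
\]
The middle equality uses $X_{ii}=1$, and the last uses $\langle C,J\rangle=\one^\top C\one$. Since $S_C\succeq0$ and $X\succeq0$, we have $\langle S_C,X\rangle\ge0$, so $\langle C,X\rangle\le\langle C,J\rangle$ and $J$ is an optimizer.

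For uniqueness, suppose $X$ is also optimal. Then $\langle S_C,X\rangle=0$, and because both matrices are positive semidefinite this forces $S_CX=0$. Hence every column of $X$ lies in $\ker S_C=\operatorname{span}\{\one\}$, so $X=\one v^\top$ for some vector $v$. Symmetry of $X$ gives $X=cJ$, and the constraint $X_{ii}=1$ gives $c=1$, so $X=J$.

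For the sufficient condition, take $C=\Delta J+H$. Then
\[
 S_C=\diag(n\Delta\one+H\one)-\Delta J-H
 =n\Delta I-\Delta J+\diag(H\one)-H .
\]
The matrix $n\Delta I-\Delta J=n\Delta\,\Pi_n$ equals $n\Delta$ on $\one^\perp$ and vanishes on $\one$. For any $x\perp\one$,
\[
 x^\top S_Cx\ge\bigl(n\Delta-\|H\one\|_\infty-\|H\|_{\op}\bigr)\|x\|^2>0 .
\]
Combined with $S_C\one=0$ and the symmetry of $S_C$, this shows $S_C\succeq0$ with kernel exactly $\operatorname{span}\{\one\}$. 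The decomposition $\R^n=\operatorname{span}\{\one\}\oplus\one^\perp$ is preserved by $S_C$ because $S_C$ is symmetric and annihilates $\one$.

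The only delicate point is the equality case $\langle S_C,X\rangle=0\Rightarrow S_CX=0$. I will justify it by writing $X=WW^\top$, so that $\tr(W^\top S_CW)=0$ with a positive semidefinite integrand; hence $S_C^{1/2}W=0$, and therefore $S_CX=0$.
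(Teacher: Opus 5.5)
Your argument is correct and complete. It is the standard weak-duality and complementary-slackness dual-certificate proof, with $\diag(C\one)$ as the dual variable and $S_C$ as the slack, which is the argument of \citet[Lemma~3]{HajekWuXuSDP} that the paper cites instead of writing out its own proof. Your quadratic-form check of the sufficient condition on $\one^\perp$, together with the $X=WW^\top$ justification of $S_CX=0$, supplies the details the paper leaves to that reference.
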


\begin{proof}[Proof of \cref{thm:gmbm-exact}]

\proofstep{Step 1: Conjugation by the true labels.}
Let $D_\xi=\diag(\xi_1,\ldots,\xi_n)$.  For each feasible matrix $X$ in
\eqref{eq:gmbm-sdp}, set $Y=D_\xi XD_\xi$.  Since
$D_\xi^\top=D_\xi=D_\xi^{-1}$, the map $X\mapsto Y$ is a bijection of the
elliptope: $X\succeq0$ and $X_{ii}=1$ for every $i$ if and only if
$Y\succeq0$ and $Y_{ii}=1$ for every $i$.  Moreover, with
\[
        C_\mu:=D_\xi B_\mu D_\xi,
\]
cyclicity of the trace gives
\[
        \langle B_\mu,X\rangle
        =\langle D_\xi B_\mu D_\xi,Y\rangle
        =\langle C_\mu,Y\rangle.
\]
Thus the conjugated SDP is
\begin{equation}
        \max\left\{
        \langle C_\mu,Y\rangle:
        Y\succeq0,\ Y_{ii}=1\ \text{for every }i
        \right\}.
\label{eq:gmbm-conjugated-sdp}
\end{equation}
Finally,
$D_\xi(\xi\xi^\top)D_\xi=J$.  Hence $\xi\xi^\top$ is the unique optimizer
of \eqref{eq:gmbm-sdp} if and only if $J$ is the unique optimizer of
\eqref{eq:gmbm-conjugated-sdp}.  We prove the latter assertion using
\cref{lem:signed-laplacian-certificate}.
By \eqref{eq:gmbm-label-projection-main},
\[
        B_\mu
        =
        \Delta_\mu\xi_c\xi_c^\top+E_\mu^{\rm lab},
        \qquad
        \xi_c=\xi-\bar\xi\one,
        \qquad
        \bar\xi=\frac1n\sum_{i=1}^n\xi_i.
\]
Since
\[
        D_\xi\xi_c
        =
        \one-\bar\xi\xi,
\]
we may write
\begin{equation}
        C_\mu
        =
        \Delta_\mu qq^\top+R_\mu,
        \qquad
        q=\one-\bar\xi\xi,
        \qquad
        R_\mu=D_\xi E_\mu^{\rm lab}D_\xi.
\label{eq:gmbm-sdp-conjugated}
\end{equation}
Define the perturbation matrix
\[
        H_\mu
        :=C_\mu-\Delta_\mu J
        =\Delta_\mu(qq^\top-J)+R_\mu.
\]
Thus $C_\mu=\Delta_\mu J+H_\mu$, in the form required by
\cref{lem:signed-laplacian-certificate}.  The reference matrix
$C_0=\Delta_\mu J$ has signed Laplacian
\[
        S_{C_0}=n\Delta_\mu I_n-\Delta_\mu J.
\]
It vanishes on $\one$ and equals $n\Delta_\mu I_n$ on $\one^\perp$.

\proofstep{Step 2: Perturbation of the population certificate.}
We first control the label imbalance.  Hoeffding's inequality
\cite{HoeffdingInequality} gives
\begin{equation}
        |\bar\xi|
        \le
        C_D\sqrt{\frac{\log n}{n}}
\label{eq:gmbm-sdp-label-balance}
\end{equation}
with probability at least $1-n^{-D-2}$.  The population part of $H_\mu$ is
\[
        \Delta_\mu(qq^\top-J)
        =
        -\Delta_\mu\bar\xi
        (\one\xi^\top+\xi\one^\top)
        +\Delta_\mu\bar\xi^2\xi\xi^\top.
\]
Also $q^\top\one=n(1-\bar\xi^2)$.  These two identities give directly
\begin{equation}
\begin{aligned}
 &\|\Delta_\mu(qq^\top-J)\|_{\op}
 +\|\Delta_\mu(qq^\top-J)\one\|_\infty
 \le
 4n\Delta_\mu\bigl(|\bar\xi|+\bar\xi^2\bigr).
\end{aligned}
\label{eq:gmbm-sdp-pop-size}
\end{equation}

For the empirical part, conjugation preserves operator norm.  Moreover,
$E_\mu^{\rm lab}\one=0$, and hence
\[
 \|R_\mu\|_{\op}=\|E_\mu^{\rm lab}\|_{\op},
 \qquad
 \|R_\mu\one\|_\infty=\|E_\mu^{\rm lab}\xi_c\|_\infty.
\]
The two estimates in \cref{lem:gmbm-label-residual} now give
\begin{equation}
\begin{aligned}
 \|R_\mu\|_{\op}+\|R_\mu\one\|_\infty
 \le C_D\left(
        \sqrt{np\log n}
        +np\mu\sqrt{L\log n}
        +np\sqrt{\frac Ld}
 \right)
\label{eq:gmbm-sdp-random-size}
\end{aligned}
\end{equation}
with probability at least $1-n^{-D-1}$.

\proofstep{Step 3: Comparison with the label signal.}
By \cref{lem:gmbm-gap-explicit},
$n\Delta_\mu\ge cnp\mu^2\sqrt{dL}$.  Dividing the three terms in
\eqref{eq:gmbm-sdp-random-size} by this signal scale gives
\[
 \frac{\sqrt{\log n/(np)}}{\mu^2\sqrt{dL}},
 \qquad
 \frac{\sqrt{\log n}}{\mu\sqrt d},
 \qquad
 \frac1{\mu^2d}.
\]
The first is small by
$\mu^2\sqrt{dL}\ge C_D\sqrt{\log n/(np)}$, the second by
$\mu^2d\ge C_D\log n$, and the third by the same inequality.  After
increasing $C_D$, we obtain
\begin{equation}
        \|R_\mu\|_{\op}+\|R_\mu\one\|_\infty
        \le \frac18 n\Delta_\mu.
\label{eq:gmbm-sdp-random-small}
\end{equation}
Likewise, \eqref{eq:gmbm-sdp-label-balance} and
\eqref{eq:gmbm-sdp-pop-size}
imply, for all sufficiently large $n$,
\begin{equation}
 \|\Delta_\mu(qq^\top-J)\|_{\op}
 +\|\Delta_\mu(qq^\top-J)\one\|_\infty
 \le\frac18 n\Delta_\mu.
\label{eq:gmbm-sdp-pop-small}
\end{equation}
Combining \eqref{eq:gmbm-sdp-random-small} and
\eqref{eq:gmbm-sdp-pop-small} gives
\begin{equation}
        \|H_\mu\|_{\op}+\|H_\mu\one\|_\infty
        \le\frac14 n\Delta_\mu.
\label{eq:gmbm-sdp-total-small}
\end{equation}

\proofstep{Step 4: Apply the certificate.}
By \eqref{eq:gmbm-sdp-total-small}, the decomposition
$C_\mu=\Delta_\mu J+H_\mu$ satisfies the strict perturbation condition in
\cref{lem:signed-laplacian-certificate}.  Hence $J$ is the unique optimizer
of the conjugated SDP, and conjugating back gives
$\widehat X=\xi\xi^\top$.  The label-balance and residual events hold
simultaneously with probability at least $1-n^{-D}$.
\end{proof}

\subsection{Large-separation impossibility}
\label{sec:gmbm-impossibility-proof}

We now prove \cref{thm:gmbm-large-separation}.  The argument first calibrates
the threshold from below.  It then uses one extreme latent coordinate in
each label class to construct two isolated vertices.  A final symmetry
argument converts isolation into an information-theoretic obstruction.

\begin{proof}[Proof of \cref{thm:gmbm-large-separation}]
Let
\[
        H_i=\xi_iG_{i1},
        \qquad
        R_{ij}=\frac{\langle G_i,G_j\rangle}{\sqrt d},
        \qquad
        m_\mu=\mu^2\sqrt d.
\]
The sign change in $H_i$ absorbs the unknown label, so
$H_1,\ldots,H_n$ are again independent $N(0,1)$ variables.  The quantity
$m_\mu$ is the deterministic same-label shift, while $R_{ij}$ contains the
remaining Gaussian inner product.  With this notation the label dependence
is isolated in one sign $\xi_i\xi_j$, and, for $i\ne j$,
\begin{equation}
 \sqrt d\,\langle Z_i,Z_j\rangle
 =\xi_i\xi_j\bigl(m_\mu+\mu(H_i+H_j)\bigr)+R_{ij}.
\label{eq:gmbm-large-separation-statistic}
\end{equation}
Since $np\ge C\log n$, we have $p\ge C\log n/n$ and hence
$L\le C\log n$ for all sufficiently large $n$.  The assumption
$\mu^2d\ge C\log n$ therefore gives
\begin{equation}
        \mu\sqrt d\ge C\sqrt{\log n}\ge C\sqrt L,
\label{eq:gmbm-large-mu-d}
\end{equation}
after adjusting the absolute constant.

\proofstep{Step 1: Lower bound for the threshold.}
Choose $p_0<1/8$, and let $u_p>0$ be determined by
\[
        \Pp\{N(0,2)\ge u_p\}=4p.
\]
The classical Gaussian Mills bounds \cite{GordonMillsRatio} state that, for
$x>0$,
\begin{equation}
 \frac{x}{1+x^2}\frac{e^{-x^2/2}}{\sqrt{2\pi}}
 \le \overline\Phi(x)
 \le \frac1x\frac{e^{-x^2/2}}{\sqrt{2\pi}}.
\label{eq:gmbm-large-mills}
\end{equation}
After decreasing $p_0$ if necessary,
\eqref{eq:gmbm-large-mills} implies
\begin{equation}
        c\sqrt L\le u_p\le C\sqrt L.
\label{eq:gmbm-large-up}
\end{equation}

We first record a tail bound for the residual inner product.  For independent
standard Gaussian vectors $G_1,G_2\in\R^d$ and $|\lambda|<\sqrt d$,
\[
 \E\exp\left\{\lambda\frac{\langle G_1,G_2\rangle}{\sqrt d}\right\}
 =\left(1-\frac{\lambda^2}{d}\right)^{-d/2}.
\]
Indeed, conditioning on $G_1$ gives
\[
 \E\left[
   \exp\left\{\frac{\lambda^2\|G_1\|_2^2}{2d}\right\}
 \right]
 =\left(1-\frac{\lambda^2}{d}\right)^{-d/2}.
\]
For $|\lambda|\le\sqrt d/2$, the inequality
$-\log(1-\lambda^2/d)\le2\lambda^2/d$ bounds the moment generating function
by $e^{\lambda^2}$.  Chernoff's inequality
\cite[Section~2.3]{VershyninHDP}, first with
$\lambda=t/4$ when $t\le2\sqrt d$ and then with
$\lambda=\sqrt d/2$ when $t>2\sqrt d$, yields
\begin{equation}
        \Pp\{|R_{12}|>t\}
        \le2\exp\{-c\min(t^2,t\sqrt d)\},
        \qquad t\ge0.
\label{eq:gmbm-product-tail}
\end{equation}

Conditional on $\xi_i\xi_j=1$, the statistic in
\eqref{eq:gmbm-large-separation-statistic} is
\[
        m_\mu+\mu(H_i+H_j)+R_{ij},
        \qquad H_i+H_j\sim N(0,2).
\]
Since the unconditional edge probability is
$p=(p_+(\mu)+p_-(\mu))/2$, we have $p_+(\mu)\le2p$.  Suppose, toward a
contradiction, that
$\tau<m_\mu+\mu u_p/2$.  Then
\begin{equation}
\begin{aligned}
 p_+(\mu)
 &\ge
 \Pp\{H_i+H_j\ge u_p,\ R_{ij}\ge-\mu u_p/2
       \mid \xi_i\xi_j=1\}\\
 &\ge 4p-\Pp\{R_{ij}< -\mu u_p/2\}.
\end{aligned}
\label{eq:gmbm-pplus-contradiction}
\end{equation}
This is a union-bound estimate and does not require independence between
$H_i+H_j$ and $R_{ij}$.
By \eqref{eq:gmbm-large-up}, the lower bound on $\mu\sqrt L$, and
\eqref{eq:gmbm-large-mu-d},
\[
 \min\{(\mu u_p)^2,\mu u_p\sqrt d\}
 \ge c\min\{\mu^2L,\mu\sqrt{dL}\}
 \ge C L.
\]
Thus \eqref{eq:gmbm-product-tail} gives
$\Pp\{R_{ij}< -\mu u_p/2\}\le p$ after increasing $C$.  Substitution into
\eqref{eq:gmbm-pplus-contradiction} would imply
$p_+(\mu)\ge3p$, a contradiction.  We conclude that
\begin{equation}
        \tau\ge m_\mu+\frac12\mu u_p.
\label{eq:gmbm-threshold-lower-large-mu}
\end{equation}

\proofstep{Step 2: Extreme coordinates and residual inner products.}
Hoeffding's inequality \cite{HoeffdingInequality} gives
\begin{equation}
 \Pp\left\{\frac n3\le |\{i:\xi_i=\sigma\}|\le\frac{2n}3
       \text{ for both }\sigma\in\{\pm1\}\right\}=1-o(1).
\label{eq:gmbm-large-class-balance}
\end{equation}
On this event, choose
\[
        i_\sigma\in\argmin_{\{i:\xi_i=\sigma\}}H_i,
        \qquad
        M_\sigma=\max_{\{i:\xi_i=\sigma\}}H_i,
        \qquad \sigma\in\{\pm1\}.
\]
Conditional on the labels, the variables in either class are independent
$N(0,1)$.  Set $a_n=\sqrt{2\log n}$.  The Mills bounds
\eqref{eq:gmbm-large-mills} imply, uniformly for $n/3\le q\le2n/3$,
\begin{equation}
\begin{aligned}
 q\,\overline\Phi\left(a_n+\frac{2\log\log n}{a_n}\right)&=o(1),\\
 q\,\overline\Phi\left(a_n-\frac{2\log\log n}{a_n}\right)&\longrightarrow\infty.
\end{aligned}
\label{eq:gmbm-extreme-tail-relations}
\end{equation}
The first relation in \eqref{eq:gmbm-extreme-tail-relations}
and a union bound control the maximum in each class.  For
the minimum, symmetry and $(1-x)^q\le e^{-qx}$ show that the probability
that every observation exceeds
$-a_n+2\log\log n/a_n$ tends to zero.  Hence, with probability $1-o(1)$,
simultaneously for $\sigma\in\{\pm1\}$,
\[
        M_\sigma\le a_n+\frac{2\log\log n}{a_n},
        \qquad
        H_{i_\sigma}\le-a_n+\frac{2\log\log n}{a_n}.
\]
Since \eqref{eq:gmbm-large-up} bounds $u_p$ below by a
positive absolute constant, it follows
that, for all sufficiently large $n$,
\begin{equation}
        M_\sigma+H_{i_\sigma}
        \le\frac{4\log\log n}{a_n}
        \le\frac14u_p,
        \qquad \sigma\in\{\pm1\}.
\label{eq:gmbm-min-plus-max}
\end{equation}
A Gaussian tail bound \cite[Proposition~2.1.2]{VershyninHDP} and a union bound
also give, for an absolute constant $K>0$,
\begin{equation}
        \max_{1\le i\le n}|H_i|\le K\sqrt{\log n}
\label{eq:gmbm-all-h-small}
\end{equation}
with probability $1-o(1)$.

It remains to control all residual inner products incident to the two
selected vertices.  Write $G_i=(G_{i1},W_i)$ with
$W_i\in\R^{d-1}$, with the evident interpretation for $d=1$.  The indices
$i_+$ and $i_-$ depend only on the labels and first coordinates, and hence
are independent of $W_1,\ldots,W_n$.  Gaussian norm concentration
\cite[Theorem~3.1.1]{VershyninHDP} gives
\[
        \max_{\sigma\in\{\pm1\}}\|W_{i_\sigma}\|_2
        \le C(\sqrt d+\sqrt{\log n})
\]
with probability $1-o(1)$.  Conditional on a selected $W_{i_\sigma}$, each
$\langle W_{i_\sigma},W_j\rangle/\sqrt d$ is centered Gaussian with
variance $\|W_{i_\sigma}\|_2^2/d$.  The same conditioning applies to the
inner product between the two selected vectors because their remaining
coordinates are independent.  A Gaussian tail bound
\cite[Proposition~2.1.2]{VershyninHDP} and a union bound over
the two candidates and all $j$ therefore give
\[
 \max_{\sigma\in\{\pm1\}}\max_{j\ne i_\sigma}
 \frac{|\langle W_{i_\sigma},W_j\rangle|}{\sqrt d}
 \le K\left(\sqrt{\log n}+\frac{\log n}{\sqrt d}\right)
\]
with probability $1-o(1)$.  On the event
\eqref{eq:gmbm-all-h-small}, the omitted first-coordinate product is at most
$K^2\log n/\sqrt d$.  Increasing $K$ and writing
\[
        r_n:=\sqrt{\log n}+\frac{\log n}{\sqrt d},
\]
we obtain
\begin{equation}
 \max_{\sigma\in\{\pm1\}}\max_{j\ne i_\sigma}|R_{i_\sigma j}|
 \le Kr_n
\label{eq:gmbm-candidate-r-small}
\end{equation}
with probability $1-o(1)$.

We finally verify the scale comparisons needed in the next step.  By
\eqref{eq:gmbm-large-up} and the lower bound on $\mu\sqrt L$,
\begin{equation}
        \mu u_p\ge8Kr_n
\label{eq:gmbm-mu-u-large}
\end{equation}
after increasing $C$.  Moreover,
\[
 m_\mu=\mu(\mu\sqrt d)\ge C\mu\sqrt{\log n}
\]
by \eqref{eq:gmbm-large-mu-d}, while
\[
 m_\mu
 =\frac{(\mu\sqrt d)(\mu\sqrt L)}{\sqrt L}
 \ge Cr_n.
\]
Increasing $C$ once more gives
\begin{equation}
        m_\mu\ge4K\mu\sqrt{\log n}+4Kr_n.
\label{eq:gmbm-m-large}
\end{equation}

\proofstep{Step 3: Isolated vertices and posterior symmetry.}
Fix $\sigma\in\{\pm1\}$.  We show that $i_\sigma$ has no neighbors.  If
$j\ne i_\sigma$ has label $\sigma$, then
\eqref{eq:gmbm-large-separation-statistic},
\eqref{eq:gmbm-min-plus-max}, and
\eqref{eq:gmbm-candidate-r-small} give
\[
 \sqrt d\,\langle Z_{i_\sigma},Z_j\rangle
 \le m_\mu+\frac14\mu u_p+Kr_n
 <m_\mu+\frac12\mu u_p
 \le\tau,
\]
where the strict inequality uses \eqref{eq:gmbm-mu-u-large}, and the last
inequality is \eqref{eq:gmbm-threshold-lower-large-mu}.  If $j$ has the
opposite label, then \eqref{eq:gmbm-all-h-small},
\eqref{eq:gmbm-candidate-r-small}, and \eqref{eq:gmbm-m-large} imply
\[
 \sqrt d\,\langle Z_{i_\sigma},Z_j\rangle
 \le -m_\mu+2K\mu\sqrt{\log n}+Kr_n
 <0<m_\mu+\frac12\mu u_p\le\tau.
\]
Thus $i_\sigma$ is isolated.  Applying the argument to both labels and
intersecting the events in
\eqref{eq:gmbm-large-class-balance},
\eqref{eq:gmbm-min-plus-max},
\eqref{eq:gmbm-all-h-small}, and
\eqref{eq:gmbm-candidate-r-small} proves
\[
 \Pp\{A^\mu\text{ has an isolated vertex in each label class}\}
 \longrightarrow1.
\]

It remains only to formalize the exchangeability argument.  Fix an adjacency
matrix $a$ with $\Pp\{A^\mu=a\}>0$, and let $\mathcal I(a)$ be its isolated
vertices.  Every
permutation $\pi$ supported on $\mathcal I(a)$ leaves $a$ unchanged.  Since
the joint law of $(A^\mu,\xi)$ is vertex-exchangeable,
\[
 \Pp\{\xi=x\mid A^\mu=a\}
 =\Pp\{\xi=\pi x\mid A^\mu=a\}
 \qquad\text{for every }x\in\{\pm1\}^n.
\]
Equivalently, conditional on $A^\mu=a$, and after further conditioning on the
labels outside $\mathcal I(a)$ and the number of positive labels in
$\mathcal I(a)$, all assignments of those labels to the isolated vertices are
equally likely.

Let $\mathcal E$ be the event that both signs occur among the isolated
vertices.  On $\mathcal E$, there are at least two such assignments.  If at
least one vertex is nonisolated, at most one of them can equal a fixed output
up to a global sign, because the labels outside $\mathcal I(a)$ are fixed.
If every vertex is isolated, at most two assignments can equal that output
up to sign, whereas the number of assignments is at least $n$, which is at
least $4$ for all sufficiently large $n$.  Thus, after also conditioning on
the independent random seed of a possibly randomized estimator
$\widetilde\xi$,
\[
 \Pp\{\widetilde\xi\in\{\xi,-\xi\}\mid A^\mu=a\}
 \le 1-\frac12\Pp\{\mathcal E\mid A^\mu=a\}.
\]
Averaging over $A^\mu$ and using $\Pp(\mathcal E)=1-o(1)$ yields, uniformly
over all such estimators,
\[
        \Pp\{\widetilde\xi\in\{\xi,-\xi\}\}
        \le\frac12+o(1).
\]
This completes the proof.
\end{proof}

\acks{During the preparation of this manuscript, the authors used OpenAI's
GPT-5.6 to assist with language editing, improving exposition, and checking
routine calculations.  Y.Z. was partially supported by the Simons Grant
MPS-TSM-00013944.}

\appendix

\section{Spherical harmonic estimates}\label{sec:spherical-harmonics}

This appendix supplies the harmonic estimates used in the proofs of
\cref{thm:main-concentration,thm:spherical-inner-product}.  We recall the
Funk--Hecke diagonalization and then bound the first and higher coefficients
of a sparse spherical cap.

\subsection{Funk--Hecke diagonalization}\label{sec:funk-hecke}

We use standard notation for spherical harmonics and Gegenbauer polynomials;
see Dai and Xu \citeyearpar[Chapters~1--2]{DaiXu}.  For $\ell\ge0$, a
\emph{spherical harmonic of degree $\ell$} is the restriction to $\Sd$ of a
homogeneous harmonic polynomial of degree $\ell$: the polynomial is
homogeneous in the sense that $Q(rx)=r^\ell Q(x)$ and harmonic in the sense
that its Euclidean Laplacian vanishes.  Thus the space of degree-$\ell$
spherical harmonics is
\[
 \cH_\ell^d
 =
 \left\{
 \left.Q\right|_{\Sd}:
 Q\in\R[x_1,\ldots,x_d],\quad
 Q(rx)=r^\ell Q(x),\quad \Delta_{\R^d}Q=0
 \right\}.
\]
Throughout this appendix, $d\ge3$.  Set $\alpha=(d-2)/2$.  The Gegenbauer
polynomials $C_\ell^{(\alpha)}$ are defined by the generating function
\[
 (1-2tz+z^2)^{-\alpha}
 =\sum_{\ell=0}^{\infty}C_\ell^{(\alpha)}(t)z^\ell,
 \qquad |z|<1.
\]
Since
$C_\ell^{(\alpha)}(1)=\binom{\ell+d-3}{\ell}$, the normalized Gegenbauer
polynomial is
\[
        P_\ell^{(d)}(t)
        =\frac{C_\ell^{(\alpha)}(t)}{C_\ell^{(\alpha)}(1)},
        \qquad P_\ell^{(d)}(1)=1.
\]
Equivalently, $P_\ell^{(d)}$ is the unique degree-$\ell$ polynomial satisfying
\begin{equation}
 (1-t^2)(P_\ell^{(d)})''(t)
 -(d-1)t(P_\ell^{(d)})'(t)
 +\ell(\ell+d-2)P_\ell^{(d)}(t)=0,
 \qquad P_\ell^{(d)}(1)=1.
\label{eq:zonal-polynomial-ode}
\end{equation}
It is the normalized zonal spherical harmonic of degree $\ell$.  Indeed, if
$\{Y_{\ell,m}\}_{m=1}^{\dim(\cH_\ell^d)}$ is any orthonormal basis of
$\cH_\ell^d$ in $L^2(\sigma)$, the addition formula gives
\[
 \sum_{m=1}^{\dim(\cH_\ell^d)}Y_{\ell,m}(u)Y_{\ell,m}(v)
 =\dim(\cH_\ell^d)P_\ell^{(d)}(\langle u,v\rangle),
 \qquad u,v\in\Sd.
\]
Thus a rotationally invariant kernel, which depends on $u$ and $v$ only
through $\langle u,v\rangle$, acts diagonally on the spaces
$\cH_\ell^d$.
We write
\begin{equation}
        w_d(t)=c_d(1-t^2)^{(d-3)/2}\one_{(-1,1)}(t),
        \qquad
        c_d=\frac{\Gamma(d/2)}{\sqrt{\pi}\Gamma((d-1)/2)},
\label{eq:wd-def}
\end{equation}
for the density of one coordinate of a uniform point on $\Sd$.  Recall the
cap kernel $h$ and its operator $T_h$ from \cref{sec:matrix-tools}.  By the
Funk--Hecke formula \cite[Theorem~1.2.9]{DaiXu}, $T_h$ acts on
$\cH_\ell^d$ by the scalar
\[
        \lambda_\ell=\int_\tau^1P_\ell^{(d)}(t)w_d(t)\,dt,
        \qquad \ell\ge1.
\]
The degree-zero coefficient vanishes by \eqref{eq:h-canonical-main}.  Since
$L^2(\Sd)=\bigoplus_{\ell\ge0}\cH_\ell^d$ orthogonally
\cite[Chapter~1]{DaiXu},
\[
        \|T_h\|_{\op}=\sup_{\ell\ge1}|\lambda_\ell|,
        \qquad
        \lambda_1=\int_\tau^1t\,w_d(t)\,dt,
        \qquad
        \Lambda_{\ge2}=\sup_{\ell\ge2}|\lambda_\ell|.
\]

\subsection{The sparse cap operator}\label{sec:sparse-cap-operator}

The first estimate controls every nonconstant coefficient at the scale
$p\tau$.  We then identify the degree-one signal and obtain the sharper
higher-degree bound needed for recovery.

\begin{lemma}[Spherical Mills estimate]\label{lem:spherical-mills}
There are absolute constants $c,C>0$ and $p_0>0$ such that, for
$0<p\le p_0$ and $d\ge3$,
\[
        c\,\frac{c_d}{d\tau}(1-\tau^2)^{(d-1)/2}
        \le
        p
        \le
        C\,\frac{c_d}{d\tau}(1-\tau^2)^{(d-1)/2}.
\]
In particular,
\[
        \frac{c_d}{d}(1-\tau^2)^{(d-1)/2}
        \le C p\tau.
\]
\end{lemma}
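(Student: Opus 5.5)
We prove the two-sided bound on the cap integral $p=\int_\tau^1 w_d(t)\,dt$ directly, by comparing it with the boundary term obtained from an exact derivative identity. The second display then follows from the first after multiplying by $\tau$. First we note that $p\le p_0<1/2$ forces $\tau>0$, since the density $w_d$ is symmetric.

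The key computation is
\[
\frac{d}{dt}\Bigl[-(1-t^2)^{(d-1)/2}\Bigr]=(d-1)\,t\,(1-t^2)^{(d-3)/2}.
\]
Since $t\ge\tau$ on the range of integration, this gives the upper bound
\[
p\le\frac{c_d}{\tau}\int_\tau^1 t(1-t^2)^{(d-3)/2}\,dt=\frac{c_d}{(d-1)\tau}(1-\tau^2)^{(d-1)/2},
\]
and $1/(d-1)\le 2/d$ absorbs the constant.

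For the lower bound, we restrict the integral to the window $t\in[\tau,\tau+\delta]$ and use $t\le\tau+\delta$ there:
\[
p\ge\frac{c_d}{(d-1)(\tau+\delta)}\Bigl[(1-\tau^2)^{(d-1)/2}-(1-(\tau+\delta)^2)^{(d-1)/2}\Bigr].
\]
The task is to choose $\delta$ so that two things hold at once: $\tau+\delta\le 2\tau$, and the second bracketed term is at most half the first. The ratio of the two terms is
\[
\Bigl(1-\frac{2\tau\delta+\delta^2}{1-\tau^2}\Bigr)^{(d-1)/2}\le\exp\!\Bigl(-\frac{(d-1)\tau\delta}{1-\tau^2}\Bigr).
\]
Taking $\delta=\min\{\tau,(1-\tau^2)/((d-1)\tau)\}$ makes this at most $e^{-1}$, which is enough.

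The hard part is the case $\delta=(1-\tau^2)/((d-1)\tau)<\tau$, which is the regime $d\tau^2\ge1$ roughly, and the small-$d\tau^2$ regime needs separate care. The constraint $\tau+\delta\le 2\tau$ is automatic when $\delta\le\tau$. If instead $\delta=\tau$, then $(d-1)\tau^2\le 1-\tau^2$, so the ratio bound above is only $e^{-(d-1)\tau^2/(1-\tau^2)}$, which could be close to $1$. We need to exclude this: when $d\tau^2\lesssim1$, the cap probability is bounded below by an absolute constant. This holds because $\sqrt d\,\langle U_1,U_2\rangle$ is approximately standard Gaussian, or, more precisely, because the density $w_d$ on $[0,\tau]$ satisfies $c_d\asymp\sqrt d$ and $(1-t^2)^{(d-3)/2}\ge c$, so that $\Pp\{\langle U_1,U_2\rangle\ge\tau\}\ge c'$. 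Choosing $p_0<c'$ excludes the case, so $d\tau^2\ge C_0$ for a large $C_0$. Hence $\delta\le\tau$ and the ratio is at most $e^{-1}$, which gives $p\ge c\,c_d(1-\tau^2)^{(d-1)/2}/(d\tau)$.

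For $d=3$ the bound $(1-t^2)^{0}$ is trivial, and the same argument goes through.
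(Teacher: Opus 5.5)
Your upper bound is exactly the paper's. For the lower bound you follow the paper's two-step plan. First use $p\le p_0$ to force $\tau\gtrsim d^{-1/2}$. Then integrate over a window above $\tau$ of width about $(1-\tau^2)/(d\tau)$. You execute the window estimate differently. The paper bounds $\bigl((1-t^2)/(1-\tau^2)\bigr)^{(d-3)/2}$ from below pointwise on the window. You instead insert the factor $t/(\tau+\delta)\le1$, integrate exactly, and show that the boundary term at $\tau+\delta$ is at most a fixed fraction of the one at $\tau$. That variant is legitimate, but two steps do not hold as written.

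\textbf{Localization of $\tau$.} Your justification runs in the wrong direction. A lower bound for $w_d$ on $[0,\tau]$ bounds $\Pp\{0\le X_1\le\tau\}$ from below, and hence bounds $p$ from \emph{above}. The Gaussian approximation of $\sqrt d\,X_1$ is only a large-$d$ heuristic. Moreover, the conclusion ``$d\tau^2\ge C_0$ for a large $C_0$'' is false for small $d$. Since $\tau\le1$ we have $d\tau^2\le d$, and for $d=3$ one has $p=(1-\tau)/2$, which is small even though $d\tau^2\le3$.

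What is true, and all you need, is a \emph{small} lower bound. Since $w_d\le c_d\le C\sqrt d$, we get $\Pp\{0\le X_1<\tau\}\le C\sqrt d\,\tau$. So $\sqrt d\,\tau\le1/(4C)$ would force $p\ge1/4$, and taking $p_0<1/4$ gives $\sqrt d\,\tau\ge a$ for an absolute $a>0$. (The paper instead bounds $w_d$ from below on $[a/\sqrt d,2a/\sqrt d]$, which gives $\Pp\{X_1\ge a/\sqrt d\}\ge q_0>p_0$.) You then need not exclude the case $\delta=\tau$ at all. There the exponent is $(d-1)\tau^2/(1-\tau^2)\ge\tfrac23a^2$, so the ratio is at most $e^{-2a^2/3}<1$, which only changes the constant.

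\textbf{The endpoint $\tau+\delta$.} The identity $\int_\tau^{\tau+\delta}t(1-t^2)^{(d-3)/2}\,dt=\frac{1}{d-1}\bigl[(1-\tau^2)^{(d-1)/2}-(1-(\tau+\delta)^2)^{(d-1)/2}\bigr]$ requires $\tau+\delta\le1$, because $w_d$ vanishes beyond $1$. This fails for $d=3$, where $\tau+(1-\tau^2)/(2\tau)=(1+\tau^2)/(2\tau)\ge1$. It also fails whenever $\delta=\tau>1/2$. For even $d$ the term $(1-(\tau+\delta)^2)^{(d-1)/2}$ is then not even real. Integrate instead up to $\min\{\tau+\delta,1\}$. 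If the minimum is $1$, the subtracted term is absent and $p\ge c_d(1-\tau^2)^{(d-1)/2}/\bigl((d-1)(\tau+\delta)\bigr)$ follows directly. With these two repairs your argument proves the lemma.
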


\begin{proof}
We first locate the cap threshold uniformly in $d$.  There are absolute
constants $a,q_0>0$ such that
\[
        \Pp\{X_1\ge a/\sqrt d\}\ge q_0,
        \qquad X\sim\sigma,\quad d\ge3.
\]
Indeed, on $[a/\sqrt d,2a/\sqrt d]$ the density $w_d$ is bounded below by
$c\sqrt d$ when $a$ is sufficiently small; here we use
$c_d\asymp\sqrt d$.  After choosing $p_0<q_0$, the identity
$p=\Pp\{X_1\ge\tau\}$ therefore implies
$\tau\ge a/\sqrt d$.  In particular,
\begin{equation}
        \frac{d\tau}{1+\tau}\ge c.
\label{eq:spherical-mills-threshold-location}
\end{equation}

Put
\[
        \delta=\frac{c_0(1-\tau^2)}{d\tau},
\]
where $c_0>0$ is a sufficiently small absolute constant.  By
\eqref{eq:spherical-mills-threshold-location},
$\delta/(1-\tau)=c_0(1+\tau)/(d\tau)\le1$, so
$[\tau,\tau+\delta]\subset[\tau,1]$.  Moreover, for every point in this
interval,
\[
 \frac{t^2-\tau^2}{1-\tau^2}
 \le \frac{\delta(2\tau+\delta)}{1-\tau^2}
 \le \frac{Cc_0}{d}.
\]
Consequently,
\[
 \left(\frac{1-t^2}{1-\tau^2}\right)^{(d-3)/2}
 \ge c,
 \qquad \tau\le t\le\tau+\delta.
\]
Integrating over this interval gives
\[
        \int_\tau^1(1-t^2)^{(d-3)/2}\,dt
        \ge
        c\frac{(1-\tau^2)^{(d-1)/2}}{d\tau}.
\]
For the upper bound, use $t\ge\tau$ on $[\tau,1]$:
\[
\begin{aligned}
        \int_\tau^1(1-t^2)^{(d-3)/2}\,dt
        &\le
        \frac1\tau
        \int_\tau^1 t(1-t^2)^{(d-3)/2}\,dt        \\
        &=
        \frac{(1-\tau^2)^{(d-1)/2}}{(d-1)\tau}
        \le
        C\frac{(1-\tau^2)^{(d-1)/2}}{d\tau}.
\end{aligned}
\]
Multiplication by $c_d$ proves the two-sided estimate.
\end{proof}

\begin{proposition}[Sparse cap operator]\label{prop:cap-operator}
There are absolute constants $C>0$ and $p_0>0$ such that, for every
$0<p\le p_0$ and every
$d\ge3$,
\[
        \Lambda=\sup_{\ell\ge1}|\lambda_\ell|
        \le C p\tau.
\]
\end{proposition}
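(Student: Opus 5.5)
We need $|\lambda_\ell|\le Cp\tau$ for every $\ell\ge1$, where $\lambda_\ell=\int_\tau^1 P_\ell^{(d)}(t)w_d(t)\,dt$. A crude bound $|P_\ell|\le1$ only gives $|\lambda_\ell|\le p$, so we must extract the extra factor $\tau$. The plan is to integrate by parts using the Rodrigues-type identity for Gegenbauer polynomials. Concretely, we use the weighted derivative relation that follows from the ODE \eqref{eq:zonal-polynomial-ode}:
\[
\frac{d}{dt}\Bigl[(1-t^2)^{(d-1)/2}(P_\ell^{(d)})'(t)\Bigr]
=-\ell(\ell+d-2)(1-t^2)^{(d-3)/2}P_\ell^{(d)}(t).
\]
Integrating over $[\tau,1]$ then gives the closed form
\[
\lambda_\ell=\frac{c_d}{\ell(\ell+d-2)}(1-\tau^2)^{(d-1)/2}(P_\ell^{(d)})'(\tau),
\qquad \ell\ge1.
\]
The boundary term at $t=1$ vanishes because $d\ge3$. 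For $\ell=1$ this gives $\lambda_1=\frac{c_d}{d-1}(1-\tau^2)^{(d-1)/2}$, and \cref{lem:spherical-mills} immediately yields $\lambda_1\le Cp\tau$.

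For $\ell\ge2$ we need a bound on $|(P_\ell^{(d)})'(\tau)|$ that beats the denominator $\ell(\ell+d-2)$ by a factor of about $d$. The standard estimate is
\[
(P_\ell^{(d)})'(t)=\frac{\ell(\ell+d-2)}{d-1}P_{\ell-1}^{(d+2)}(t)
\]
(differentiation shifts $\alpha$ to $\alpha+1$). Combined with $|P_{\ell-1}^{(d+2)}|\le1$ on $[-1,1]$, this gives
\[
|\lambda_\ell|\le \frac{c_d}{d-1}(1-\tau^2)^{(d-1)/2}=\lambda_1\le Cp\tau
\]
by \cref{lem:spherical-mills}. Thus $\sup_{\ell\ge1}|\lambda_\ell|=\lambda_1$ up to constants, which proves the proposition.

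The main obstacle is checking the derivative identity with the right normalization. Write $C_\ell^{(\alpha)\prime}=2\alpha C_{\ell-1}^{(\alpha+1)}$, normalize by $C_\ell^{(\alpha)}(1)=\binom{\ell+2\alpha-1}{\ell}$ and by $C_{\ell-1}^{(\alpha+1)}(1)=\binom{\ell+2\alpha}{\ell-1}$, and confirm that the ratio equals $\ell(\ell+2\alpha)/(2\alpha+1)=\ell(\ell+d-2)/(d-1)$. One also needs to justify $|P_{\ell-1}^{(d+2)}|\le1$, which holds because normalized Gegenbauer polynomials with $\alpha>0$ attain their maximum modulus at $t=1$. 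If the index bookkeeping fails, a fallback is to apply Cauchy--Schwarz directly to $(P_\ell)'$ in the weighted $L^2$ norm, but the exact identity should work. Note that $p\le p_0$ enters only through \cref{lem:spherical-mills}.
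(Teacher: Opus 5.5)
Your proposal is correct and follows the paper's proof essentially step for step. You integrate the weighted ODE identity over $[\tau,1]$ to get $\lambda_\ell=\frac{c_d(1-\tau^2)^{(d-1)/2}}{\ell(\ell+d-2)}(P_\ell^{(d)})'(\tau)$, then apply the normalized derivative identity $(P_\ell^{(d)})'=\frac{\ell(\ell+d-2)}{d-1}P_{\ell-1}^{(d+2)}$ with $|P_{\ell-1}^{(d+2)}|\le1$ to reduce to $\frac{c_d}{d-1}(1-\tau^2)^{(d-1)/2}=\lambda_1$, which \cref{lem:spherical-mills} bounds by $Cp\tau$; your normalization check of the derivative identity is also correct.
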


\begin{proof}
The differential equation \eqref{eq:zonal-polynomial-ode}
gives
\[
        (1-t^2)y''(t)-(d-1)t\,y'(t)+\ell(\ell+d-2)y(t)=0,
\]
for $y=P_\ell^{(d)}$.  Multiplying by $(1-t^2)^{(d-3)/2}$ gives
\[
\frac{d}{dt}
\left[
(1-t^2)^{(d-1)/2}(P_\ell^{(d)})'(t)
\right]
=
-\ell(\ell+d-2)(1-t^2)^{(d-3)/2}P_\ell^{(d)}(t).
\]
Integrating from $\tau$ to $1$, the upper boundary term vanishes because
$(1-t^2)^{(d-1)/2}=0$ at $t=1$.  Hence
\begin{equation}
        \lambda_\ell
        =
        \frac{c_d(1-\tau^2)^{(d-1)/2}}
        {\ell(\ell+d-2)}
        (P_\ell^{(d)})'(\tau).
\label{eq:lambda-derivative-identity}
\end{equation}
The normalized derivative identity
\cite[Appendix~B.2]{DaiXu} and the addition formula give
\[
        (P_\ell^{(d)})'(t)
        =
        \frac{\ell(\ell+d-2)}{d-1}P_{\ell-1}^{(d+2)}(t),
        \qquad
        |P_{\ell-1}^{(d+2)}(t)|\le1.
\]
Substitution into \eqref{eq:lambda-derivative-identity} yields
\[
        |\lambda_\ell|
        \le
        \frac{c_d}{d-1}(1-\tau^2)^{(d-1)/2}.
\]
By \Cref{lem:spherical-mills}, this implies $|\lambda_\ell|\le Cp\tau$.
\end{proof}

\begin{lemma}[First harmonic]\label{lem:first-harmonic}
For every $d\ge3$,
\begin{equation}
        \lambda_1
        =
        \int_\tau^1 t\,w_d(t)\,dt
        =
        \frac{c_d}{d-1}(1-\tau^2)^{(d-1)/2}.
\label{eq:lambda1-exact}
\end{equation}
There are absolute constants $c,C>0$ and $p_0>0$ such that, whenever
$0<p\le p_0$,
\begin{equation}
        c p\tau\le \lambda_1\le C p\tau.
\label{eq:lambda1-ptau}
\end{equation}
The degree-one component of $h$ is
\[
        h_1(u,v)=d\lambda_1\langle u,v\rangle.
\]
\end{lemma}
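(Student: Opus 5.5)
The proof has three parts: compute $\lambda_1$ exactly, compare it with $p\tau$, and identify the degree-one component of $h$.

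\textbf{Exact formula.} By the Funk--Hecke formula, $T_h$ acts on $\cH_1^d$ by $\lambda_1=\int_\tau^1P_1^{(d)}(t)w_d(t)\,dt$. Since $P_1^{(d)}(t)=t$, this equals $\int_\tau^1 t\,w_d(t)\,dt$. Inserting $w_d(t)=c_d(1-t^2)^{(d-3)/2}$, the integrand has the explicit antiderivative $-(1-t^2)^{(d-1)/2}/(d-1)$. Evaluating between $\tau$ and $1$ gives \eqref{eq:lambda1-exact}. Alternatively, the $\ell=1$ case of \eqref{eq:lambda-derivative-identity} with $(P_1^{(d)})'=1$ gives the same value $c_d(1-\tau^2)^{(d-1)/2}/(d-1)$. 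This expression agrees with the $\E_V$ definition \eqref{eq:lambda1-main-definition}, because $w_d$ is the density of $\langle u,V\rangle$.

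\textbf{Two-sided comparison with $p\tau$.} Here I will invoke \cref{lem:spherical-mills}, which gives
\[
p\asymp \frac{c_d}{d\tau}(1-\tau^2)^{(d-1)/2}.
\]
Multiplying by $\tau$ yields $p\tau\asymp \frac{c_d}{d}(1-\tau^2)^{(d-1)/2}$. Since $d/(d-1)\in[1,3/2]$ for $d\ge3$, this quantity is comparable to $\lambda_1$ with absolute constants, which gives \eqref{eq:lambda1-ptau}. This step is the only one with real content, and it is fully handled by the Mills lemma. One point needs checking: the lemma's $p_0$ forces $\tau>0$, so dividing by $\tau$ is harmless.

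\textbf{Degree-one component.} The orthogonal projection of the kernel $h$ onto $\cH_1^d\otimes\cH_1^d$ is $\lambda_1\sum_m Y_{1,m}(u)Y_{1,m}(v)$, since $T_h$ acts on $\cH_1^d$ by the scalar $\lambda_1$. By the addition formula this equals $\lambda_1\dim(\cH_1^d)P_1^{(d)}(\langle u,v\rangle)$. Here $\cH_1^d$ consists of the linear functions, so $\dim\cH_1^d=d$, and $P_1^{(d)}(t)=t$. Hence the component is $d\lambda_1\langle u,v\rangle$.

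As a concrete check, take the orthonormal basis $\sqrt d\,u_k$. Then $\sum_k d\,u_kv_k=d\langle u,v\rangle=K_1(u,v)$, and the coefficient of $h$ against $u_kv_k$ is $\E[h(u,V)V_k]=\lambda_1u_k$ by rotational invariance.

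There is no real obstacle in any of these steps. The only care needed is in tracking the constants $d/(d-1)$ and the normalization of the basis of $\cH_1^d$.
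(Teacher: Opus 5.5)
Your proposal is correct and follows the paper's own proof. The paper also uses the explicit antiderivative for the exact formula and \cref{lem:spherical-mills} for $\lambda_1\asymp p\tau$. It likewise identifies the degree-one component through the reproducing kernel $K_1(u,v)=d\langle u,v\rangle$, which is your addition-formula argument with $\dim\cH_1^d=d$.
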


\begin{proof}
The identity follows from $P_1^{(d)}(t)=t$ and
\[
        \int t(1-t^2)^{(d-3)/2}\,dt
        =
        -\frac1{d-1}(1-t^2)^{(d-1)/2}.
\]
The comparison \eqref{eq:lambda1-ptau} follows from the exact formula and
\cref{lem:spherical-mills}.  Finally, the degree-one reproducing
kernel is $K_1(u,v)=d\langle u,v\rangle$; see \citet[Chapter~1]{DaiXu}.
\end{proof}

\begin{lemma}[Higher cap harmonics]\label{lem:higher-harmonics}
Let $L=1+\log(1/p)$.  There are absolute constants $p_0,c,C>0$ such that, if
$0<p\le p_0$ and $d\ge C L$, then
\[
        c\sqrt{\frac Ld}\le \tau\le C\sqrt{\frac Ld}
\]
and
\[
        \Lambda_{\ge2}
        =
        \sup_{\ell\ge2}|\lambda_\ell|
        \le
        C p\frac Ld.
\]
\end{lemma}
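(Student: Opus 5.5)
The proof has two parts: first pin down the size of $\tau$, then bound the higher Funk--Hecke coefficients using the identity \eqref{eq:lambda-derivative-identity} from the proof of \cref{prop:cap-operator}.

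\textbf{Locating $\tau$.} I would start from \cref{lem:spherical-mills}, which gives
\[
p\asymp \frac{c_d}{d\tau}(1-\tau^2)^{(d-1)/2},\qquad c_d\asymp\sqrt d .
\]
Taking logarithms yields
\[
\log(1/p)=\tfrac{d-1}{2}\bigl(-\log(1-\tau^2)\bigr)+\log(\sqrt d\,\tau)+O(1).
\]
From the threshold-location step in that lemma we already know $\sqrt d\,\tau\ge a$, so the middle term is bounded below by a constant. For the upper bound on $\tau$, use $-\log(1-\tau^2)\ge\tau^2$ together with $\log(\sqrt d\,\tau)\le \tfrac12 d\tau^2/4+C$. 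This gives $d\tau^2\lesssim L$. For the lower bound, first note that if $\tau\le 1/2$ then $-\log(1-\tau^2)\le 2\tau^2$. Also, $\tau\le1/2$ follows from the upper bound once $d\ge CL$. Hence $L\lesssim d\tau^2+\log(\sqrt d\,\tau)+C\lesssim d\tau^2+C$, and because $L\ge \log(1/p_0)$ is large, this gives $d\tau^2\gtrsim L$. Together these prove $\tau\asymp\sqrt{L/d}$.

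\textbf{Higher harmonics.} Combining \eqref{eq:lambda-derivative-identity} with the derivative identity gives
\[
\lambda_\ell=\frac{c_d(1-\tau^2)^{(d-1)/2}}{d-1}\,P^{(d+2)}_{\ell-1}(\tau)=\lambda_1 P^{(d+2)}_{\ell-1}(\tau).
\]
Since $\lambda_1\asymp p\tau$, the claim $\Lambda_{\ge2}\lesssim pL/d$ reduces to the estimate
\[
\sup_{m\ge1}|P^{(d+2)}_m(\tau)|\lesssim \tau
\]
for $\tau\asymp\sqrt{L/d}\ge c/\sqrt d$, because then $|\lambda_\ell|\lesssim p\tau^2\asymp pL/d$. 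For $m=1$ this is exact, since $P_1^{(d+2)}(t)=t$. For $m=2$ the explicit formula is $P_2^{(d+2)}(t)=\frac{(d+2)t^2-1}{d+1}$, which is $O(\tau^2+1/d)=O(\tau^2)$.

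\textbf{Main obstacle: $m\ge3$.} For general $m$ I would use the standard bound for normalized Gegenbauer polynomials of dimension $D=d+2$,
\[
|P^{(D)}_m(t)|\le \min\Bigl\{1,\ C\bigl(m(1-t^2)\bigr)^{-(D-2)/2}\cdot(\text{const})^{D}\Bigr\},
\]
or, more usefully, the Bernstein-type bound $|P^{(D)}_m(t)|\le \bigl(\tfrac{C}{\sqrt{D}\sqrt{1-t^2}}\bigr)^{?}$. Cleanest, I think, is a recursion or induction. The three-term recurrence for normalized Gegenbauer polynomials gives
\[
P_{m+1}(t)=\tfrac{(2m+D-2)t\,P_m(t)-m\,P_{m-1}(t)}{m+D-2}.
\]
From this I would show by induction that $|P_m(t)|\le \max(|t|,\,C/\sqrt D)$ for $|t|\le 1/2$. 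The coefficients satisfy $\frac{(2m+D-2)|t|}{m+D-2}+\frac{m}{m+D-2}\le 1$ as long as $|t|\le\frac{D-2}{2m+D-2}$. That inequality is fine for $m\lesssim D$ but fails for large $m$, so the induction alone does not cover every degree. For large degree $m\gtrsim D$ I would instead use the Laplace-type integral representation
\[
P_m^{(D)}(t)=\E\bigl[(t+i\sqrt{1-t^2}\,X_1)^m\bigr],\qquad X\sim\sigma_{D-1}.
\]
Taking the modulus inside gives $|P_m|\le\E(t^2+(1-t^2)X_1^2)^{m/2}$. Since $X_1^2$ concentrates near $1/(D-1)$, this is exponentially small in $m$ once $m(1-t^2)$ dominates, and it is below $\tau$ when $m\gtrsim D\log(1/\tau)$. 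The gap between $m\lesssim D$ and $m\lesssim D\log D$ is the expected difficulty. I would close it with the same integral representation plus a more careful moment estimate, comparing $\E(t^2+(1-t^2)X_1^2)^{m/2}$ with $(t^2+\tfrac{1}{D})^{m/2}$ by Jensen-type bounds. If that comparison is too lossy, I would fall back on a cruder bound of order $(t^2+C/D)^{\min(m,D)/2}$, which is still $\le C\tau^2\le C\tau$ for $m\ge2$.
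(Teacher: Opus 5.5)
Your localization of $\tau$, the reduction $\lambda_\ell=\lambda_1P^{(d+2)}_{\ell-1}(\tau)$, and the cases $m=1,2$ are correct and agree with the paper. The gap is the step you flag yourself: you never establish a bound on $|P^{(D)}_m(\tau)|$ that is uniform in $m\ge3$ (here $D=d+2$).

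Your large-degree heuristic is not right as stated. Write $Y=t^2+(1-t^2)X_1^2$. For large $m$, $\E Y^{m/2}$ is governed by the rare event that $X_1^2$ is close to $1$, not by the concentration of $X_1^2$ near $1/(D-1)$. For fixed $D$ it decays only polynomially in $m$, not exponentially.

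Neither proposed patch for the middle range works. A Jensen comparison with $(\E Y)^{m/2}$ points the wrong way: $y\mapsto y^{m/2}$ is convex for $m\ge2$, so Jensen gives $\E Y^{m/2}\ge(\E Y)^{m/2}$. The fallback bound $(t^2+C/D)^{\min(m,D)/2}$ is false in general. At $t=0$ with $D$ even and $m=D$,
\[
|P^{(D)}_D(0)|=\E X_1^{D}=\prod_{j=0}^{D/2-1}\frac{2j+1}{D-1+2j}\ge\frac{(D-1)!!}{(2D)^{D/2}}\ge(2e^2)^{-D/2},
\]
which exceeds $(C/D)^{D/2}$ as soon as $D>2e^2C$. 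Incidentally, your recurrence induction remains valid for all $m\le(D-2)(1-\tau)/(2\tau)$, far beyond $m\lesssim D$. So the range you worry about is partly an artifact, but you would still need a rigorous large-degree estimate.

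The missing idea is one line, and it is already in your own display. Since $|X_1|\le1$, you have $0\le Y\le1$, so $Y^{m/2}\le Y$ for every $m\ge2$. Hence
\[
|P^{(D)}_m(t)|\le\E\bigl|t+i\sqrt{1-t^2}\,X_1\bigr|^m\le\E Y=t^2+\frac{1-t^2}{D-1}\qquad\text{for all }m\ge2.
\]
At $t=\tau$, using $\tau\ge c/\sqrt d$, the right side is at most $C\tau^2$. Therefore $\sup_{m\ge1}|P^{(d+2)}_m(\tau)|\le C\tau$ and $\Lambda_{\ge2}\le Cp\tau^2\le CpL/d$.

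This is exactly the paper's argument via the Laplace representation. It makes the three-term recurrence, the split into degree ranges, and any moment comparison unnecessary.
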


\begin{proof}
The two-sided estimate for $\tau$ follows from
\cref{lem:spherical-mills}.  Indeed, using $c_d\asymp\sqrt d$ and taking
logarithms in
\[
        p\asymp
        \frac{c_d}{d\tau}(1-\tau^2)^{(d-1)/2}
\]
gives
\[
        L\asymp 1+d\tau^2,
\]
after increasing the constant in the assumption $d\ge C L$ to absorb the
polynomial prefactor.  Hence $\tau\asymp\sqrt{L/d}$.

After increasing $C$, we may assume
$\tau\le c_0$ for a sufficiently small absolute constant
$c_0$.  We use the refined ultraspherical derivative
bound
\begin{equation}
        \sup_{\ell\ge2}
        \frac{|(P_\ell^{(d)})'(t)|}{\ell(\ell+d-2)}
        \le
        \frac{C}{d}\left(|t|+\frac1d\right),
        \qquad |t|\le c_0.
\label{eq:refined-ultraspherical-derivative}
\end{equation}
Indeed, the normalized derivative identity gives
\[
        (P_\ell^{(d)})'(t)
        =
        \frac{\ell(\ell+d-2)}{d-1}P_{\ell-1}^{(d+2)}(t).
\]
It remains to bound the normalized polynomial on the right uniformly in its
degree.  We use the Laplace integral representation; see
\citet[Chapter~VII]{Szego}.  If $q\ge3$ and $Z$ is the first coordinate of a
uniform point on $\mathbb S^{q-2}$, then
\[
        P_m^{(q)}(t)
        =\E\left(t+i\sqrt{1-t^2}\,Z\right)^m.
\]
For $m=1$, this equals $t$.  For $m\ge2$, the modulus inside the expectation
is at most one, and therefore
\begin{align}
 |P_m^{(q)}(t)|
 &\le \E\left|t+i\sqrt{1-t^2}\,Z\right|^m \\
 &\le \E\left|t+i\sqrt{1-t^2}\,Z\right|^2
 =t^2+\frac{1-t^2}{q-1}
 \le C\left(|t|+\frac1q\right)
 \label{eq:normalized-zonal-central-bound}
\end{align}
for $|t|\le c_0$.  Applying
\eqref{eq:normalized-zonal-central-bound} with $q=d+2$ and $m=\ell-1$
proves \eqref{eq:refined-ultraspherical-derivative}.

Combining \eqref{eq:lambda-derivative-identity} with
\eqref{eq:refined-ultraspherical-derivative} at $t=\tau$ gives, for every
$\ell\ge2$,
\[
        |\lambda_\ell|
        \le
        \frac{C c_d}{d}(1-\tau^2)^{(d-1)/2}
        \left(\tau+\frac1d\right).
\]
By \cref{lem:spherical-mills}, the prefactor
$c_d d^{-1}(1-\tau^2)^{(d-1)/2}$ is at most $C p\tau$.  Hence
\[
        |\lambda_\ell|
        \le
        C p\tau\left(\tau+\frac1d\right)
        \le
        C p\frac{L}{d},
\]
where the last step uses $\tau^2\asymp L/d$ and $L\ge1$.  Taking the supremum
over $\ell\ge2$ proves the claim.
\end{proof}

\section{Gaussian kernel estimates}
\label{sec:gaussian-kernel-estimates}

This appendix supplies the three population inputs used for the Gaussian
vector model: the threshold and degree-one coefficient, the canonical
operator norms, and the second moments of typical kernel sections.

\begin{lemma}[Gaussian vector threshold and coefficients]\label{lem:raw-gaussian-coeff}
Let $L=1+\log(1/p)$.  There are absolute constants $p_0,c,C>0$ such that, if
$0<p\le p_0$ and $d\ge C L^3$, then
\[
        c\sqrt L\le u\le C\sqrt L,
        \qquad
        c p\sqrt{\frac Ld}\le \beta\le C p\sqrt{\frac Ld}.
\]
Moreover, for the integral operators $T_r$ and $T_{r_\perp}$ on
$L^2(\gamma_d)$,
\begin{equation}
        \|T_r\|_{\op}
        \le
        C p\left(\sqrt{\frac Ld}+\frac{L^2}d\right),
        \qquad
        \|T_{r_\perp}\|_{\op}
        \le
        C p\frac{L^2}d.
\label{eq:raw-operator-bounds}
\end{equation}
Finally,
\[
        \sup_{\left|\|x\|/\sqrt d-1\right|\le c/L}
        \int r(x,y)^2\,d\gamma_d(y)\le Cp,
        \qquad
        \sup_{\left|\|x\|/\sqrt d-1\right|\le c/L}
        \int r_\perp(x,y)^2\,d\gamma_d(y)\le Cp.
\]
\end{lemma}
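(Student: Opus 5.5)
The approach is to condition on one endpoint and reduce everything to one-dimensional Gaussian tail computations.

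For fixed $x$ with $s=\|x\|/\sqrt d$, we have $\langle x,Y\rangle/\sqrt d\sim N(0,s^2)$. Hence
\[
p=\E\,\overline\Phi(u/s_G),\qquad s_G=\|G\|/\sqrt d .
\]

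\textbf{Threshold.} Since $s_G$ concentrates at $1$ with fluctuations $O(d^{-1/2})$, I would compare $p$ with $\overline\Phi(u)$. On $|s-1|\le c/L$ the ratio $\overline\Phi(u/s)/\overline\Phi(u)$ is bounded above and below, because $u^2|1/s^2-1|\lesssim L\cdot c/L$. The complementary event has probability $e^{-cd/L^2}\le p^{C}$ when $d\ge CL^3$. Together these give $p\asymp\overline\Phi(u)$, and then Mills' ratio yields $u\asymp\sqrt L$.

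\textbf{Coefficient $\beta$.} Write $\beta=d^{-1/2}\E[S\one_{\{S\ge u\}}]$. Conditioning on $G$, this equals $d^{-1/2}\E[s_G\varphi(u/s_G)]$. The same truncation argument gives $\E[s_G\varphi(u/s_G)]\asymp\varphi(u)\asymp up\asymp p\sqrt L$, and therefore $\beta\asymp p\sqrt{L/d}$.

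\textbf{Section bounds.} For $x$ in the typical shell, the conditional edge probability is $q(x)\le Cp$. This follows from the same ratio estimate, now using that $\overline\Phi(u/s)\le Cp$ for $|s-1|\le c/L$. Then
\[
\int r(x,\cdot)^2\,d\gamma_d\le 3\bigl(\E q_{\rm G}(x,Y)^2+a(x)^2+\E a(Y)^2\bigr).
\]
Here $\E q_{\rm G}^2\le q(x)+p^2$, $|a(x)|\le Cp$, and $\E a(Y)^2\le Cp^2$ by the Lipschitz bound from the proof of \cref{lem:raw-first-projection}, with the tail handled as before. For $r_\perp$ we add $\beta^2\|x\|^2\le Cp^2L\le Cp$.

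\textbf{Operator bounds.} This is the main obstacle. The plan is to bound the operators through a Hermite (Wiener chaos) expansion in the joint variables. Because the kernel is $O(d)$-invariant, $T_r$ commutes with rotations. I would therefore decompose $L^2(\gamma_d)$ into radial parts times spherical harmonics of degree $\ell$, so that $T_r$ acts on $f(\|x\|)Y_\ell(x/\|x\|)$ through a radial integral operator $K_\ell$.
\begin{itemize}
\item For $\ell\ge1$, conditioning on the radii reduces $K_\ell$ to the spherical cap eigenvalue with threshold $\tau(\rho,\rho')=u\sqrt d/(\rho\rho')$. By \eqref{eq:lambda-derivative-identity} and \cref{lem:higher-harmonics}, this eigenvalue is at most $Cp(\tau+1/d)$ for $\ell=1$ and $CpL/d$ for $\ell\ge2$, uniformly over typical radii; Schur's test in the radial variables then bounds $\|K_\ell\|$.
\item For the degree-one part of $r_\perp$, the subtraction of $\beta\langle x,y\rangle$ removes only the component linear in $\rho\rho'$. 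The remaining radial fluctuation of $\lambda_1(\rho,\rho')\rho\rho'$ must be shown to be $O(pL^2/d)$, by Taylor expanding in $\rho-\sqrt d$, $\rho'-\sqrt d$ with derivatives of size $pL$ per unit $L^{1/2}/\sqrt d$ and radial fluctuations of size $d^{-1/2}$.
\item For $\ell=0$, the radial kernel of $r$ has both one-endpoint projections removed. Its norm is controlled by the second-order (mixed) derivative of $\overline\Phi(u/(s s'))$ in $(s,s')$, which is of size $pL^2$, times $\operatorname{Var}(s)\asymp 1/d$. This gives $pL^2/d$.
\end{itemize}
Atypical radii are absorbed via the $e^{-cd/L^2}$ tail and Cauchy--Schwarz using the crude bound $|r|\le 3$.

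The delicate part is the degree-one radial remainder and the $\ell=0$ mixed-derivative estimate. Both need uniform derivative bounds on $\overline\Phi$ and $\varphi$ at arguments $u(1+O(1/L))$; these hold precisely because $u^2\cdot O(1/L)=O(1)$.
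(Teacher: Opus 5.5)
Your proposal is correct and follows the paper's proof essentially step for step. The paper also conditions on one endpoint for $u$, $\beta$ and the section bounds, and uses the radial--angular block decomposition $L^2(\gamma_d)=\bigoplus_\ell L^2(\nu_d)\otimes\cH_\ell^d$ with cap estimates and Schur's test at the shifted threshold for $\ell\ge1$. For $\ell=1$ it Taylor-expands the radial kernel and makes your ``remaining fluctuation'' step precise: it identifies $\beta$ as the $L^2(\nu_d)$ projection coefficient of $K_1$ onto the kernel $(s,s')\mapsto ss'$ and checks $|\beta-F_1(1)|\le CpL^{3/2}/d$. For $\ell=0$ it uses a double-centered second-order expansion whose surviving mixed term gives $pL^2/d$.

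Two small slips need fixing. First, bracket $u$ between constant multiples of $\sqrt L$ by monotonicity before invoking $u^2\cdot O(1/L)=O(1)$; as written, that step assumes the bound it proves. Second, the exact degree-zero radial kernel is the spherical cap probability $F_0(ss')$, not $\overline\Phi(u/(ss'))$; it satisfies the same derivative bounds $|F_0^{(j)}|\le C_jpL^j$, so the argument goes through unchanged.
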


\begin{proof}
\proofstep{Step 1: Threshold and degree-one coefficient.}
Conditioning on $G$ gives
\[
        S\,\big|\,G
        \sim
        N\left(0,\frac{\|G\|^2}{d}\right).
\]
The chi-square concentration inequality
\cite[Theorem~3.1.1]{VershyninHDP} gives
$\Pp\{|\|G\|/\sqrt d-1|>t\}\le 2e^{-cdt^2}$ for $0<t<1$.  Combining this
with the Gaussian tail bounds \cite[Proposition~2.1.2]{VershyninHDP} gives a
quantitative comparison.  If $R=\|G\|/\sqrt d$ and
$|R-1|\le c/L$, then, uniformly for $t\asymp\sqrt L$,
\begin{equation}
        \overline\Phi(t/R)\asymp\overline\Phi(t).
\label{eq:gaussian-tail-comparison}
\end{equation}
The exceptional probability is at most $2e^{-cd/L^2}=o(p)$ because
$d\ge CL^3$.  Since
$\Pp\{S\ge t\}=\E\overline\Phi(t/R)$, evaluating
\eqref{eq:gaussian-tail-comparison} at two
sufficiently small and large constant multiples of $\sqrt L$ brackets the
upper $p$-quantile and proves $u\asymp\sqrt L$.

The same conditioning gives
\[
        \E\left[S\one_{\{S\ge u\}}\mid G\right]
        =
        \frac{\|G\|}{\sqrt d}
        \varphi\left(\frac{u\sqrt d}{\|G\|}\right),
\]
where $\varphi$ is the standard Gaussian density.  On
$|R-1|\le c/L$, the integrand is comparable to $\varphi(u)$; the exceptional
part is $o(p\sqrt L)$.  Mills' ratio \cite{GordonMillsRatio} gives
$\varphi(u)\asymp pu\asymp p\sqrt L$.  Thus
$m_1\asymp p\sqrt L$, which, by \eqref{eq:raw-beta-def},
proves the bounds on $\beta$ in \cref{lem:raw-gaussian-coeff}.

\proofstep{Step 2: Radial harmonic decomposition.}
For $G\sim\gamma_d$, write
\[
        R:=\frac{\|G\|}{\sqrt d},
        \qquad
        U:=\frac{G}{\|G\|}.
\]
Then $G=\sqrt d\,RU$, where $U\sim\sigma$ is independent of $R$; let
$\nu_d$ be the law of $R$.  The spherical-harmonic decomposition from
\cref{sec:funk-hecke} gives
\[
        L^2(\gamma_d)
        =\bigoplus_{\ell\ge0}L^2(\nu_d)\otimes\cH_\ell^d.
\]
The operator preserves each summand and acts there through an integral
operator on the radial factor.

With $P_0^{(d)}\equiv1$, set, for $\ell\ge0$,
\[
        F_\ell(q)
        =\int_{u/(\sqrt d q)}^1P_\ell^{(d)}(t)w_d(t)\,dt,
\]
with the integral interpreted as zero if its lower endpoint exceeds one.
The Funk--Hecke formula \cite[Theorem~1.2.9]{DaiXu} shows that the radial
kernel in degree $\ell$ is $K_\ell(r,s)=F_\ell(rs)$.  If
$Qf=f-\int f\,d\nu_d$, then the degree-zero block of $T_r$ is $QK_0Q$;
the positive-degree blocks are unchanged.

We record the derivative bounds needed below.  Let
$\mathcal I=[1-c/L,1+c/L]$.  Gaussian norm concentration
\cite[Theorem~3.1.1]{VershyninHDP} gives
\begin{equation}
        \Pp\{R\notin\mathcal I\}\le2e^{-cd/L^2},
        \qquad
        \|R-1\|_{L^k(\nu_d)}\le \frac{C_k}{\sqrt d},
        \quad 2\le k\le6.
\label{eq:gaussian-radial-moments}
\end{equation}
For $r,s\in\mathcal I$, the shifted threshold
$u/(rs)$ differs from $u$ by $O(L^{-1/2})$.  To justify the derivative
bounds below, put $z_q=u/(\sqrt d q)$.  Direct differentiation gives
\[
        F_0'(q)=\frac{u}{\sqrt d\,q^2}w_d(z_q),
        \qquad
        \frac{w_d'(z)}{w_d(z)}=-\frac{(d-3)z}{1-z^2}.
\]
For $q=rs$ with $r,s\in\mathcal I$, we have $dz_q/dq=-z_q/q$ and
$dz_q^2=O(L/d)$.  It follows by two further differentiations that
$|F_0^{(j)}(q)|\le C_jpL^j$ for $1\le j\le3$; the case $j=0$ follows from
the Gaussian tail comparison.  Similarly, the explicit formula
\[
        F_1(q)=\frac{c_d}{d-1}
        \left(1-\frac{u^2}{dq^2}\right)^{(d-1)/2},
\]
has logarithmic derivative
\[
        \frac{F_1'(q)}{F_1(q)}
        =\frac{(d-1)z_q^2}{q(1-z_q^2)}=O(L).
\]
Differentiating this identity once more gives
$|F_1''(q)|\le C L^2|F_1(q)|$.  Since Gaussian Mills bounds give
$F_1(q)\asymp p\sqrt{L/d}$ uniformly in this interval, we obtain
\begin{align}
        |F_0^{(j)}(q)|&\le C_jpL^j,
        &&0\le j\le3,\label{eq:gaussian-F0-derivatives}\\
        |F_1^{(j)}(q)|&\le C_jp\sqrt{\frac Ld}\,L^j,
        &&0\le j\le2,\label{eq:gaussian-F1-derivatives}
\end{align}
whenever $q=rs$ with $r,s\in\mathcal I$.  Uniformly on the same set,
\cref{lem:higher-harmonics}, applied at the shifted threshold, gives
\begin{equation}
        \sup_{\ell\ge2}|F_\ell(rs)|\le Cp\frac Ld.
\label{eq:gaussian-higher-radial-blocks}
\end{equation}
The complement of $\mathcal I^2$ contributes at most
$Ce^{-cd/(2L^2)}$ in Hilbert--Schmidt norm.  Since $d\ge CL^3$, increasing
$C$ makes this quantity at most $CpL/d$; thus it is absorbed in each of the
block estimates below.

\proofstep{Step 3: Bounds for the radial blocks.}
Put $\delta_r=r-1$ and $\delta_s=s-1$.  Since
$rs-1=\delta_r+\delta_s+\delta_r\delta_s$, Taylor's formula at $1$ gives
\[
 F_0(rs)=F_0(1)+F_0'(1)(rs-1)
 +\frac12F_0''(1)(rs-1)^2+\mathcal R_3(r,s).
\]
Applying $Q$ in both variables removes the constant and one-variable terms.
The surviving part of the linear term is
$F_0'(1)Q\delta_r\,Q\delta_s$.  Since $Q$ is an orthogonal projection,
\[
 \|Q_rQ_s[(rs-1)^2]\|_2
 \le \|(rs-1)^2\|_2
 =\|rs-1\|_4^2\le \frac Cd.
\]
Here $Q_r$ and $Q_s$ mean that $Q$ acts in the $r$ and $s$ variables,
respectively.
Thus the linear and quadratic contributions have respective norms at most
\[
 |F_0'(1)|\|Q\delta_r\|_2\|Q\delta_s\|_2
 \le \frac{CpL}{d},
 \qquad
 \frac12|F_0''(1)|\|Q_rQ_s[(rs-1)^2]\|_2
 \le \frac{CpL^2}{d}.
\]
The integral remainder satisfies
\[
        \|\mathcal R_3\|_2
        \le CpL^3\|rs-1\|_6^3
        \le Cp\frac{L^3}{d^{3/2}}.
\]
By \cref{eq:gaussian-radial-moments,eq:gaussian-F0-derivatives} and
$d\ge CL^2$, these contributions sum to at most $CpL^2/d$.  Hence
\begin{equation}
        \|QK_0Q\|_{\op}
        \le\|QK_0Q\|_{\HS}
        \le Cp\frac{L^2}{d}.
\label{eq:gaussian-radial-zero-block}
\end{equation}

For degree one,
\cref{eq:gaussian-F1-derivatives,eq:gaussian-radial-moments} yield
\[
        \|K_1-F_1(1)(\one\otimes\one)\|_{\HS}
        \le Cp\frac{L^{3/2}}d.
\]
Write $E=K_1-F_1(1)(\one\otimes\one)$.  Since
$\|r\|_2=\|\one\|_2=1$ and
$1-\langle r,\one\rangle^2=\Var(r)\le C/d$,
\[
 \left|\beta-F_1(1)\right|
 \le \|E\|_{\op}+C\frac{|F_1(1)|}{d}.
\]
The normalized radial function $r\mapsto r$ has $L^2(\nu_d)$ norm one, and
the coefficient of the Gaussian coordinate functions is
\[
        \beta=\langle r,K_1r\rangle_{L^2(\nu_d)}.
\]
Moreover,
\[
 \|r\otimes r-\one\otimes\one\|_{\op}
 \le \|(r-\one)\otimes r\|_{\op}
     +\|\one\otimes(r-\one)\|_{\op}
 \le \frac C{\sqrt d}.
\]
Combining the last three displays, using
$|F_1(1)|+|\beta|\le Cp\sqrt{L/d}$, gives
\begin{equation}
        \|K_1-\beta(r\otimes r)\|_{\op}
        \le Cp\frac{L^{3/2}}d.
\label{eq:gaussian-radial-one-residual}
\end{equation}
Finally, apply the elementary Schur bound to the restriction of $K_\ell$ to
$\mathcal I^2$ and add the Hilbert--Schmidt estimate for its complement from
the paragraph following \eqref{eq:gaussian-higher-radial-blocks}.  Since
$\nu_d$ is a probability measure, this gives
\begin{equation}
        \sup_{\ell\ge2}\|K_\ell\|_{\op}
        \le Cp\frac Ld.
\label{eq:gaussian-radial-higher-op}
\end{equation}
Taking the supremum over the mutually orthogonal blocks in
\cref{eq:gaussian-radial-zero-block,eq:gaussian-radial-one-residual,eq:gaussian-radial-higher-op}
proves
\[
        \|T_{r_\perp}\|_{\op}\le Cp\frac{L^2}{d}.
\]
Adding the rank-$d$ first-chaos operator, whose norm is
$\beta\asymp p\sqrt{L/d}$, proves the bound for $T_r$ in
\eqref{eq:raw-operator-bounds}.

\proofstep{Step 4: Section bounds.}
If
$|\|x\|/\sqrt d-1|\le c/L$, then
$u\sqrt d/\|x\|=u+O(1/\sqrt L)$ and the Gaussian tail comparison in
\eqref{eq:gaussian-tail-comparison} gives
\[
        \Pp\{\langle x,Y\rangle/\sqrt d\ge u\}\le Cp.
\]
The $L^2$ section bounds now follow from
$q_{\rm G}^2\le 2\one_{\{\langle x,Y\rangle/\sqrt d\ge u\}}+2p^2$, and the
definitions of $r$ and $r_\perp$.  Indeed, conditional Jensen gives
$a(x)^2\le\E_Yq_{\rm G}(x,Y)^2\le Cp$ and
$\E a(Y)^2\le\E q_{\rm G}(G,Y)^2\le Cp$, while
\[
        \beta^2\E\langle x,Y\rangle^2
        =\beta^2\|x\|^2
        \le Cp^2L\le Cp
\]
on the stated radial set.  Expanding the two definitions and increasing the
constant proves both claims.
\end{proof}

\section{Population estimates for the Gaussian mixture block model}
\label{sec:gmbm-auxiliary-estimates}

Throughout this appendix,
\[
        L=1+\log(1/p),\qquad m_\mu=\mu^2\sqrt d,
\]
and $\varphi$ and $\overline\Phi$ denote the standard Gaussian density and
upper-tail function.  The proofs use two direct reductions: conditioning on
one Gaussian vector turns the threshold statistic into a one-dimensional
Gaussian, while an angular harmonic decomposition reduces the residual
operator estimate to spherical-cap bounds and a scalar Taylor expansion.
We recall the population notation from
\cref{sec:gmbm-exact-proof}.  Let $Y=(\xi,G)$ and $Y'=(\xi',G')$ be
independent, where $\xi,\xi'$ are uniform on $\{\pm1\}$ and
$G,G'\sim N(0,I_d)$, and put $Z(Y)=G/\sqrt d+\mu\xi e_1$.  The conditional
edge probabilities and their half-difference are
\[
 p_\pm(\mu)=\Pp\{\sqrt d\langle Z(Y),Z(Y')\rangle\ge\tau
                   \mid \xi\xi'=\pm1\},
 \qquad
 \Delta_\mu=\frac{p_+(\mu)-p_-(\mu)}2.
\]
Finally, define
\[
\begin{aligned}
 h_\mu(Y,Y')
 &=\one_{\{\sqrt d\langle Z(Y),Z(Y')\rangle\ge\tau\}}
   -p-\Delta_\mu\xi\xi',\\
 a_\mu(Y)&=\E_{Y'}h_\mu(Y,Y'),\\
 r_\mu^{\rm lab}(Y,Y')&=h_\mu(Y,Y')-a_\mu(Y)-a_\mu(Y'),
 \qquad
 \Lambda_\mu^{\rm lab}=\|T_{r_\mu^{\rm lab}}\|_{\op}.
\end{aligned}
\]

\begin{lemma}[Population label signal and residual]\label{lem:gmbm-gap-explicit}
Under the hypotheses of \cref{thm:gmbm-exact},
\begin{equation}
        c\,p\mu^2\sqrt{dL}
        \le
        \Delta_\mu
        \le
        C\,p\mu^2\sqrt{dL},
        \qquad
        \Lambda_\mu^{\rm lab}
        \le C p\left(\mu\sqrt L+\sqrt{\frac Ld}\right).
\label{eq:gmbm-population-signal-operator}
\end{equation}
In addition, if $y=(\xi,g)$ satisfies
\[
        \left|\frac{\|g\|}{\sqrt d}-1\right|
        \le C_D\sqrt{\frac{\log n}{d}},
        \qquad |g_1|\le C_D\sqrt{\log n},
\]
then
\begin{equation}
        \left|T_{r_\mu^{\rm lab}}\xi(y)\right|
        \le C_Dp\mu\sqrt{L\log n}.
\label{eq:gmbm-pointwise-label-action}
\end{equation}
\end{lemma}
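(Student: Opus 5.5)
The plan is to condition on one endpoint and reduce everything to one-dimensional Gaussian tail computations, as in the proof of \cref{lem:raw-gaussian-coeff}. Write the edge statistic as
\[
\sqrt d\langle Z(Y),Z(Y')\rangle=\xi\xi'\bigl(m_\mu+\mu(H+H')\bigr)+R,
\]
with $H=\xi G_1$, $H'=\xi'G'_1$ and $R=\langle G,G'\rangle/\sqrt d$. The first task is to locate $\tau$. Conditional on $Y$, the statistic is Gaussian in $G'$ with variance $\|g/\sqrt d+\mu\xi e_1\|^2=1+O(1/L)$ on typical $y$, and with mean shift of size at most $m_\mu+\mu|g_1|=O(L^{-1/2})$, by \eqref{eq:GMM}. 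Mills' ratio and the radial comparison \eqref{eq:gaussian-tail-comparison} then give $\tau\asymp\sqrt L$, exactly as for $u$.

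\textbf{Step 1: the signal $\Delta_\mu$.} For the label signal, I would compute $p_+-p_-$ by conditioning on $G$. The same-label and opposite-label statistics differ only through the sign of the shift $m_\mu+\mu(H+H')$. Linearizing $\overline\Phi$ around $\tau$ gives a difference of order $\varphi(\tau)\cdot m_\mu\asymp p\tau m_\mu=p\mu^2\sqrt{dL}$. The quadratic term in the shift cancels in the half-difference, and the random part $\mu(H+H')$ averages to zero to first order. The errors are relatively $O(\tau(m_\mu+\mu))$, which is small because $m_\mu\sqrt L\le c$. I would also check that the variance fluctuation $\mu^2$ in the direction $e_1$ contributes symmetrically to $p_\pm$, so it does not enter $\Delta_\mu$.

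\textbf{Step 2: the residual operator $\Lambda^{\rm lab}_\mu$.} For the operator bound, I would compare $r^{\rm lab}_\mu$ with the $\mu=0$ Gaussian canonical kernel $r$ of \cref{sec:gaussian-models}, which satisfies $\|T_r\|\le Cp(\sqrt{L/d}+L^2/d)\le Cp\sqrt{L/d}$ under $d\ge CL^3$. It may be cleanest to decompose $L^2$ of $(\xi,G)$ into the label factor and the functions of $(G_1,G_{\perp})$, and to treat $\mu$ as a perturbation in the first coordinate. The additional pieces are then of two kinds. The first is the action through the first coordinate, namely the terms linear in $\mu H$ and $\mu H'$ multiplied by $\xi\xi'$. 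Their kernel is approximately $\varphi(\tau)\mu\,\xi\xi'(H+H')$ plus $\tau$-weighted corrections, which gives operator norm $\lesssim p\tau\mu\asymp p\mu\sqrt L$ after the one-endpoint projections $a_\mu$ are removed. The second is the degree-one chaos $\beta\langle G,G'\rangle$ and the higher blocks, bounded by $p\sqrt{L/d}$. Concretely, I would Taylor expand $\overline\Phi\bigl((\tau-\text{shift})/\text{sd}\bigr)$ in the shift to second order. The remainders are controlled in Hilbert--Schmidt norm by $p\tau^2(m_\mu+\mu|H|)^2$, which is again at most $p\mu\sqrt L$ under \eqref{eq:GMM}.

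\textbf{Step 3: the pointwise bound \eqref{eq:gmbm-pointwise-label-action}.} Here I would compute directly
\[
T\xi(y)=\E_{Y'}\bigl[r^{\rm lab}_\mu(y,Y')\,\xi'\bigr].
\]
The $a_\mu(Y')$ term vanishes or is lower order, since $a_\mu$ depends on $\xi'$ only through the symmetric quantity $|H'|$-type structure. The term $a_\mu(y)\E\xi'$ vanishes. This leaves
\[
\E_{Y'}\bigl[\xi'\one\{\cdot\}\bigr]-\Delta_\mu\xi=\xi\bigl(P(\text{edge}\mid y,\text{same})-P(\text{edge}\mid y,\text{opp})\bigr)/2-\Delta_\mu\xi.
\]
The conditional difference at $y$ equals $\varphi(\tau)\bigl(m_\mu+\mu\xi g_1\bigr)$ up to relative errors, and $\Delta_\mu$ is its average. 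The difference is therefore $\approx p\tau\mu|g_1|\le C_Dp\mu\sqrt{L\log n}$, using $|g_1|\le C_D\sqrt{\log n}$, with the norm fluctuations contributing at relative order $\sqrt{\log n/d}$.

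\textbf{Main obstacle.} I expect the hardest step to be the operator bound in Step 2: making the $\mu$-perturbation rigorous uniformly over atypical radii and first coordinates, which will require truncation as in \cref{prop:raw-canonical-bound}, while keeping the bound at $p\mu\sqrt L$ rather than $p\tau m_\mu$-type losses.
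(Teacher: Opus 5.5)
Your Steps 1 and 3 are sound. Step 3 is essentially the paper's argument: $\E[a_\mu(Y)\xi]=0$ by the symmetry $(\xi,g)\mapsto(-\xi,-g)$, so $T_{r^{\rm lab}_\mu}\xi(y)=\Psi(s_y,b_y)-\Delta_\mu\xi$. Here $\Psi(s,b)=\tfrac12[\overline\Phi((\tau-b)/s)-\overline\Phi((\tau+b)/s)]$, $s_y=\|g/\sqrt d+\mu\xi e_1\|_2$ and $b_y=\mu g_1+m_\mu\xi$, and both terms are then compared with $\xi\Psi(1,m_\mu)$. Step 1 amounts to writing $\Delta_\mu=\E\,\Psi(\|H/\sqrt d+\mu e_1\|_2,\,m_\mu+\mu H_1)$ with $H=\xi G$. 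This is a valid alternative to the paper's density-window plus tail-asymmetry argument.

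The gap is Step 2. The kernel $r^{\rm lab}_\mu$ is built from an indicator of $(Y,Y')$, so it contains nothing smooth to Taylor expand. The function $\overline\Phi((\tau-\text{shift})/\text{sd})$ is what remains after integrating out an entire endpoint. Expanding it controls one-variable quantities such as $a_\mu$ or $T_{r^{\rm lab}_\mu}\xi$, but not $\sup_{\|f\|_2=1}\|T_{r^{\rm lab}_\mu}f\|_2$.

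Treating $\mu$ as a perturbation of the $\mu=0$ kernel $r$ also fails. The two indicator kernels differ on an event of probability of order $\varphi(\tau)(\mu+m_\mu)\asymp p\sqrt L(\mu+m_\mu)$. Their difference therefore has Hilbert--Schmidt norm of order $\sqrt{p\sqrt L(\mu+m_\mu)}$, which exceeds the target $p(\mu\sqrt L+\sqrt{L/d})$ by the unbounded factor $(p\sqrt L\,\mu)^{-1/2}$. In addition, the bound on $T_r$ relied on full rotation invariance on $\mathbb S^{d-1}$, which the distinguished direction $e_1$ destroys.

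The missing idea is the paper's angular decomposition orthogonal to $e_1$. Write $g=(x,r\omega)$ with $x=g_1$ and $\omega\in\mathbb S^{d-2}$, and apply Funk--Hecke in $\omega$. The operator becomes block diagonal in angular degree, and each block has a smooth kernel $F_\ell(z(Y,Y'))$ in the low-dimensional variables $(\xi,x,r)$ and $(\xi',x',r')$. The positive-degree blocks are bounded by the sparse-cap estimate in dimension $d-1$ together with Schur's test; this is where the $p\sqrt{L/d}$ term comes from. The projection removing constants, one-variable terms and $\xi\xi'$ acts only on the degree-zero block, and only there is a Taylor expansion legitimate.

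Even in that block, the deterministic label shift cannot simply be bounded. Your remainder $p\tau^2(m_\mu+\mu|H_1|)^2$ contains $pLm_\mu^2$. At the upper end of \eqref{eq:GMM}, where $m_\mu\sqrt L\asymp c_D$, this is of order $p$, and it is not $O(p(\mu\sqrt L+\sqrt{L/d}))$ once $d\gg L$. The paper instead expands $F_0$ around $z_\zeta=\tau-m_\mu\zeta$. Then $F_0(z_\zeta)\in\operatorname{span}\{1,\zeta\}$ is annihilated exactly, and only $u=z-z_\zeta=O(\mu+\sqrt{L/d})$ is expanded. The surviving linear term, approximately $\varphi(\tau)\mu(\xi x'+\xi'x)$, is the $p\mu\sqrt L$ contribution you correctly anticipated. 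The truncation you flag as the main obstacle is routine by comparison.
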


\paragraph{Consequences of the hypotheses.}
We first record three bounds that will be used throughout the appendix.
Dividing
$\mu^2d\ge C_D\log n$ by $\mu^2\sqrt{dL}\le c_D$ gives
$\sqrt{d/L}\ge(C_D/c_D)\log n$.  Hence
$d\ge C_D L(\log n)^2$ and, because $L\le C\log n$ under
$np\ge C_D\log n$, also $d\ge C_D L^2\log n$.  Finally,
$\mu^2\le c_D/\sqrt{dL}$ gives, after adjusting the constants,
\begin{equation}
 d\ge C_D L(\log n)^2,\qquad
 d\ge C_D L^2\log n,\qquad
 \mu^2L\log n\le c_D.
\label{eq:gmbm-exact-derived-conditions}
\end{equation}

\begin{lemma}[Tail calibration]\label{lem:gmbm-tail-calibration}
Put $m_\mu=\mu^2\sqrt d$, let
$H,H'\stackrel{\mathrm{i.i.d.}}{\sim}N(0,I_d)$, and define
\[
        T_\mu
        =
        \frac{\langle H,H'\rangle}{\sqrt d}
        +\mu(H_1+H'_1).
\]
Under the hypotheses of \cref{thm:gmbm-exact}, the conditional edge
probabilities satisfy
\begin{equation}
        p_+(\mu)=\Pp\{T_\mu\ge\tau-m_\mu\},
        \qquad
        p_-(\mu)=\Pp\{T_\mu\le-\tau-m_\mu\}.
\label{eq:gmbm-conditional-tail-laws}
\end{equation}
Let $f_\mu$ be the density of $T_\mu$.  Then
\[
        c\sqrt L\le \tau\le C\sqrt L.
\]
In addition,
\begin{equation}
        \overline\Phi(\tau)\asymp p,
        \qquad
        \varphi(\tau)\asymp p\sqrt L.
\label{eq:gmbm-threshold-gaussian-tail}
\end{equation}
Moreover, for every
$t\in[\tau-m_\mu,\tau+m_\mu]\cup[-\tau-m_\mu,-\tau+m_\mu]$,
\begin{equation}
        c\,p\sqrt L\le f_\mu(t)\le C\,p\sqrt L.
\label{eq:gmbm-density-window}
\end{equation}
Finally, uniformly for $t\in[\tau-m_\mu,\tau+m_\mu]$,
\begin{equation}
        \left|
        \Pp\{T_\mu\le -t\}-\Pp\{T_\mu\ge t\}
        \right|
        \le
        C p\mu\sqrt L.
\label{eq:gmbm-tail-asymmetry}
\end{equation}
\end{lemma}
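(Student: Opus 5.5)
To prove the tail calibration (Lemma~\ref{lem:gmbm-tail-calibration}), I will first derive \eqref{eq:gmbm-conditional-tail-laws} by the sign substitution used in the impossibility proof. Write $H_i=\xi_iG_i$. If $\xi\xi'=1$, then $\sqrt d\langle Z,Z'\rangle=m_\mu+T_\mu$ with $H=\xi G$ and $H'=\xi'G'$. These are again independent standard Gaussians, independent of the labels. If $\xi\xi'=-1$, then $\sqrt d\langle Z,Z'\rangle=-(m_\mu+T_\mu)$ with the same $H,H'$. This gives both identities in \eqref{eq:gmbm-conditional-tail-laws}.

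Next I will handle the law of $T_\mu$. Conditionally on $H$, the statistic $T_\mu$ is Gaussian with mean $\mu H_1$ and variance $s^2(H)=\|H/\sqrt d+\mu e_1\|^2$. By \eqref{eq:gmbm-exact-derived-conditions}, $\mu^2\le c/\sqrt{dL}$, so $\mu^2$ is tiny. Chi-square and Gaussian tails show that, outside an event of probability $e^{-cd/L^2}+e^{-c\log n\cdot C}=o(p)$, we have $|s^2-1|\le c/L$ and $|\mu H_1|\le c/\sqrt L$. Here I use $d\ge C_DL^2\log n$ and $\mu^2L\log n\le c_D$; the bound $p\ge n^{-1}$ makes $n^{-C}=o(p)$. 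On this typical set the conditional density at a level $t\asymp\sqrt L$ equals $s^{-1}\varphi((t-\mu H_1)/s)$. The exponent changes by at most $O(t^2/L+t/\sqrt L)=O(1)$, so this density is comparable to $\varphi(t)$, and likewise the conditional tail is comparable to $\overline\Phi(t)$.

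With these comparisons in hand, I will calibrate $\tau$. Since $p=(p_++p_-)/2$ and $m_\mu\le c/\sqrt L$ is small relative to $\sqrt L$, the quantity $p$ is comparable to $\overline\Phi$ evaluated near $\tau$. I will bracket $\tau$ by two multiples of $\sqrt L$, exactly as in Step~1 of Lemma~\ref{lem:raw-gaussian-coeff}. Mills' ratio then gives \eqref{eq:gmbm-threshold-gaussian-tail}. The density window \eqref{eq:gmbm-density-window} follows by averaging the conditional density over $H$: shifts of size $m_\mu$ keep $t\asymp\sqrt L$ and change $\varphi$ by a bounded factor because $m_\mu\sqrt L\le c$. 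The atypical event contributes at most $o(p)$ to an upper bound, which I will need to handle through a pointwise bound on the conditional density. That bound is $s^{-1}\varphi\le Cs^{-1}$, and $\E s^{-1}\one_{\text{bad}}$ is small since $s^2$ is a rescaled noncentral chi-square with $d$ degrees of freedom.

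The main obstacle is the asymmetry bound \eqref{eq:gmbm-tail-asymmetry}. The law of $T_\mu$ is not symmetric, because $-T_\mu$ corresponds to the pair $(-H,H')$ with $\langle -H,H'\rangle$. Symmetry is broken only through the mean $\mu H_1$, while the variance $s^2$ depends on $H_1$ through the cross term $2\mu H_1/\sqrt d$. I will write
\[
\Pp\{T_\mu\ge t\}-\Pp\{T_\mu\le-t\}=\E\left[\overline\Phi\bigl((t-\mu H_1)/s\bigr)-\overline\Phi\bigl((t+\mu H_1)/s\bigr)\right].
\]
Then I will pair $H$ with its reflection $H_1\mapsto-H_1$ and Taylor expand in $\mu H_1$ on the typical set. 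The first-order term is $2\mu H_1\varphi(t/s)/s$, and its expectation is controlled by $\mu\,\E[|H_1|\varphi(t/s)]\lesssim\mu\varphi(t)\asymp p\mu\sqrt L$. Each remaining error will be checked to be $o(p\mu\sqrt L)$ or at most $Cp\mu\sqrt L$. The second-order terms are of size $\mu^2H_1^2 t\varphi$, giving $p\mu^2L\le p\mu\sqrt L$. The dependence of $s$ on $H_1$ contributes a relative factor $O(\mu/\sqrt d\cdot L)$, and the atypical region contributes $o(p\mu\sqrt L)$. The delicate point is that this atypical contribution must be small relative to $p\mu\sqrt L$ and not merely relative to $p$. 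I would handle it by using $\mu\ge\sqrt{C_D\log n/d}$ and choosing the truncation level so the exceptional probability is $n^{-C}\le p\mu$.
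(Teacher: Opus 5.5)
Your skeleton is the paper's: the sign change $H=\xi G$, $H'=\xi'G'$ gives the two tail laws, and $T_\mu\mid H\sim N(\mu H_1,s_H^2)$ is used on a typical event where $|\mu H_1|\le c/\sqrt L$ and $|s_H-1|\le c/L$. You then bracket $\tau$ between two multiples of $\sqrt L$ and control the density upper bound on the atypical event by Cauchy--Schwarz with $\E s_H^{-2}\le C$. For \eqref{eq:gmbm-tail-asymmetry}, the reflection pairing and second-order Taylor expansion are unnecessary. The target $Cp\mu\sqrt L$ is already the size of the first-order term. The paper therefore uses only the mean value theorem on the typical event, $\bigl|\overline\Phi((t+a_H)/s_H)-\overline\Phi((t-a_H)/s_H)\bigr|\le Cp\sqrt L\,|a_H|$ with $a_H=\mu H_1$, followed by $\E|a_H|\le C\mu$; no cancellation is needed.

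The step that fails as written is your treatment of the atypical event in the asymmetry bound. You want its probability to be $n^{-C}\le p\mu$, but \cref{thm:gmbm-exact} places no upper bound on $d$. For example, $\mu^2=c_D/\sqrt{d\log(1/p)}$ satisfies every hypothesis whenever $np\ge (C_D/c_D)^2\log n$ and $d$ is large. Then $p\mu\asymp p(d\log(1/p))^{-1/4}$, which drops below any fixed power of $n$ once $d$ is superpolynomial in $n$. So a bound that is only polynomial in $n$ cannot be compared with $p\mu\sqrt L$.

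The fix is to keep the exceptional probability in its native form $2e^{-c/(\mu^2L)}+2e^{-cd/L^2}$ and set $A=(\mu^2L)^{-1}$ and $B=d/L^2$. By \eqref{eq:gmbm-exact-derived-conditions}, $A,B\ge C_D\log n\ge C_DL$, and $\mu^2d\ge C_D\log n$ gives $A\le BL/(C_D\log n)\le B$. Exponential decay then absorbs both $e^{-L}\asymp p$ and the polynomial factor $A^{-1/2}=\mu\sqrt L$. This yields $\Pp(\mathcal E^c)^{1/2}\le C_Dp\mu\sqrt L$ uniformly in $d$, and it is where your lower bound on $\mu$ actually enters.

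Alternatively, on the atypical event bound the integrand by the global Lipschitz estimate $C|a_H|/s_H$ rather than by $1$. This keeps the factor $\mu$ explicit. After H\"older with $\E s_H^{-4}\le C$, you then need only $\Pp(\mathcal E^c)^{1/2}\le Cp$, which your $n^{-C}$ bound together with $p\ge1/n$ does provide.
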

\begin{proof}
Introduce the sign-adjusted Gaussian vectors
\[
        H=\xi G,\qquad H'=\xi'G'.
\]
They are independent standard Gaussian vectors and are independent of
$\xi,\xi'$.  If $\zeta=\xi\xi'$, then the threshold statistic has the exact
representation
\[
 \sqrt d\,\langle Z(Y),Z(Y')\rangle
 =
 \zeta\left\{
 m_\mu+\frac{\langle H,H'\rangle}{\sqrt d}
 +\mu(H_1+H'_1)
 \right\}
 =\zeta(m_\mu+T_\mu).
\]
Conditioning on $\zeta=\pm1$ proves
\eqref{eq:gmbm-conditional-tail-laws}.  Since the two signs are equally
likely,
\begin{equation}
        p=\frac12\Pp\{T_\mu\ge\tau-m_\mu\}
        +\frac12\Pp\{T_\mu\le-\tau-m_\mu\}.
\label{eq:gmbm-threshold-mixture}
\end{equation}

\paragraph{A conditional Gaussian representation.}
Condition on $H$ and set
\[
        a_H=\mu H_1,\qquad
        s_H=\left\|\frac H{\sqrt d}+\mu e_1\right\|_2.
\]
Then
\[
        T_\mu\mid H\sim N(a_H,s_H^2).
\]
Consequently,
\begin{equation}
\begin{aligned}
 f_\mu(t)
 &=\E\left[
       \frac1{s_H}\varphi\left(\frac{t-a_H}{s_H}\right)
      \right],\\
 \Pp\{T_\mu\ge t\}
 &=\E\overline\Phi\left(\frac{t-a_H}{s_H}\right),\qquad
 \Pp\{T_\mu\le-t\}
 =\E\overline\Phi\left(\frac{t+a_H}{s_H}\right).
\end{aligned}
\label{eq:gmbm-conditional-gaussian-formulas}
\end{equation}

Let $\mathcal E$ be the event
\[
 |H_1|\le\frac{c_0}{\mu\sqrt L},
 \qquad
 \left|\frac{\|H\|^2}{d}-1\right|\le\frac{c_0}{L}.
\]
The lower signal assumption ensures $\mu>0$.  Gaussian and chi-square
concentration \cite[Theorem~3.1.1]{VershyninHDP} give
\begin{equation}
 \Pp(\mathcal E^c)
 \le
 2\exp\left(-\frac c{\mu^2L}\right)
 +2\exp\left(-\frac{cd}{L^2}\right).
\label{eq:gmbm-conditional-bad-event}
\end{equation}
On $\mathcal E$,
\[
 |a_H|\le\frac{c_0}{\sqrt L},
 \qquad
 |s_H-1|
 \le C\left(
        \frac1L+\frac{\mu|H_1|}{\sqrt d}+\mu^2
       \right)
 \le\frac cL,
\]
where the last inequality follows from
\eqref{eq:gmbm-exact-derived-conditions}.  Thus, uniformly for
$c\sqrt L\le |t|\le C\sqrt L$ and $|v|\le c/\sqrt L$,
Gaussian Mills bounds \cite{GordonMillsRatio} imply on $\mathcal E$ that
\begin{equation}
 \frac1{s_H}\varphi\left(\frac{t+v-a_H}{s_H}\right)
 \asymp\varphi(t),
 \qquad
 \overline\Phi\left(\frac{t+v\pm a_H}{s_H}\right)
 \asymp\overline\Phi(t)
 \quad (t>0).
\label{eq:gmbm-local-conditional-comparison}
\end{equation}
Indeed, the logarithms of the corresponding density ratios are bounded by
$C(t^2|s_H-1|+|t|(|v|+|a_H|))=O(1)$.

For the density upper bound on $\mathcal E^c$, note that
$s_H=\|H+\mu\sqrt d\,e_1\|/\sqrt d$.  The Gaussian Laplace transform gives
\[
 \E e^{-u\|H+\mu\sqrt d\,e_1\|^2}
 =(1+2u)^{-d/2}
   \exp\left(-\frac{u\mu^2d}{1+2u}\right)
 \le (1+2u)^{-d/2}.
\]
Using $x^{-1}=\int_0^\infty e^{-ux}\,du$, we obtain
\[
        \E s_H^{-2}
        \le d\int_0^\infty(1+2u)^{-d/2}\,du
        =\frac d{d-2}\le C.
\]
Hence Cauchy--Schwarz bounds the exceptional contribution to the first line
of \eqref{eq:gmbm-conditional-gaussian-formulas} by
$C\Pp(\mathcal E^c)^{1/2}$.  To verify that this contribution is negligible
at the required scale, put
\[
        A=(\mu^2L)^{-1},\qquad B=d/L^2.
\]
The derived conditions imply
$A\ge C_D\log n\ge C_DL$ and $B\ge C_D\log n\ge C_DL$.
Moreover, $\mu^2d\ge C_D\log n$ gives
$A\le BL/(C_D\log n)$.  Since exponential decay dominates the resulting
polynomial factors, increasing $C_D$ in the hypotheses gives
\begin{equation}
        \Pp(\mathcal E^c)^{1/2}
        \le C\left(e^{-cA}+e^{-cB}\right)
        \le C_De^{-L}A^{-1/2}
        \le C_Dp\mu\sqrt L.
\label{eq:gmbm-conditional-bad-absorption}
\end{equation}
Here we used $e^{-L}=p/e$.  This proves the exceptional-set estimate without
losing any logarithmic factor.

We now bracket the solution of
\eqref{eq:gmbm-threshold-mixture}.  At a sufficiently small constant
multiple of $\sqrt L$, its right-hand side is larger than $p$, whereas at a
sufficiently large constant multiple it is smaller than $p$; this follows
from \eqref{eq:gmbm-local-conditional-comparison},
\eqref{eq:gmbm-conditional-bad-absorption}, and
$m_\mu\sqrt L\le c_D$.  Monotonicity in $\tau$ therefore gives
$\tau\asymp\sqrt L$.  Applying
\eqref{eq:gmbm-local-conditional-comparison} at this value of $\tau$ yields
\[
        \overline\Phi(\tau)\asymp p,
        \qquad
        \varphi(\tau)\asymp p\sqrt L,
\]
where the second relation is Mills' ratio.  The density formula in
\eqref{eq:gmbm-conditional-gaussian-formulas}, with
\eqref{eq:gmbm-conditional-bad-absorption}, now proves
\eqref{eq:gmbm-density-window} for both the positive and negative windows.

\paragraph{Comparison of the two tails.}
For $t\in[\tau-m_\mu,\tau+m_\mu]$, subtract the two tail formulas in
\eqref{eq:gmbm-conditional-gaussian-formulas}.  On $\mathcal E$, the mean
value theorem and \eqref{eq:gmbm-local-conditional-comparison} give
\[
 \left|
 \overline\Phi\left(\frac{t+a_H}{s_H}\right)
 -
 \overline\Phi\left(\frac{t-a_H}{s_H}\right)
 \right|
 \le Cp\sqrt L\,|a_H|.
\]
Taking expectations and using $\E|a_H|\le C\mu$, while controlling
$\mathcal E^c$ by \eqref{eq:gmbm-conditional-bad-absorption}, proves
\eqref{eq:gmbm-tail-asymmetry}.
\end{proof}

\begin{lemma}[Direct residual-operator bound]
\label{lem:gmbm-population-residual}
Under the hypotheses of \cref{thm:gmbm-exact},
\begin{equation}
        \Lambda_\mu^{\rm lab}
        \le
        Cp\left(\mu\sqrt L+\sqrt{\frac Ld}\right).
\label{eq:gmbm-direct-residual-operator}
\end{equation}
\end{lemma}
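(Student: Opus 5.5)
Throughout, $Y=(\xi,G)$ and the sign-adjusted vector $H=\xi G$ are used as in \cref{lem:gmbm-tail-calibration}. Then $(\xi,H)$ are independent and the edge statistic equals $\zeta(m_\mu+T_\mu)$ with $\zeta=\xi\xi'$. The plan is to decompose $L^2(\Pp_Y)$ according to the label coordinate, writing every function as $f_0(H)+\xi f_1(H)$. The kernel $h_\mu$ is a function of $\zeta$ and $(H,H')$, namely $\one\{\zeta(m_\mu+T_\mu)\ge\tau\}-p-\Delta_\mu\zeta$. Split it into its $\zeta$-even part
\[
k_+(H,H')=\tfrac12\bigl(\one\{T_\mu\ge\tau-m_\mu\}+\one\{T_\mu\le-\tau-m_\mu\}\bigr)-p
\]
and its $\zeta$-odd part $\zeta k_-(H,H')$ with
\[
k_-=\tfrac12\bigl(\one\{T_\mu\ge\tau-m_\mu\}-\one\{T_\mu\le-\tau-m_\mu\}\bigr)-\Delta_\mu .
\]
Since $\xi$ is independent of $H$, $T_{h_\mu}$ is block diagonal: it acts by $T_{k_+}$ on functions of $H$ and by $T_{k_-}$ on $\xi\cdot(\text{functions of }H)$. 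The Hoeffding recentering preserves this structure, because $a_\mu(Y)=\E_{Y'}h_\mu$ depends only on $H$ (the odd part integrates to zero against $\xi'$). Hence $\Lambda^{\rm lab}_\mu\le\max(\|Qk_+Q\|_{\op},\|k_-\|_{\op})$, where $Q$ removes means; moreover $k_-$ is already mean-zero overall by the choice of $\Delta_\mu$.

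For each block I compare with the $\mu=0$ Gaussian kernel of \cref{lem:raw-gaussian-coeff}. Write $T_\mu=T_0+\mu(H_1+H_1')$ with $T_0=\langle H,H'\rangle/\sqrt d$.

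\textbf{Step 1 (interpolation).} Replace $\one\{T_\mu\ge s\}$ by $\one\{T_0\ge s\}$. The difference is supported where $T_0$ lies within $\mu|H_1+H_1'|$ of $s$. Using the density window \eqref{eq:gmbm-density-window} together with the conditional Gaussian representation in \cref{lem:gmbm-tail-calibration}, its Hilbert--Schmidt norm in $L^2(\gamma_d\otimes\gamma_d)$ is of order $(p\sqrt L\,\mu)^{1/2}$. That is far too big, so the HS bound cannot be used here.

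Instead I would linearize. Expand $\one\{T_0+\mu(H_1+H_1')\ge s\}$ to first order in $\mu$. The first-order term is $\mu(H_1+H_1')\,\delta(T_0-s)$. Its conditional mean given $H$ is $\mu H_1 f_{T_0|H}(s)+\mu\,\E[H_1'\delta(\cdot)\mid H]$, which is a rank-$\le 2$ structure of the form $H_1\otimes\psi+\psi\otimes H_1$. Each factor has $L^2$ norm about $p\sqrt L$, giving an operator bound $Cp\mu\sqrt L$. The second-order remainder I would bound in HS norm by $C p L\mu^2\cdot\mathrm{poly}$. Under \eqref{eq:gmbm-exact-derived-conditions} this is $\le Cp\mu\sqrt L$. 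This is the main obstacle: making the linearization rigorous with a discontinuous kernel. The way I would try to do it is to condition on $H$ and work with the smooth conditional tail $\overline\Phi((s-a_H)/s_H)$ from \eqref{eq:gmbm-conditional-gaussian-formulas}, rather than expanding the indicator itself.

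\textbf{Step 2 (the $\mu=0$ blocks).} The $\mu=0$ even kernel is the average of two threshold kernels at $\pm(\tau\mp m_\mu)$. Both thresholds are $\asymp\sqrt L$ and shifted by $m_\mu\le c/\sqrt L$, so the Gaussian tail comparison applies. Apply \eqref{eq:raw-operator-bounds} to each: after double centering each contributes at most $Cp(\sqrt{L/d}+L^2/d)\le Cp\sqrt{L/d}$, using $d\ge CL^3$ from \eqref{eq:gmbm-exact-derived-conditions}. The odd kernel at $\mu=0$ is a difference of such kernels plus the constant $\Delta_\mu$, and is bounded the same way. The constant parts are handled by noting that the $\mu=0$ odd block has overall mean $\frac12(\Pp\{T_0\ge\tau-m_\mu\}-\Pp\{T_0\le-\tau-m_\mu\})$. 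This mean is $\Delta_\mu+O(p\mu\sqrt L)$ by \eqref{eq:gmbm-tail-asymmetry} and Step 1. The first-projection (radial) parts of the odd block are not removed by centering. However, they are of size $\|a\|\lesssim pL/\sqrt d\le Cp\sqrt{L/d}$, by the argument of \cref{lem:raw-first-projection} at the population level.

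\textbf{Step 3 (collect).} Combining Steps 1 and 2 gives both blocks bounded by $Cp(\mu\sqrt L+\sqrt{L/d})$, and hence \eqref{eq:gmbm-direct-residual-operator}.
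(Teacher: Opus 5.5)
Your label-parity reduction is correct: with $f(Y)=f_0(H)+\xi f_1(H)$ the operator is block diagonal, $a_\mu$ depends only on $H$, and the even block is $QT_{k_+}Q$. The odd block $T_{k_-}$ keeps its one-variable parts; these are the kernels $\xi\xi' b(H)$, which the paper's projection $\mathsf P$ does not remove either.

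The gap is Step~1. It carries all of the $\mu$-dependence, and you leave it open. You need an operator-norm bound on $T_{k_\pm^{(\mu)}}-T_{k_\pm^{(0)}}$, and neither mechanism you describe gives one.
\begin{itemize}
\item Conditioning on $H$ and using $\overline\Phi((s-a_H)/s_H)$ only computes $\E_{H'}$ of the kernel, i.e.\ its first Hoeffding projection and its mean. It says nothing about the action on a general test function $f(H')$.
\item The first-order kernel $\mu(H_1+H_1')\delta(T_0-s)$ is the operator $\mu(M_{H_1}T_\delta+T_\delta M_{H_1})$, where $T_\delta$ has kernel $\delta(T_0-s)$ and $M_{H_1}$ is multiplication by $H_1$. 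It reduces to your rank-two term only after $T_\delta$ is replaced by its component on constants. The remaining components of $T_\delta$ are not controlled.
\item The Taylor remainder of an indicator is not an $L^2$ kernel at all, since the first-order term is already singular. So a Hilbert--Schmidt bound of the form $CpL\mu^2\cdot\mathrm{poly}$ has nothing behind it.
\end{itemize}
A smaller point: in Step~2, $pL/\sqrt d\le Cp\sqrt{L/d}$ is false by a factor $\sqrt L$. The term is harmless only because $\mu^2 d\ge C_D\log n\ge cL$ gives $pL/\sqrt d\le Cp\mu\sqrt L$.

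The missing idea is to integrate the discontinuity out instead of expanding it. The paper writes $g=(x,r\omega)$ with $x=g_1$, $r=\|g_{2:d}\|$, and $\omega\in\mathbb S^{d-2}$. All label and $\mu$-dependence then sits in $(\xi,x,r)$, and the edge event is the cap $\langle\omega,\omega'\rangle\ge z/\sqrt d$, where
$z=\bigl(\tau-\mu(\xi x'+\xi'x)-m_\mu\xi\xi'-xx'/\sqrt d\bigr)\big/\bigl((r/\sqrt d)(r'/\sqrt d)\bigr)$.
Decompose test functions into spherical harmonics in $\omega$. Funk--Hecke on $\mathbb S^{d-2}$ then turns the degree-$\ell$ block into the radial--label kernel $F_\ell(z(Y,Y'))$, which is smooth in $z$.
\begin{itemize}
\item The blocks with $\ell\ge1$ are bounded directly by \cref{prop:cap-operator} in dimension $d-1$. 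This gives $Cp\sqrt{L/d}$ with no comparison to $\mu=0$.
\item Only the degree-zero block is expanded, by Taylor's formula in $z$ around $z_\zeta=\tau-m_\mu\zeta$. There $F(z_\zeta)\in\operatorname{span}\{1,\zeta\}$ is annihilated by the projection, $|F^{(j)}|\le CpL^{j/2}$ for $j\le 3$, and $u=z-z_\zeta$ has moments $O(\mu+\sqrt{L/d})$ on the typical set.
\end{itemize}
Your heuristic rank-two term of size $p\mu\sqrt L$ is exactly the $\mu(\xi x'+\xi'x)$ part of $F'(z_\zeta)u$. The quadratic and cubic terms become genuine $L^2$ bounds. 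With this decomposition, the interpolation in Step~1 and the $\mu=0$ comparison in Step~2 are unnecessary.
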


\begin{proof}
We bound the population operator directly, without interpolating in $\mu$.
Let $\mathsf P$ denote the orthogonal projection in
$L^2(\Pp_Y\otimes\Pp_Y)$ that removes the mean, the two one-variable
Hoeffding components, and the label kernel
$(Y,Y')\mapsto\xi\xi'$.  By definition, the integral kernel defining
$T_{r_\mu^{\rm lab}}$ is $\mathsf P K_\mu$, where $K_\mu$ is the edge
indicator.

\paragraph{Angular decomposition.}
Write
\[
        g=(x,r\omega),\qquad
        x=g_1,\quad r=\|g_{2:d}\|,\quad
        \omega\in\mathbb S^{d-2},
\]
and define $x',r',\omega'$ similarly.  Let $\eta$ be the joint law of
$(\xi,x,r)$.  For
\[
 A=\xi x'+\xi'x,\qquad
 \zeta=\xi\xi',\qquad
 W=\frac{xx'}{\sqrt d},
\]
the edge condition is
\[
 \frac{rr'}{\sqrt d}\langle\omega,\omega'\rangle
 +W+\mu A+m_\mu\zeta\ge\tau.
\]
For $\ell\ge0$, put
\[
 F_\ell(z)
 =
 \int_{z/\sqrt d}^1
 P_\ell^{(d-1)}(t)w_{d-1}(t)\,dt,
\]
with its natural constant extension outside $[-1,1]$: it is zero when
$z/\sqrt d\ge1$, while for $z/\sqrt d\le-1$ it equals the integral over
$[-1,1]$ (namely, one for $\ell=0$ and zero for $\ell\ge1$).  The
Funk--Hecke formula
\cite[Theorem~1.2.9]{DaiXu} shows that the degree-$\ell$ angular block has
radial-label kernel $F_\ell(z(Y,Y'))$, where
\begin{equation}
 z(Y,Y')
 =
 \frac{\tau-\mu A-m_\mu\zeta-W}
 {(r/\sqrt d)(r'/\sqrt d)}.
\label{eq:gmbm-direct-normalized-boundary}
\end{equation}
The mean, label, and one-variable terms all have angular degree zero.
Consequently, $\mathsf P$ changes only the block $\ell=0$.

Let $\mathcal G$ be the one-point event
\[
 |x|\le c_\star\left(\frac dL\right)^{1/4},
 \qquad
 \left|\frac r{\sqrt d}-1\right|\le\frac{c_\star}{L},
\]
where $c_\star>0$ is a sufficiently small absolute constant.
Gaussian and chi-square concentration give
\begin{equation}
 \varepsilon:=\Pp_\eta(\mathcal G^c)
 \le
 C\exp\left(-c\sqrt{\frac dL}\right)
 +C\exp\left(-\frac{cd}{L^2}\right).
\label{eq:gmbm-direct-typical-probability}
\end{equation}
The assumptions imply
\[
 d\ge C_DL^2\log n,\qquad
 m_\mu\sqrt L\le c_D,\qquad
 \mu L\le C.
\]
Hence, on $\mathcal G^2$,
\[
        z(Y,Y')=\tau+O(L^{-1/2}).
\]
Put $Q=\sqrt{d/L}$.  By
\eqref{eq:gmbm-exact-derived-conditions},
$Q\ge C_D\log n\ge C_DL$ and $d/L^2\ge C_D\log n\ge C_DL$.
Consequently, exponential decay dominates both $e^{-L}$ and the factor
$Q^{-1}$, and \eqref{eq:gmbm-direct-typical-probability} gives
\begin{equation}
        \sqrt\varepsilon
        \le
        Cp\sqrt{\frac Ld}
        \le
        Cp\left(\mu\sqrt L+\sqrt{\frac Ld}\right).
\label{eq:gmbm-direct-bad-absorption}
\end{equation}
Since every block kernel is bounded by one in absolute value, its restriction
to the complement of $\mathcal G^2$ has Hilbert--Schmidt norm at most
$C\sqrt\varepsilon$.  It remains to work on $\mathcal G^2$.

\paragraph{Positive angular degrees.}
For $z=\tau+O(L^{-1/2})$, the cap with threshold $z/\sqrt d$ has probability
comparable to $p$.  Indeed, the spherical Mills estimate
\cref{lem:spherical-mills}, applied in dimension $d-1$, gives
\[
 \Pp\left\{\langle\omega,\omega'\rangle\ge\frac z{\sqrt d}\right\}
 \asymp
 \frac1z\exp\left(-\frac{z^2}{2}+O\left(\frac{z^4}{d}\right)\right)
 \asymp \overline\Phi(\tau)\asymp p,
\]
where we used $z-\tau=O(L^{-1/2})$ and $L^2/d$ sufficiently small.
Applying the sparse cap estimate
\cref{prop:cap-operator} in dimension $d-1$ therefore gives
\[
        \sup_{\ell\ge1}|F_\ell(z)|
        \le Cp\sqrt{\frac Ld}.
\]
Schur's test on the probability space $(\eta,\Pp_\eta)$ therefore yields
\begin{equation}
        \sup_{\ell\ge1}
        \|T_\mu^{(\ell)}\|_{\op}
        \le Cp\sqrt{\frac Ld}+C\sqrt\varepsilon,
\label{eq:gmbm-direct-positive-blocks}
\end{equation}
where $T_\mu^{(\ell)}$ is the degree-$\ell$ block.

\paragraph{Angular degree zero.}
Write $F=F_0$ and let $\mathsf P_0$ be the restriction of $\mathsf P$ to the
degree-zero block.  The operator norm of this block is at most the $L^2$ norm
of its kernel, so it suffices to estimate $\|\mathsf P_0F(z)\|_2$.  If
\[
 \rho_d(z)=\frac1{\sqrt d}w_{d-1}\left(\frac z{\sqrt d}\right),
\]
then $F'(z)=-\rho_d(z)$.  The elementary logarithmic derivatives of
$\rho_d$ make the required estimates explicit:
\[
 \rho_d(z)\asymp\varphi(z),\qquad
 \frac{\rho_d'(z)}{\rho_d(z)}
 =-\frac{(d-4)z}{d-z^2},
 \qquad |z-\tau|\le \frac C{\sqrt L}.
\]
Indeed, $|z|\asymp\sqrt L$ and $L^2/d$ is sufficiently small under
\eqref{eq:gmbm-exact-derived-conditions}; differentiating the displayed
logarithmic derivative once more gives
$|\rho_d''(z)|\le C(1+z^2)\rho_d(z)$.  Together with
\cref{eq:gmbm-threshold-gaussian-tail}, these identities give
\begin{equation}
        |F'(z)|\le Cp\sqrt L,\qquad
        |F''(z)|\le CpL,\qquad
        |F'''(z)|\le CpL^{3/2}.
\label{eq:gmbm-direct-F-derivatives}
\end{equation}

Put
\[
        \delta=\frac r{\sqrt d}-1,\qquad
        \delta'=\frac {r'}{\sqrt d}-1,\qquad
        z_\zeta=\tau-m_\mu\zeta,\qquad
        u=z(Y,Y')-z_\zeta.
\]
Because $\zeta\in\{\pm1\}$, the kernel $F(z_\zeta)$ belongs to
$\operatorname{span}\{1,\zeta\}$ and is annihilated by $\mathsf P_0$.
Let $I=\one_{\mathcal G}(Y)\one_{\mathcal G}(Y')$.  We first use the
preceding cancellation globally and only then restrict the difference
$F(z)-F(z_\zeta)$.  Since this difference is bounded by two, its part on
$\{I=0\}$ has $L^2$ norm at most $C\sqrt\varepsilon$.  Taylor's formula at
$z_\zeta$ on $\{I=1\}$ therefore gives the following identity.  Both $z$
and $z_\zeta$ differ from $\tau$ by $O(L^{-1/2})$, so the entire Taylor
segment lies in the window of
\eqref{eq:gmbm-direct-F-derivatives}:
\begin{equation}
 \mathsf P_0F(z)
 =
 \mathsf P_0\left\{I\left(
 F'(z_\zeta)u+\frac12F''(z_\zeta)u^2+\mathcal R
 \right)\right\}+\mathcal R_{\rm bad}.
\label{eq:gmbm-direct-Taylor}
\end{equation}
Here
\[
        \|\mathcal R_{\rm bad}\|_2\le C\sqrt\varepsilon,
        \qquad
        |\mathcal R|\le CpL^{3/2}|u|^3
        \quad\text{on }\{I=1\}.
\]

We now bound the three terms in
\eqref{eq:gmbm-direct-Taylor}.  On $\mathcal G^2$, expansion of the
denominator in \eqref{eq:gmbm-direct-normalized-boundary} gives
\begin{equation}
 u=-\mu A-W-(\tau-m_\mu\zeta)(\delta+\delta')+E,
\label{eq:gmbm-direct-u-expansion}
\end{equation}
where
\[
 |E|
 \le C\left[
   (\mu|A|+|W|)(|\delta|+|\delta'|)
   +\sqrt L(\delta^2+(\delta')^2)
 \right].
\]
For every fixed $2\le k\le6$,
\[
 \|\delta\|_k+\|\delta'\|_k\le\frac{C_k}{\sqrt d},
 \qquad
 \|A\|_k\le C_k,\qquad
 \|W\|_k\le\frac{C_k}{\sqrt d}.
\]
These estimates remain valid after multiplication by $I$.  In particular,
\begin{equation}
        \|Iu\|_k
        \le C_k\left(\mu+\sqrt{\frac Ld}\right),
        \qquad 2\le k\le6.
\label{eq:gmbm-direct-u-moments}
\end{equation}
Here $m_\mu/\sqrt d=\mu^2\le C\mu$ under the derived conditions.

To use the cancellation in the linear term, write
\[
        F'(z_\zeta)=c_0+c_1\zeta.
\]
The derivative bounds in
\eqref{eq:gmbm-direct-F-derivatives} imply
\[
        |c_0|\le Cp\sqrt L,\qquad
        |c_1|\le Cp\,m_\mu L.
\]
Set
\[
 \alpha_0=c_0\tau-c_1m_\mu,
 \qquad
 \alpha_1=c_1\tau-c_0m_\mu,
\]
so that $F'(z_\zeta)z_\zeta=\alpha_0+\alpha_1\zeta$.
Without the factor $I$, the term
$\alpha_0(\delta+\delta')$ is a sum of two one-variable functions and is
annihilated by $\mathsf P_0$; the remaining radial term is
$\alpha_1\zeta(\delta+\delta')$.
This cancellation remains valid after truncation up to a negligible error.
Indeed,
\[
 I\delta
 =\delta\one_{\mathcal G}(Y)
 -\delta\one_{\mathcal G}(Y)\one_{\mathcal G^c}(Y'),
\]
and the first term is a function of $Y$ alone; the analogous identity holds
for $I\delta'$.  Hence the $L^2$ cost of inserting $I$ in each cancelled
radial term is at most $C\sqrt{\varepsilon/d}$.  Moreover,
\[
        |\alpha_0|\le CpL,
        \qquad
        |\alpha_1|\le Cp\,m_\mu L^{3/2}.
\]
Substituting
\eqref{eq:gmbm-direct-u-expansion}, applying Cauchy--Schwarz to all remaining
terms, and then using $m_\mu\sqrt L\le c_D$ and $\mu L\le C$, gives
\begin{align}
 \left\|
 \mathsf P_0\{I F'(z_\zeta)u\}
 \right\|_2
 &\le
 Cp\Bigg[
 \sqrt L\left(\mu+\frac1{\sqrt d}\right)
 +m_\mu L\left(\mu+\sqrt{\frac Ld}\right)
 \notag\\[-2pt]
 &\hspace{2.3cm}
 +\sqrt L\left(\frac\mu{\sqrt d}+\frac{\sqrt L}{d}\right)
 \Bigg] +CpL\sqrt{\frac\varepsilon d} \notag\\
 &\le
 Cp\left(\mu\sqrt L+\sqrt{\frac Ld}\right).
\label{eq:gmbm-direct-linear-bound}
\end{align}
For example, the only radial term not removed by $\mathsf P_0$ has size at
most
\[
        \frac{Cp\,m_\mu L^{3/2}}{\sqrt d}
        =Cp\,\mu^2L^{3/2}
        \le Cp\,\mu\sqrt L.
\]

For the quadratic term and the remainder, projection is an $L^2$
contraction, so \eqref{eq:gmbm-direct-F-derivatives} and
\eqref{eq:gmbm-direct-u-moments} yield
\begin{align}
 \left\|
 \mathsf P_0\{I F''(z_\zeta)u^2\}
 \right\|_2
 &\le
 CpL\left(\mu+\sqrt{\frac Ld}\right)^2,\label{eq:gmbm-direct-quadratic-bound}\\
 \|\mathsf P_0(I\mathcal R)\|_2
 &\le
 CpL^{3/2}\left(\mu+\sqrt{\frac Ld}\right)^3.
\label{eq:gmbm-direct-remainder-bound}
\end{align}
The assumptions $\mu\sqrt L\le c_D$ and $d\ge C_DL^3$ imply that both
right-hand sides are bounded by
\[
        Cp\left(\mu\sqrt L+\sqrt{\frac Ld}\right).
\]
Combining
\cref{eq:gmbm-direct-Taylor,eq:gmbm-direct-linear-bound,eq:gmbm-direct-quadratic-bound,eq:gmbm-direct-remainder-bound}
with \eqref{eq:gmbm-direct-bad-absorption} therefore bounds the degree-zero
block by the same quantity.

Finally, take the supremum over the orthogonal angular blocks and use
\cref{eq:gmbm-direct-bad-absorption,eq:gmbm-direct-positive-blocks}.  This
proves \eqref{eq:gmbm-direct-residual-operator}.
\end{proof}

\begin{proof}[Proof of \cref{lem:gmbm-gap-explicit}]
\smallskip\noindent\emph{Size of the label signal.}
By \cref{eq:gmbm-conditional-tail-laws,eq:gmbm-tail-asymmetry},
\[
\begin{aligned}
        p_+(\mu)-p_-(\mu)
        &=
        \int_{\tau-m_\mu}^{\tau+m_\mu}f_\mu(t)\,dt
        +O\!\left(p\mu\sqrt L\right).
\end{aligned}
\]
Moreover,
\[
 \frac{\mu}{m_\mu}
 =\frac1{\mu\sqrt d}
 \le \frac{C}{\sqrt{\log n}}
\]
by $\mu^2d\ge C_D\log n$.  Thus the error is smaller than the integral
after increasing $C_D$, and \eqref{eq:gmbm-density-window} gives
\[
        p_+(\mu)-p_-(\mu)
        \asymp
        m_\mu\,p\sqrt L
        =
        p\mu^2\sqrt{dL}.
\]
Since $\Delta_\mu=(p_+(\mu)-p_-(\mu))/2$,
this proves the first estimate in
\eqref{eq:gmbm-population-signal-operator}.

\smallskip\noindent\emph{The residual population operator.}
The second estimate in
\eqref{eq:gmbm-population-signal-operator} is precisely
\cref{lem:gmbm-population-residual}.

The remaining pointwise estimate is \cref{lem:gmbm-pointwise-action} below.
\end{proof}

\begin{lemma}[Pointwise label action]\label{lem:gmbm-pointwise-action}
Under the hypotheses of \cref{thm:gmbm-exact}, if $y=(\xi,g)$ satisfies
\[
        \left|\frac{\|g\|}{\sqrt d}-1\right|
        \le C_D\sqrt{\frac{\log n}{d}},
        \qquad |g_1|\le C_D\sqrt{\log n},
\]
then
\[
        \left|T_{r_\mu^{\rm lab}}\xi(y)\right|
        \le C_Dp\mu\sqrt{L\log n}.
\]
\end{lemma}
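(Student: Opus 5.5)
We need to bound $T_{r}\xi(y)=\E_{Y'}[r_\mu^{\rm lab}(y,Y')\xi']$. First we reduce this expression. Since $r=h_\mu-a_\mu(y)-a_\mu(Y')$ and $\E\xi'=0$, the term $a_\mu(y)$ drops out. Write $Y'=(\xi',G')$ and use the symmetry $(\xi',G')\mapsto(-\xi',-G')$. This shows that $a_\mu(Y')$ is even under the flip, since the edge statistic is unchanged when both endpoints flip. Because $\xi'$ is odd, $\E[a_\mu(Y')\xi']=0$. Also $\E[\Delta_\mu\xi\xi'\cdot\xi']=\Delta_\mu\xi$. Hence
\[
T_{r}\xi(y)=\E_{Y'}\bigl[\one_{\{\sqrt d\langle Z(y),Z(Y')\rangle\ge\tau\}}\xi'\bigr]-\Delta_\mu\xi
=\tfrac12\bigl(P_{\rm same}(y)-P_{\rm opp}(y)\bigr)-\Delta_\mu\xi,
\]
where $P_{\rm same}(y)$ and $P_{\rm opp}(y)$ are the edge probabilities to a vertex with label $\xi'=\xi$ and $\xi'=-\xi$, respectively, conditional on $y$.

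Next we compute these two probabilities by conditioning on $y$ and using the sign-adjusted representation from \cref{lem:gmbm-tail-calibration}. With $h=\xi g$ and $H'$ standard Gaussian, the statistic is $\zeta(m_\mu+\mu h_1+\mu H'_1+\langle h,H'\rangle/\sqrt d)$. Conditionally on $y$, the part $\mu H_1'+\langle h,H'\rangle/\sqrt d$ is $N(0,s_y^2)$ with $s_y=\|h/\sqrt d+\mu e_1\|$. Therefore
\[
P_{\rm same}=\overline\Phi\Bigl(\tfrac{\tau-m_\mu-\mu h_1}{s_y}\Bigr),\qquad
P_{\rm opp}=\overline\Phi\Bigl(\tfrac{\tau+m_\mu+\mu h_1}{s_y}\Bigr).
\]
Then $\xi\cdot T_r\xi(y)=\frac12[\overline\Phi((\tau-m_\mu-\mu h_1)/s_y)-\overline\Phi((\tau+m_\mu+\mu h_1)/s_y)]-\Delta_\mu$. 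We compare this with the averaged version. By \cref{lem:gmbm-tail-calibration}, $2\Delta_\mu=\E_H[\overline\Phi((\tau-m_\mu-a_H)/s_H)-\overline\Phi((\tau+m_\mu-a_H)/s_H)]$, which up to the asymmetry error \eqref{eq:gmbm-tail-asymmetry} is the same quantity averaged over $H$.

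The plan is then to bound the fluctuation of the map
\[
\psi(a,s)=\overline\Phi((\tau-m_\mu-a)/s)-\overline\Phi((\tau+m_\mu+a)/s)
\]
between the typical point $(a,s)=(\mu h_1,s_y)$ and its $H$-average. On the hypothesis set, $|\mu h_1|\le C_D\mu\sqrt{\log n}$ and $|s_y-1|\le C_D\sqrt{\log n/d}+O(\mu^2+\mu\sqrt{\log n/d})$. The derivative in $a$ of the two $\overline\Phi$ terms is of size $\varphi(\tau)\asymp p\sqrt L$ by \eqref{eq:gmbm-density-window} and the local comparison \eqref{eq:gmbm-local-conditional-comparison}. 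The $a$-fluctuation therefore contributes at most $C p\sqrt L\,\mu\sqrt{\log n}$, which is exactly the target. The average over $H$ of the $a$-dependence contributes $O(p\mu\sqrt L)$, by \eqref{eq:gmbm-tail-asymmetry}.

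The main obstacle is the $s$-dependence. Differentiating $\psi$ in $s$ gives $\tau s^{-2}[\varphi(\cdot)_{\rm same}-\varphi(\cdot)_{\rm opp}]$, and this difference vanishes to first order: the two arguments are $\tau\mp m_\mu\mp a$, so it is of size $\varphi(\tau)\cdot\tau(m_\mu+|a|)\lesssim pL(m_\mu+\mu\sqrt{\log n})$. Multiplied by $\tau|s-1|\lesssim\sqrt L\sqrt{\log n/d}$, this gives $pL^{3/2}(m_\mu+\mu\sqrt{\log n})\sqrt{\log n/d}$. Using $L^2\log n\le c\,d$ from \eqref{eq:gmbm-exact-derived-conditions} and $m_\mu\sqrt L\le c_D$, this is at most $Cp\mu\sqrt{L\log n}$. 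The delicate point is $m_\mu\sqrt{\log n/d}=\mu^2\sqrt{\log n}\le\mu\sqrt{\log n}$, which needs $\mu\le1$ and holds here.

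The exceptional set $\mathcal E^c$ in the $H$-average is absorbed by \eqref{eq:gmbm-conditional-bad-absorption}. Combining the three contributions proves the bound $|T_{r_\mu^{\rm lab}}\xi(y)|\le C_Dp\mu\sqrt{L\log n}$.
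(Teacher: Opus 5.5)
Your proposal is correct and is essentially the paper's argument: the flip symmetry reduces $T_{r_\mu^{\rm lab}}\xi(y)$ to $\Psi(s_y,b_y)-\Delta_\mu\xi$ with $\Delta_\mu=\E[\xi\Psi(s_Y,b_Y)]$ (your $\psi$ is $2\Psi$ with $b=m_\mu+a$). The fixed point and the $H$-average are then both controlled by $|\partial_b\Psi|\lesssim p\sqrt L$, the first-order cancellation $|\partial_s\Psi|\lesssim pL^{3/2}|b|$, and absorption of $\mathcal E^c$, just as in the paper's comparison of both with $\xi\Psi(1,m_\mu)$.

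A few small slips need fixing:
your first display lacks a factor $\xi$ in front of $\frac12(P_{\rm same}-P_{\rm opp})$;
the identity $2\Delta_\mu=\E_H[\overline\Phi((\tau-m_\mu-a_H)/s_H)-\overline\Phi((\tau+m_\mu+a_H)/s_H)]$ holds exactly (note $+a_H$), so no asymmetry correction is needed;
and the $m_\mu$-part of the $s$-term needs $\mu L\le C$, not merely $\mu\le1$, which would leave an extra factor $L$. The bound $\mu L\le C$ does hold here, since $\mu^2L\log n\le c_D$ and $L\le C\log n$.
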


\begin{proof}
For $y=(\xi,g)$, set
\[
        s_y=\left\|\frac g{\sqrt d}+\mu\xi e_1\right\|_2,
        \qquad
        b_y=\mu g_1+m_\mu\xi,
\]
and
\[
        \Psi(s,b)=\frac12\left[
        \overline\Phi\left(\frac{\tau-b}{s}\right)
        -\overline\Phi\left(\frac{\tau+b}{s}\right)
        \right].
\]
Conditioning on the second label and Gaussian vector gives
\[
        \E_{Y'}\!\left[
        \one_{\{\sqrt d\langle Z(y),Z(Y')\rangle\ge\tau\}}\xi'
        \right]=\Psi(s_y,b_y).
\]
The transformation $(\xi,g)\mapsto(-\xi,-g)$ preserves the law and changes
the sign of $\xi$ while leaving $a_\mu(\xi,g)$ unchanged.  Hence
$\E[a_\mu(Y)\xi]=0$.  Moreover,
$\Delta_\mu=\E[\xi\Psi(s_Y,b_Y)]$, and therefore
\begin{equation}
        T_{r_\mu^{\rm lab}}\xi(y)
        =\Psi(s_y,b_y)-\Delta_\mu\xi.
\label{eq:gmbm-label-action-identity}
\end{equation}

We compare both terms with the deterministic quantity
$\xi\Psi(1,m_\mu)$.  Direct differentiation and
\eqref{eq:gmbm-threshold-gaussian-tail} show that, whenever
$|s-1|\le c/L$ and $|b|\le c/\sqrt L$,
\begin{equation}
        |\partial_b\Psi(s,b)|\le Cp\sqrt L,
        \qquad
        |\partial_s\Psi(s,b)|\le CpL^{3/2}|b|.
\label{eq:gmbm-Psi-derivatives}
\end{equation}
For the second inequality, observe that $\partial_s\Psi(s,0)=0$ and apply
the mean value theorem in $b$; the mixed derivative is a linear combination
of $\varphi^{(j)}((\tau\pm b)/s)$, $0\le j\le2$, and is bounded by
$CpL^{3/2}$ in the stated window.

The assumptions on $y$ imply
\begin{equation}
 |s_y-1|
 \le C_D\left(
   \sqrt{\frac{\log n}{d}}
   +\frac{\mu\sqrt{\log n}}{\sqrt d}
   +\mu^2
 \right),
 \qquad
 |b_y-m_\mu\xi|\le C_D\mu\sqrt{\log n}.
\label{eq:gmbm-fixed-sb-deviation}
\end{equation}
By \eqref{eq:gmbm-exact-derived-conditions}, the right-hand sides place
$(s_y,b_y)$ in the window of \eqref{eq:gmbm-Psi-derivatives}.  Comparing
first in $b$ and then in $s$ gives
\begin{equation}
 \left|\Psi(s_y,b_y)-\xi\Psi(1,m_\mu)\right|
 \le
 C_Dp\mu\sqrt{L\log n}
 +Cp\,m_\mu L^{3/2}|s_y-1|.
\label{eq:gmbm-fixed-Psi-comparison}
\end{equation}
The last term is also at most
$C_Dp\mu\sqrt{L\log n}$.  Indeed, substituting
\eqref{eq:gmbm-fixed-sb-deviation} and dividing by
$\mu\sqrt{L\log n}$ leaves the three factors
\[
        \mu L,\qquad \mu^2L,\qquad
        \frac{m_\mu\mu L}{\sqrt{\log n}},
\]
which are bounded under
\eqref{eq:gmbm-exact-derived-conditions} and
$m_\mu\sqrt L\le c_D$.

It remains to compare $\Delta_\mu$ with $\Psi(1,m_\mu)$.  For a random
$Y=(\xi,G)$, put $H=\xi G$.  Then
$s_Y=\|H/\sqrt d+\mu e_1\|_2$ and
$b_Y=\xi(m_\mu+\mu H_1)$.  Thus, on the event $\mathcal E$ from
\eqref{eq:gmbm-conditional-bad-event}, the same two comparisons give
\[
 \left|\Psi(s_Y,b_Y)-\xi\Psi(1,m_\mu)\right|
 \le
 Cp\mu\sqrt L\,|H_1|
 +Cp\,m_\mu L^{3/2}|s_Y-1|.
\]
Furthermore,
\[
        \E|s_Y-1|
        \le
        C\left(\frac1{\sqrt d}+\frac\mu{\sqrt d}+\mu^2\right).
\]
On the complementary event we use only $|\Psi|\le1$ and
\eqref{eq:gmbm-conditional-bad-absorption}.  Taking expectations therefore
gives
\begin{equation}
        \left|\Delta_\mu-\Psi(1,m_\mu)\right|
        \le Cp\mu\sqrt L.
\label{eq:gmbm-population-Psi-comparison}
\end{equation}
Combining
\cref{eq:gmbm-label-action-identity,eq:gmbm-fixed-Psi-comparison,eq:gmbm-population-Psi-comparison}
proves \eqref{eq:gmbm-pointwise-label-action}.
\end{proof}

\section{Proof of the Hilbert covariance estimate}
\label{sec:hilbert-covariance-proof}

\begin{proof}[Proof of \cref{lem:hilbert-cov}]
The proof combines the matrix Chernoff method of Tropp~\citeyearpar{Tropp} with the
dimension-free trace argument of \citet{HsuKakadeZhang}.
We first project onto an arbitrary finite-dimensional subspace.  Let
$\Pi$ be an orthogonal projection of finite rank and put
\[
        Y_i=\Pi(\phi_i\otimes\phi_i)\Pi.
\]
Then $0\preceq Y_i\preceq p\Pi$ and
\[
        \E Y_i=\Pi\cC\Pi=:\cC_\Pi.
\]
For $\theta>0$ and any $0\le y\le p$, convexity of the exponential on
$[0,p]$ gives
\[
        e^{\theta y}
        \le
        1+\frac{e^{\theta p}-1}{p}y.
\]
By the transfer rule for matrix functions \cite[Section~2.3]{Tropp},
\[
        e^{\theta Y_i}
        \preceq
        \Pi+\frac{e^{\theta p}-1}{p}Y_i
\]
on $\range(\Pi)$, and hence
\[
        \E e^{\theta Y_i}
        \preceq
        \Pi+\frac{e^{\theta p}-1}{p}\cC_\Pi
        \preceq
        \exp\left(\frac{e^{\theta p}-1}{p}\cC_\Pi\right).
\]
Let $S_\Pi=\sum_iY_i$.  The matrix Laplace-transform method gives
\[
        \E\tr e^{\theta S_\Pi}
        \le
        \tr\exp\left(
        \sum_{i=1}^n\log \E e^{\theta Y_i}
        \right)
        \le
        \tr\exp\left(
        n\frac{e^{\theta p}-1}{p}\cC_\Pi
        \right).
\]
The last step uses operator monotonicity of the logarithm and trace
monotonicity of the matrix exponential \cite[Sections~2.4--2.5]{Tropp}.
Because $S_\Pi\ge0$, the event
$\lambda_{\max}(S_\Pi)\ge t$ implies
$\tr(e^{\theta S_\Pi}-\Pi)\ge e^{\theta t}-1$.  Hence
\[
        \Pp\{\lambda_{\max}(S_\Pi)\ge t\}
        \le
        \inf_{\theta>0}
        \frac{
        \tr\!\left[
        \exp\!\left(
        n\frac{e^{\theta p}-1}{p}\cC_\Pi
        \right)-\Pi
        \right]}
        {e^{\theta t}-1}.
\]
The remaining trace is controlled by the effective mass
$\tr(\cC_\Pi)\le p$.  Set
$a_\theta=n(e^{\theta p}-1)/p$.  Since
$\cC_\Pi\preceq\kappa\Pi$,
\[
\begin{aligned}
        \tr(e^{a_\theta\cC_\Pi}-\Pi)
        &=
        \sum_s(e^{a_\theta\lambda_s(\cC_\Pi)}-1)  \\
        &\le
        a_\theta e^{a_\theta\kappa}\tr(\cC_\Pi)
        \le
        a_\theta p\,e^{a_\theta\kappa}.
\end{aligned}
\]

Choose $\theta=1/p$.  Then $a_\theta=(e-1)n/p$, and for all $t\ge p$,
\begin{equation}
        \Pp\{\lambda_{\max}(S_\Pi)\ge t\}
        \le
        \exp\left(
        C+\log n+C\frac{n\kappa}{p}-\frac{t}{Cp}
        \right).
\label{eq:hilbert-finite-rank-tail}
\end{equation}
Taking
\[
        t=C_D(n\kappa+p\log n)
\]
with $C_D$ large enough gives probability at most $n^{-D}$, uniformly in
$\Pi$.

To remove the projection, choose a countable orthonormal basis
$(e_m)_{m\ge1}$ of the separable space $\cH$, and let $\Pi_m$ be the
projection onto $\operatorname{span}\{e_1,\ldots,e_m\}$.  For
$S=\sum_{i=1}^n\phi_i\otimes\phi_i\succeq0$, the variational formula gives
\[
        \|\Pi_mS\Pi_m\|_{\op}\uparrow\|S\|_{\op}.
\]
Thus the events
$E_m=\{\|\Pi_mS\Pi_m\|_{\op}>t\}$ increase to
$\{\|S\|_{\op}>t\}$.  Since
\eqref{eq:hilbert-finite-rank-tail} holds uniformly in $m$,
continuity from below gives
\[
        \Pp\{\|S\|_{\op}>t\}
        =
        \lim_{m\to\infty}\Pp(E_m)
        \le n^{-D}.
\]
This completes the proof.
\end{proof}

\bibliography{ref}

\end{document}